\newtheorem{theorem}{Theorem}
\newtheorem{lemma}[theorem]{Lemma}
\newtheorem{definition}[theorem]{Definition}
\theoremstyle{remark}
\newtheorem{remark}{Remark}
\DeclareMathOperator*{\argmax}{arg\,max}
\newcommand{\Eb}{\mathbb{E}}
\newcommand{\Pb}{\mathbb{P}}
\newcommand{\oneb}{\mathds{1}}
\newcommand{\Bc}{\mathcal{B}}
\newcommand{\Ec}{\mathcal{E}}
\newcommand{\Hc}{\mathcal{H}}
\newcommand{\Ic}{\mathcal{I}}
\begin{document}

\title{Reward Teaching for Federated Multi-armed Bandits}

\author{Chengshuai~Shi, 
        Wei~Xiong, 
        Cong~Shen,~\IEEEmembership{Senior Member,~IEEE,}
        and Jing~Yang,~\IEEEmembership{Senior Member,~IEEE}
\thanks{A preliminary version \cite{shi2023isit} of this work was presented at the 2023 IEEE International Symposium on Information Theory (ISIT).}
\thanks{The work of CSs was supported in part by the US National Science Foundation (NSF) under awards 2029978, 2143559, 2002902, Virginia Commonwealth Cyber Initiative, and the Bloomberg Data Science Ph.D. Fellowship. The work of JY was supported in part by the US NSF under awards 2030026, 2114542, and 1956276.}
\thanks{{C. Shi and C. Shen are with the Charles L. Brown Department of Electrical and Computer Engineering, University of Virginia, Charlottesville, VA 22904, USA. W. Xiong is with the Department of Computer Science, University of Illinois Urbana-Champaign, Urbana, IL 61801. J. Yang is with the Department of Electrical Engineering, The Pennsylvania State University, University Park, PA 16802, USA. E-mail: \texttt{\{cs7ync,cong\}@virginia.edu, wx13@illinois.edu, yangjing@psu.edu.}}}
}

\maketitle

\begin{abstract}
Most of the existing federated multi-armed bandits (FMAB) designs are based on the presumption that clients will implement the specified design to collaborate with the server. In reality, however, it may not be possible to modify the clients' existing protocols. To address this challenge, this work focuses on clients who always maximize their individual cumulative rewards, and introduces a novel idea of ``reward teaching'', where the server guides the clients towards global optimality through implicit local reward adjustments. Under this framework, the server faces two tightly coupled tasks of bandit learning and target teaching, whose combination is non-trivial and challenging. A phased approach, called \emph{Teaching-After-Learning (TAL)}, is first designed to encourage and discourage clients' explorations separately. General performance analyses of TAL are established when the clients' strategies satisfy certain mild requirements. With novel technical approaches developed to analyze the warm-start behaviors of bandit algorithms, particularized guarantees of TAL with clients running UCB or $\varepsilon$-greedy strategies are then obtained. These results demonstrate that TAL achieves logarithmic regrets while only incurring logarithmic adjustment costs, which is order-optimal w.r.t. a natural lower bound. As a further extension, the \emph{Teaching-While-Learning (TWL)} algorithm is developed with the idea of successive arm elimination to break the non-adaptive phase separation in TAL. Rigorous analyses demonstrate that when facing clients with UCB1, TWL outperforms TAL in terms of the dependencies on sub-optimality gaps thanks to its adaptive design. Experimental results demonstrate the effectiveness and generality of the proposed algorithms.
\end{abstract}

\section{Introduction}\label{sec:intro}
Federated multi-armed bandits (FMAB) \cite{zhu2020federated,shi2021federated,reddy2022almost,huang2021federated,dubey2020differentially,li2021asynchronous} is a recently proposed framework that introduces the core principles of federated learning (FL) \cite{mcmahan2017communication,zhang2021fedpd} into multi-armed bandits (MAB) \cite{lattimore2020bandit,tekin2015distributed, liu2010distributed}. In particular, FMAB often considers a system of one global server and multiple \emph{heterogeneous} local clients with the goal of having the clients converge to the \emph{global} optimality. Since proposed by \cite{shi2021federated,zhu2020federated}, FMAB has found applications in cognitive radio, recommender systems, and beyond. 

One practical difficulty of realizing FMAB is that the existing designs have to implement new protocols for both the server and clients \cite{shi2021federated,reddy2022almost,li2022privacy}. Specifically, the server and clients must strictly follow the design collaboratively. In real-world applications, it is relatively easy to update the server's protocols for FMAB. However, given the typically large number of clients, it is often not realistic to assume that all of their protocols can be updated due to infrastructure cost and complicated agent behaviors. 

We first use the example of {\it cognitive radio systems}, a common motivating application for FMAB \cite{shi2021federated,karpov2022collaborative,demirel2022federated}, for a more concrete illustration. Specifically, the base station (i.e., the central server) wants to find a good channel to broadcast information to mobile devices in its coverage area. However, different mobile devices, which are modeled as clients in FMAB, typically have different local channel availabilities due to their different geographic locations. As aforementioned, previous designs (e.g., \cite{shi2021federated, zhu2020federated}) typically require mobile devices (i.e., clients) to follow the new FMAB protocols to collaborate with the base station. However, in reality, mobile devices are often configured to optimize their individual communication qualities following their built-in protocols. It is typically hard and expensive to update all mobile devices to follow the new FMAB designs, especially since such changes are often needed for both software and hardware. {Moreover, in the recommender system, another well-accepted application of FMAB \cite{shi2021federated,shi2021federated2,reddy2022almost,yang2021cooperative,chen2022federated}, the online sellers (i.e., clients) often need to select items (i.e., actions) for promotions on the shopping platform (i.e., the server). However, these sellers typically follow their own strategies to optimize profits and often ignore other social influences, such as environmental effects and health concerns (e.g., for cigarettes). It is thus unrealistic to assume that the selfish sellers would strictly perform the previously proposed FMAB designs.}

This work removes this limitation for FMAB by \emph{designing mechanisms only at the server side}. Especially, the clients can still follow the original routines to optimize their individual performances (as in the aforementioned examples of cognitive radio {and recommender systems}) and \emph{no change of their protocols is required}. Towards this end, a novel \textbf{``reward teaching''} approach is proposed: the server implicitly adjusts the local rewards perceived by the clients to influence their decision-making indirectly. We note that this idea is practical for the aforementioned {applications. For cognitive radio, it} is widely adopted in standard communication protocols for the base station to measure rewards (e.g., throughput) and send designed signals to mobile devices. {In recommender systems, the bonuses received by the sellers are commonly designed and distributed by the shopping platform.}

From a different perspective, this work can also be viewed as breaking the barrier of \emph{naive clients} in the previous FMAB designs, where the clients unconditionally follow the server's instructions. Such naive behaviors are often unrealistic, while a more reasonable scenario (as in this work) is that the clients take actions to optimize their local performances, which may not always align with the server's global objective.

Note that the seemingly simple idea of reward adjustment brings considerable challenges for the server strategy. In particular, the server needs to determine how to adjust rewards to handle the following two tasks \emph{simultaneously}: \textbf{bandit learning} and \textbf{target teaching}. On one hand, the server has to learn the \emph{unknown} global model through the clients' actions, which are based on local observations and may not align with the server's global objective. Thus, reward adjustments should be carefully placed to have the clients explore with respect to (w.r.t.) the global information (instead of their local ones). On the other hand, even if the global model is learned successfully, the corresponding learning history has a cumulative effect on guiding the clients towards the learned target, as all historical (adjusted) rewards are considered by the client in her future decision-making. As a result, while having been studied individually (e.g., learning in MAB and teaching in data-poisoning MAB), the combination of these two tasks is novel and challenging as they are \textit{tightly coupled}, which is detailed in Sec.~\ref{sec:warm}. 

The contributions of this work are summarized as follows.

$\bullet$ \textbf{A reward-teaching framework.} A novel idea of reward teaching is proposed to let the server design reward signals to guide clients with their own local strategies. This idea is practically appealing for FMAB systems as existing client protocols do not have to be modified -- only the reward signals they receive are adjusted. From another perspective, it also provides a method to handle non-naive FMAB clients.

$\bullet$ \textbf{Client strategy-agnostic algorithm designs.} A phased approach, coined \emph{``Teaching-After-Learning'' (TAL)}, is proposed. It addresses the challenge of teaching in an unknown environment by separately encouraging and discouraging explorations in two phases. A more adaptive \emph{``Teaching-While-Learning'' (TWL)} algorithm is then developed to break the strict two-phased structure via the idea of \emph{successive arm elimination}. It is worth noting that {both TAL and TWL are agnostic to the clients' local strategies}.

$\bullet$ \textbf{Client strategy-dependent analysis.} When the clients' local strategies satisfy some general properties, theoretical regret and cost guarantees of TAL are established. Particularizing these properties to UCB1 and $\varepsilon$-greedy \cite{auer2002finite} strategies at clients reveals that TAL can achieve a logarithmic regret while only incurring a logarithmic adjustment cost, which is order-optimal w.r.t. a natural lower bound. Regarding TWL, its advantage is rigorously established with clients running UCB1, where TWL achieves an improved performance dependency on the sub-optimality gaps than TAL due to its adaptive design. Moreover, one key ingredient to obtain these results is the novel technical approaches developed to analyze the \emph{warm-start} behaviors of bandit algorithms, which may be of independent merit. 

$\bullet$ \textbf{Experimental results.} The performance of the proposed designs is verified empirically. Especially, their effectiveness and generality are corroborated with different client strategies (i.e., UCB1, $\varepsilon$-greedy, Thompson sampling \cite{thompson1933likelihood}, and their mixtures), where the advantage of TWL is also evidenced.

\section{Related Works}\label{sec:related}
\textbf{FMAB.} FMAB can be viewed as a variant of the general problem of multi-agent bandits \cite{tekin2015distributed, liu2010distributed, gai2014distributed, shahrampour2017multi, shi2021multi}, where global rewards instead of local ones measure the performance. Recent studies have investigated its robustness \cite{mitra2021exploiting}, personalization \cite{shi2021federated2} and privacy protection \cite{li2022privacy}, and extended the studies to contextual bandits \cite{dubey2020differentially, li2021asynchronous, huang2021federated}. However, almost all of the previous studies assume the clients follow updated local protocols, which either require clients to directly follow the server's instructions or have them work collaboratively. Instead, the designs in this work are purely on the server's side and no change is needed on the client side, which broadens the applicability of FMAB.

\textbf{Reward adjustments in MAB and RL.} 
One line of research on reward adjustments focuses on the malicious poisoning attacks \cite{ma2018data, garcelon2020adversarial,yang2020adversarial}.
The most relevant works are under the ``strong attack'' model \cite{jun2018adversarial,liu2019data,rangi2021secure, zuo2020near}, where the attacker perturbs the rewards \emph{after} observing the player's actions and tricks her into converging to a pre-selected sub-optimal arm (see Sec.~\ref{sec:warm}). Other forms of attacks are also studied \cite{feng2020intrinsic,liu2020action,kapoor2019corruption,guan2020robust,shi2021jsait}, including the ``weak attack'' model \cite{lykouris2018stochastic,gupta2019better,liu2021cooperative} where attacks are performed \emph{before} observing actions. 
Note that the attackers in all these works have no desire to explore the environment, while the reward-teaching server has to actively learn the global model.

Another line is more \emph{conceptually} related to this work: performing adjustments for positive purposes, such as reward shaping \cite{ng1999policy,laud2004theory,csimcsek2006intrinsic}. Especially, in reward shaping, the goal is to accelerate learning using a newly designed set of rewards; thus the optimal policy is kept the same. However, for reward teaching, the goal is to use modified rewards to guide clients to a different optimal policy (i.e., the optimal global arm). The recent work by \cite{zhang2021sample} shares a similar idea of ``teaching'' the player via certain adjustments in reinforcement learning (RL); however, the target is still pre-selected. While differences exist between these previous attempts and this work, they all demonstrate the potential of ``teaching'' in MAB and RL.

In addition, the reward-teaching idea shares similarities with the design of implicit rewards in hierarchical RL \cite{ghavamzadeh2006hierarchical, pateria2021hierarchical}. Thus, the designs in this work may contribute to improving the theoretical understanding of hierarchical RL, which is currently lacking. In particular, our work may be useful in demonstrating that client behaviors can be guided via a small number of modifications on their original rewards.

\textbf{Incentivized explorations in MAB and RL.} Another related research domain is the incentivized explorations in MAB and RL. Especially, a principal leverages either strategically designed signals \cite{mansour2015bayesian,mansour2016bayesian,slivkins2021exploration} or additional compensations \cite{frazier2014incentivizing, han2015incentivizing,wang2018multi} 
to motivate the agent to perform certain actions. In particular, \cite{shi2021almost} leverages additional bonuses to motivate non-naive FMAB clients to perform certain explorations. However, comparing incentivized explorations with this work, we note that major differences exist: the incentivizing principal's signals or compensations are \emph{explicit} to the agent, who then takes corresponding actions; however, the reward adjustment used by the server in this work is \emph{implicit} to the clients, who autonomously perform their own local strategies.

\begin{table}[thb]
    \centering
    \caption{{A summary of key notations used in this work.}}
    \label{tab:my_label}
    {
    \begin{tabularx}{\linewidth}{cX}
    \hline
    \textbf{Notations} & \textbf{Explanations}\\
    \hline
      $M$   & The number of clients and local models \\
       $K$  & The number of available arms\\
       $X_{k,m}(t)$ & The reward of arm $k$ on local model $m$ at time $t$ \\
       $Y_k(t)$ & The reward of arm $k$ on the global model at time $t$\\
        $\mu_{k,m}$ & The expected reward of arm $k$ on local model $m$ \\
       $\nu_k$ & Th expected reward of arm $k$ on the global model\\
       $k_{*,m}$ & The optimal arm for local model $m$ \\
       $k_{\dagger}$ & The optimal arm for the global model \\
       $\mu_{*,m}$ & The expected reward of the locally optimal arm $k_{*,m}$ on local model $m$ \\
       $\mu_{\dagger, m}$ & The expected reward of the globally optimal arm $k_{\dagger}$ on local model $m$\\
       $\nu_{\dagger}$ & The expected reward of the globally optimal arm $k_{\dagger}$ on the global model \\
       $X'_{\pi_m(t),m}(t)$ & The modified observation  for client $m$'s action $\pi_m(t)$ at time $t$\\
       $\sigma_m(t)$ & The adjustment amount performed on client $m$'s observation at time $t$\\
       $R_m(T)$ & The cumulative global regret caused by client $m$\\
       $R_F(T)$ & The cumulative global regret caused by all clients\\
       $C_m(T)$ & The cumulative cost for adjusting client $m$'s observations\\
       $C_F(T)$ & The cumulative cost for adjusting all clients' observations\\
       $\Delta_k$ & The suboptimality gap of arm $k$, i.e., $\nu_{\dagger} - \nu_k, \forall k\neq k_{\dagger}$\\
       $\Delta_{\min}$ & The minimal suboptimality gap, i.e., $\min_{k\neq k_{\dagger}} \Delta_k$\\
       $\Delta_{\max}$ & The maximal suboptimality gap, i.e., $\max_{k\neq k_{\dagger}} \Delta_k$\\
    \hline
    \end{tabularx}}
\end{table}

\section{Problem Formulation}\label{sec:problem}
\subsection{Federated Multi-armed Bandits}\label{subsec:fmab_formulation}
\textbf{Local and global models.} Following \cite{shi2021federated, reddy2022almost,zhu2020federated}, a standard FMAB system of $M$ local models and one global model is considered. With the same set of $K$ arms shared by all the models, at each time step $t \in [T]$, each arm $k\in[K]$ is associated with a local reward $X_{k,m}(t)\in[0,1]$ for each local model $m\in [M]$ and a global reward $Y_{k}(t)\in[0,1]$ for the global model. These rewards of each arm $k$ are all independently sampled with unknown expectations denoted as $\mu_{k,m} := \Eb[X_{k,m}(t)], \forall m\in [M]$ and $\nu_{k} := \Eb[Y_{k}(t)]$. In general, the local arm utilities are model-dependent, i.e., $\mu_{k,m}\neq \mu_{k,n}$ for all $n\neq m$. The optimal local arm for each local model $m$ is denoted as $k_{*,m}:=\argmax_{k\in[K]}\mu_{k,m}$ with $\mu_{*,m} :=\mu_{k_{*,m},m}$, and the optimal global arm as $k_{\dagger}:=\argmax_{k\in[K]}\nu_{k}$ with $\nu_{\dagger}:=\nu_{k_{\dagger}}$.

As in \cite{zhu2020federated,shi2021federated, reddy2022almost}, we consider the setting where each arm $k$'s mean reward on the global model is the average of its mean rewards on the local models\footnote{Other global-local model relationships can also be considered, e.g., the weighted sum in \cite{shi2021federated2}. To better convey the key idea of reward teaching, the exact average, which is simple while representative, is adopted in this work.}, i.e.,
\begin{align*}
    \nu_{k}:= \Eb\left[Y_{k}(t)\right] =  \frac{1}{M}\sum\nolimits_{m\in [M]} \mu_{k,m}.
\end{align*}
As a result, a global-local misalignment may occur as the global optimality may not align with each local optimality, i.e., $k_\dagger\neq k_{*,m}$ for all or part of $m\in [M]$.

\textbf{Clients and server.} 
In FMAB, there exist $M$ clients and one server. At time $t$, each client $m\in [M]$ selects an arm $\pi_m(t)$ (referred to as ``local actions'') and then observes its local reward $X_{\pi_m(t),m}(t)$ on local model $m$. Additionally, each client $m$'s action $\pi_m(t)$ would also generate a reward $Y_{\pi_m(t)}(t)$ from the global model. It would be helpful to interpret the local and global rewards as the individual-level and system-level impact of the clients' actions.

The server in FMAB does not perform any arm-pulling action herself. Instead, she focuses on guiding the local actions to optimize their incurred \emph{global rewards}. However, the global rewards are not directly observable by the server and the clients, which is often a result of practical measurement limitations \cite{shi2021federated}. Instead, the server is assumed to be able to observe the local actions and the corresponding local rewards, i.e., $\{\pi_m(t), X_{\pi_m(t),m}(t):m\in[M]\}$.

To better optimize global performance, previous FMAB studies require that all clients work collaboratively following the updated local protocols. On the contrary, this work considers that clients are fully committed to interacting with their \emph{own} local models (i.e., client $m$ with local model $m$). Then, the clients would naturally adopt their own MAB policies to maximize their local rewards. This setting is practically appealing as in many applications (e.g., the examples of cognitive radio and {recommender systems} in Sec.~\ref{sec:intro}), the local clients are inherently configured to perform local policies to optimize their local performance (e.g., IoT devices maximizing their own data rate and {selfish sellers optimizing their profits}). Specifically, at time $t$, each client $m$ \emph{individually} makes an arm-pulling decision $\pi_m(t)$ based on her own history observed on local model $m$, i.e., $H_m(t-1):= \{\pi_m(\tau), X_{\pi_m(\tau),m}(\tau): 1\leq \tau \leq t-1 \}$.

\subsection{Reward Teaching} \label{sec:rewardteach}
As mentioned, each client $m$ would select suitable actions w.r.t. her own local model, which however may not necessarily meet the server's preference due to the global-local model misalignment. To address this challenge, the following reward-teaching mechanism is introduced for the server to indirectly influence the clients' action selections.

Specifically, after observing $\{X_{\pi_m(t),m}(t):m\in[M]\}$, the server can adjust each client $m$'s local reward $X_{\pi_m(t),m}(t)$ to $X'_{\pi_m(t),m}(t)$ by an amount of $\sigma_m(t)$, i.e.,
\begin{align*}
    X'_{\pi_m(t),m}(t) := X_{\pi_m(t),m}(t)+\sigma_m(t),
\end{align*}
which is then revealed to the client (instead of $X_{\pi_m(t),m}(t)$). Note that one implicit constraint is that the adjusted rewards must still be in $[0,1]$, which is the system limitation.\footnote{In fact, if there is no restriction on the adjustment range, the server is more powerful and the algorithm design is thus easier.} If this constraint is satisfied, the clients are assumed to be unable to detect the reward adjustments by any means. The adjusted rewards lead to an adjusted history of $H'_m(t):=\{\pi_m(\tau),X'_{\pi_m(\tau),m}(\tau):1\leq \tau\leq t\}$ for client $m$, which ideally can shape her future actions in favor of the server. 

It is worth emphasizing that such reward adjustments are practical for FMAB applications. In the cognitive radio example, it is common for the base station to first measure the communication quality (via pilot signals) and then send \emph{designed} feedback to the devices; this is the case in both cellular and WiFi. Adjusting rewards can be achieved via either sending modified feedback signals or modifying the allocated resources (e.g., retransmission bandwidth \cite{Shen2009tc}) to boost or reduce client performance, which is standard in modern communication protocols. The devices, on the other hand, are oblivious to such adjustments thanks to their built-in protocols. {In the application of recommender systems, the shopping platform can implicitly leverage extra or decreased bonuses to guide the decisions of the selfish sellers, e.g., to promote more environmentally friendly and healthier items.}

\begin{figure}[htb]
	\centering
	\includegraphics[width=0.4\textwidth]{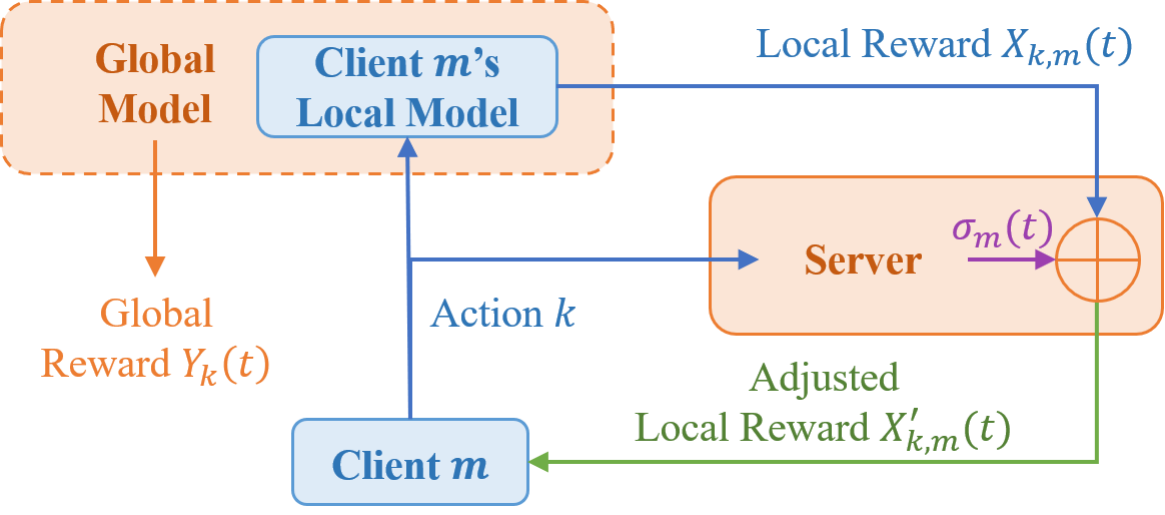}
	\caption{The reward-teaching process with client $m$ (among the overall $M$ clients) and action $\pi_m(t)=k$.}
	\label{fig:model}
\end{figure}

The reward-teaching process is summarized as the following steps, which is also illustrated in Fig.~\ref{fig:model}:

 \setlist{nolistsep}
    \begin{itemize}[noitemsep,leftmargin=*]
        \item Each client $m$ chooses $\pi_m(t)$ using history $H'_m(t-1)$;
        \item The server observes $\{\pi_m(t), X_{\pi_m(t),m}(t):m\in [M]\}$;
        \item The server adjusts $X_{\pi_m(t),m}(t)$ into $X'_{\pi_m(t),m}(t)$ by the amount of $\sigma_m(t)$ for each client $m\in [M]$;
        \item Each client $m$ observes the adjusted $X'_{\pi_m(t),m}(t)$.
    \end{itemize}

\subsection{Learning Objectives}
Following previous FMAB studies, the global view by the server is the focus of our design, which leads to a two-fold objective. First, the server's main goal is to maximize the cumulative \textbf{global} rewards and can be characterized by minimizing the \emph{global regret}, defined as
\begin{align*}
    R_F(T) := \sum\nolimits_{m\in [M]}R_{m}(T),
\end{align*}
where $R_m(T)$ is the regret incurred by client $m$'s actions w.r.t. the global model (instead of her local model) defined as
\begin{align*}
    R_{m}(T):= T\nu_{\dagger} -  \Eb\left[\sum\nolimits_{t\in [T]}Y_{\pi_m(t)}(t)\right].
\end{align*}
The expectation is w.r.t. both the reward generations and the client-system interactions.

Second, the server's adjustments on local rewards are often costly. {For example, in the aforementioned application of cognitive radio, the base station naturally needs to make additional efforts when modifying the originally allocated resources, e.g., infrastructure costs for deviating from the default transmission bandwidth. This work, thus, further introduces the objective of \emph{cumulative cost},} defined as
\begin{align*}
    C_F(T): = \sum\nolimits_{m\in [M]}C_{m}(T),
\end{align*}
where $C_{m}(T)$ denotes the overall cost spent on client $m$ and is further defined as
\begin{align*}
    C_{m}(T): = \Eb\left[\sum\nolimits_{t\in [T]}|\sigma_m(t)|\right].
\end{align*}
The subscripts $F$ in $R_F(T)$ and $C_F(T)$ refer to the global model (i.e., the federation).

Intuitively, there exists a trade-off between these two objectives: with more adjustments on rewards, i.e., larger $C_F(T)$, the server can have a bigger impact on the clients' decisions, which ideally would decrease the regret $R_F(T)$. It is thus important to strike a balance between these two objectives, which is the focus of the remainder of this paper.

\subsection{Client Strategies}
\label{sec:clientstra}
To facilitate discussion, we denote client $m$'s local bandit policy as $\Pi_m$. Note that while performing their own policies, the clients are assumed not to be strategically against the server, which is reasonable for most of the real-world applications of FMAB, e.g., autonomous but not fully flexible mobile devices in cognitive radio \cite{shi2021federated}. In addition, we denote  $N_{k,m}(t)$ as the number of pulls by client $m$ on arm $k$ by time $t$, and $N^{-1}_{k,m}(\tau)$ refers to the time step $t$ such that $N_{k,m}(t) = \tau$.

The proposed designs are general and agnostic to clients' strategies, which will be evident in Sections~\ref{sec:tal} and \ref{sec:twl}. For the theoretical analysis, general performance bounds are first provided without specifying the clients' strategies. This is accomplished by identifying the properties of client strategies that lead to the desired theoretical results. More specifically, client-strategy-dependent bounds are then derived (i.e., clients with UCB1 or $\varepsilon$-greedy).  Finally, experiments with varying (and even mixing) strategies for clients are reported.

\section{Two Coupled Tasks and Design Objectives}\label{sec:warm}
In this section, two tightly coupled tasks faced by the reward-teaching server, bandit learning, and target teaching, are elaborated. A system design objective is also proposed.

\textbf{Bandit learning.} One major distinction between learning in FMAB and in classical MAB \cite{bubeck2012regret,lattimore2020bandit} is the server can only gather information through clients' local actions. 
Previous FMAB studies tackled this challenge by implementing new protocols for clients to naively follow \cite{shi2021federated,zhu2020federated, li2022privacy, reddy2022almost}. In contrast, in this work, such information collection can only be indirectly guided via carefully designed rewards. 

\textbf{Target teaching.} To understand teaching, a special case is first considered where the optimal arm $k_{\dagger}$ is known by the server. Then, the goal is to assign adjustments to have the clients pull the \emph{pre-specified} arm $k_{\dagger}$ as much as possible, which is mathematically the same as the \emph{data-poisoning MAB} problem \cite{jun2018adversarial,liu2019data, garcelon2020adversarial, wang2022linear}, where adjustments are phrased as ``attacks''. 
In such scenarios, the server can achieve $R_{m}(T) = O(\log(T))$ and $C_{m}(T) = O(\log(T))$ for each $m\in [M]$ by adjusting rewards from all arms except arm $k_{\dagger}$ to $0$'s \cite{rangi2021secure}.
The underlying philosophy is to ``discourage explorations'' with the adjusted reward $0$'s.

\textbf{Combination leads to a tight coupling.} While both tasks have been separately investigated (to some extent), the reward-teaching server faces a combination of them. On one hand, even if the server can perfectly learn the global model, she still needs to teach it to the clients. On the other hand, to teach correctly, sufficient information must be learned by the server. 
The resulted \emph{tight coupling} is the main challenge of the design. Specifically, the learning attempt has a cumulative effect on teaching, which in return relies on the learned target. Technically, the main resultant difficulty is the analysis of the ``warm-start'' behaviors of bandit algorithms, which is elaborated in Sec.~\ref{sec:tal_analysis}.

\textbf{Design objective.} For the cost, with a known target arm, \cite{rangi2021secure, zuo2020near} prove lower bounds that \textit{with UCB1 and $\varepsilon$-greedy clients (defined in Sec.~\ref{sec:tal_analysis}), it is necessary to spend a cost $C_{m}(T) = \Omega(\log(T))$ to obtain a regret $R_{m}(T)=O(\log(T))$}. Thus, with $M$ \emph{independent} FMAB clients, a cost of $C_F(T) = \Omega(M\log(T))$ is required to obtain a regret of $R_F(T) = O(M\log(T))$ while knowing arm $k_{\dagger}$, which naturally holds for the more stringent case of not knowing the target $k_{\dagger}$.
For the regret, UCB1 and $\varepsilon$-greedy clients can be shown to be conservative \cite{rangi2021secure} as each client $m$ would pull each arm at least $\Omega(\log(T))$ times regardless of the rewards; thus $R_{m}(T)=\Omega(\log(T))$ and $R_F(T)=\Omega(M\log(T))$.

With these results, the following system design goal is established, which is order-wise tight w.r.t. both criteria:\\

\noindent\fbox{%
	\parbox{\linewidth}{%
		\begin{center}
			\emph{\textbf{Goal:} Design algorithms to achieve both $R_F(T)=O(M\log(T))$ and $C_F(T)=O(M\log(T))$.}
		\end{center}
	}%
}
\\

To verify that this goal is non-trivial, two intuitive baseline policies, NG and NA, are discussed as follows, whose limitations are further illustrated experimentally in Sec.~\ref{sec:exp}. 

\noindent $\bullet$ \textbf{``Naively-Guess'' (NG).} The server may randomly initialize one arm $k'$ as the target to adopt the aforementioned approach from \cite{rangi2021secure}. However, the regret would be $R^{\text{NG}}_F(T)=\Omega(MT)$ if $k'\neq k_{\dagger}$, although achieving $C^{\text{NG}}_F(T)=O(M\log(T))$.

\noindent $\bullet$ \textbf{``Naively-Align'' (NA).} Another natural idea is to have the server align $X'_{\pi_m(t),m}(t)$ with $Y_{\pi_m(t)}(t)$ via $\sigma_m(t) = Y_{\pi_m(t)}(t)-X_{\pi_m(t),m}(t)$.\footnote{$Y_{\pi_m(t)}(t)$ is assumed to be observable here for the baseline, which is not the case in our design.} While achieving $R^{\text{NA}}_F(T)=O(M\log(T))$, adjustments would be needed nearly all the time steps, i.e., $C^{\text{NA}}_F(T)=\Omega(MT)$. 

{
\begin{remark}\label{rmk:lower}
    A refined lower bound beyond $\Omega(M\log(T))$ can be instructive, especially for determining the optimal dependencies on parameters other than $M$ and $T$. However, such lower bounds are also challenging, even with a known target \cite{rangi2021secure,zuo2020near}; thus it is left as an open question for future works.
\end{remark}
}

\section{TAL: Algorithm Design}\label{sec:tal}
To address the coupled tasks of bandit learning and target teaching, one idea is to first learn the server's target and then teach the clients to converge to it, which leads to the proposed ``Teaching-After-Learning'' (TAL) algorithm (presented in Alg.~\ref{alg:tal}).
Specifically, TAL starts with the learning phase where the goal is to identify the optimal global arm. Then, in the teaching phase, the server guides the clients toward the learned global optimality. Note that although there is a separation of phases, the teaching phase must handle clients that accumulate observations from the learning phase (i.e., ``warm-start'' clients), whose effect will be more evident in the analysis.

In the learning phase, TAL uniformly adjusts each client $m$'s observed rewards to $\gamma_1$, i.e., $\sigma_m(t) \gets \gamma_1-X_{\pi_m(t),m}(t)$, where $\gamma_1\in [0,1]$ is a to-be-specified input parameter.
Intuitively, this uniform reward adjustment encourages sufficient (or ideally, uniform) explorations among all arms, since their rewards are all at the same value $\gamma_1$. If clients are indeed sufficiently exploring, the server can collect enough information on each arm to identify her optimal arm $k_{\dagger}$. 

This identification is designed to proceed in epochs indexed by counter $\psi$ to ensure statistical independence. If at time $t$, each client $m$ has pulled each arm $k$ at least $F(\psi): = \sum_{\tau\in [\psi]}f(\tau)$ times, where $f(\psi) := \frac{1}{M}\cdot 2^{2\psi+3}\log(2KT^2)$, the server updates upper and lower confidence bounds (UCB and LCB) for each arm $k\in [K]$ using its rewards collected between its $F(\psi-1)+1$ and $F(\psi)$ pulls (i.e., overall $f(\psi)$ pulls) by each client as follows: 
\begin{equation}\label{eqn:f_ulcb}
        \text{UCB}_{k}(\psi), \text{LCB}_{k}(\psi) := \frac{1}{M}\sum\nolimits_{m\in [M]}\hat\mu_{k,m}(\psi) \pm \text{CB}(\psi), 
\end{equation}
where 
\begin{align*}
    \hat\mu_{k,m}(\psi) &:=\sum\nolimits_{\tau = F(\psi-1)+1}^{F(\psi)}X_{k,m}(N^{-1}_{k,m}(\tau))/f(\psi),\\
    \text{CB}(\psi) &:=\sqrt{\log(2KT^2)/(2Mf(\psi))} = 2^{-\psi-2}.
\end{align*} 
Note that with the estimation of $\mu_{k,m}$ from local samples, the first term in Eqn.~\eqref{eqn:f_ulcb} is essentially an estimation $\hat\nu_{k}(\psi)$ of $\nu_{k}$. The confidence bound $\text{CB}(\psi)$ is specifically designed such that  $\text{LCB}(\psi)\leq \nu_{k}\leq \text{UCB}(\psi)$ holds for each arm $k$ and each epoch $\psi$ in the learning phase with high probability. 

The learning phase ends in epoch $\psi$ if the confidence interval of one arm $k_{\ddagger}$ dominates that of all other arms, i.e., $\text{LCB}_{k_{\ddagger}}(\psi)\geq \text{UCB}_{k}(\psi), \forall k\neq k_{\ddagger}$, which is recognized as the optimal arm. Otherwise, a new epoch $\psi+1$ begins. With the designed confidence bound, this identification is guaranteed to be correct with high probability. 

With the identified arm $k_{\ddagger}$, the server utilizes the following adjustments to guide the clients in the teaching phase:
\begin{equation}\label{eqn:TAL_adjustment}
	\sigma_m(t) \gets
	\begin{cases}
		\gamma_2 -X_{\pi_m(t),m}(t) & \text{if $\pi_m(t) \neq k_{\ddagger}$}\\
		0 & \text{if $\pi_m(t) = k_{\ddagger}$}
	\end{cases},
\end{equation}
where $\gamma_2$ is another to-be-specified input parameter and typically should be small. In other words, if the client does not pull arm $k_{\ddagger}$, her reward is adjusted to a small value $\gamma_2$ to discourage explorations; otherwise, the original reward of arm $k_{\ddagger}$ is kept unchanged to save adjustments.

From Alg.~\ref{alg:tal}, it can be observed that TAL is a pure server protocol and agnostic to the clients' local strategies -- the only interaction with the clients is the adjusted rewards.

\begin{algorithm}[thb]
	\small
	\caption{TAL}
	\label{alg:tal}
	\begin{algorithmic}[1]
        \Require Parameter  $\gamma_1, \gamma_2\in [0,1]$; Time Horizon $T$
		\State Initialize: $F\gets 1$ (i.e., the learning phase); $\psi\gets 1$; $k_{\ddagger} \gets 0$
		\For{$t\leq T$} 
		\State Observe $\{\pi_m(t), X_{\pi_m(t),m}(t):m\in[M]\}$
		\If{$F= 1$ \& $N_{k,m}(t)\geq F(\psi), \forall m\in [M],k\in [K]$} 
			\State Update $\{\text{UCB}_{k}(\psi), \text{LCB}_{k}(\psi):k\in [K]\}$ as Eqn.~\eqref{eqn:f_ulcb}
			\If{$\exists j \in [K], \text{LCB}_{j}(\psi)\geq \text{UCB}_{k}(\psi), \forall k\neq j$}
			\State Set $k_\ddagger\gets j$; $F\gets  2$ (i.e., the teaching phase)
            \Else\ Set $\psi \gets \psi +1$
			\EndIf
		\EndIf 
        \If{$F=1$}\ $\sigma_m(t) \gets \gamma_1 -X_{\pi_m(t),m}(t), \forall m\in [M]$
		\ElsIf{$F=2$}  Set $\sigma_m(t)$ as Eqn.~\eqref{eqn:TAL_adjustment}, $\forall m\in [M]$
		\EndIf
        \State Set $X'_{\pi_m(t),m}(t) \gets X_{\pi_m(t),m}(t) + \sigma_m(t), \forall m\in [M]$
		\State Reveal $X'_{\pi_m(t),m}(t)$ to each client $m\in [M]$
		\EndFor
	\end{algorithmic}
\end{algorithm}

\section{TAL: Theoretical Analysis}\label{sec:tal_analysis}
{In this section, we first provide a general analysis of TAL (Theorem~\ref{thm:tal_general}) under some abstract characterizations of clients' strategies (i.e., sufficient-exploring and warm-starting in Definitions~\ref{def:uniform_exploration} and \ref{def:warm_start_pulls}, respectively). Then, we consider clients with UCB1 or $\varepsilon$-greedy in the following two subsections, respectively. In particular, the adopted abstract characterizations are particularized (Lemmas~\ref{lem:tal_learning_UCB}, \ref{lem:tal_teaching_UCB}, \ref{lem:tal_learning_epsilon} and \ref{lem:tal_teaching_epsilon}), and then specific performance guarantees are obtained (Theorems~\ref{thm:tal_UCB} and \ref{thm:tal_epsilon}), which show that TAL achieves the design goals in Section~\ref{sec:warm} with these clients.} Detailed proofs are deferred to Appendix~\ref{app:tal}.

Some useful notations are introduced as follows: $\Delta_{k}:=\nu_{\dagger}-\nu_{k},\forall k\neq k_{\dagger}$, $\Delta_{\min}=\Delta_{k_{\dagger}}:=\min_{k\neq k_{\dagger}}\Delta_{k}$, $\Delta_{\max} := \max_{k\in[K]}\Delta_{k}$, and $\mu_{\dagger,m}:=\mu_{k_{\dagger},m}$. Moreover, $\delta_{k,m}(\gamma) := \Eb[|\gamma - X_{k,m}(t)|]$ and $\psi_{\max}: = \left\lceil\log_2(1/\Delta_{\min})\right\rceil$. Also, without loss of generality, it is assumed that $K, M \ll T$.

We first define sufficiently exploring algorithms for the learning phase in TAL, which states that a bandit algorithm would sufficiently explore when facing uniform rewards.

\begin{definition}[Sufficiently Exploring Algorithms]\label{def:uniform_exploration}
    Consider a $K$-armed bandit environment where rewards from arms in a set $\Ic\subseteq [K]$ are always a fixed constant $\gamma\in [0,1]$. In this environment, a bandit algorithm $\Pi$ is said to be $(\Ic, \gamma, \underline{\eta}, \overline{\eta})$-sufficiently exploring if it would pull each arm in the set $\Ic$ at least $\underline{\eta}(\tau; \gamma, \Ic)$ and at most $\overline{\eta}(\tau; \gamma, \Ic)$ times when total $\tau$ pulls have been performed on set $\Ic$.
\end{definition}
If local strategies are sufficiently exploring as in Definition~\ref{def:uniform_exploration}, enough information can be collected in the learning phase to identify the global optimal arm, as stated in the following lemma, where $\underline{\eta}^{-1}(N; \gamma, [K])$ denotes the value $\tau$ such that $\underline{\eta}(\tau; \gamma, [K]) = N$.
\begin{lemma}[Learning Phase in TAL]\label{lem:tal_learning_general}
     If $\Pi_m$ is $([K], \gamma_{1}, \underline{\eta}_{m}, \overline{\eta}_m)$-sufficiently exploring for all $m\in [M]$, 
    with probability (w.p.) at least $1- 1/T$, the learning phase ends with $k_\ddagger = k_\dagger$ by time step $T_1$, and the regret and cost in the learning phase of TAL are bounded, respectively, as
    \begin{align*}
        R^{\textup{TAL}}_{F,1}(T)\leq &\sum\nolimits_{m\in [M]}\sum\nolimits_{k\neq k_\dagger} \Delta_k \cdot \overline{\eta}_{m}\left(T_1; \gamma_1, [K]\right);\\
        C^{\textup{TAL}}_{F,1}(T)\leq&\sum\nolimits_{m\in [M]}\sum\nolimits_{k\in [K]} \delta_{k,m}(\gamma_1) \cdot \overline{\eta}_{m}\left(T_1; \gamma_1, [K]\right),
    \end{align*}
    where $T_1 \leq \max_{m\in [M]}\{\underline{\eta}_{m}^{-1}\left(F(\psi_{\max}) ; \gamma_1, [K]\right)\}$.
\end{lemma}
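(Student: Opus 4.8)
The plan is to work on a ``clean event'' $\Ec$ on which every confidence interval built in the learning phase is valid, i.e.\ $\text{LCB}_k(\psi)\le\nu_k\le\text{UCB}_k(\psi)$ for every arm $k\in[K]$ and every processed epoch $\psi$, and then to show that all three claims hold deterministically on $\Ec$. The starting observation is that throughout the learning phase each client receives the constant adjusted reward $\gamma_1$, so her entire action trajectory is statistically independent of the true reward realizations $\{X_{k,m}(t)\}$; hence for each $(k,m)$ the samples $X_{k,m}(N^{-1}_{k,m}(\tau))$, $\tau=F(\psi-1)+1,\dots,F(\psi)$, that form $\hat\mu_{k,m}(\psi)$ are $f(\psi)$ genuine i.i.d.\ draws from arm $(k,m)$'s distribution, so $\hat\nu_k(\psi):=\frac1M\sum_m\hat\mu_{k,m}(\psi)$ is an average of $Mf(\psi)$ independent $[0,1]$ variables with mean $\nu_k$. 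With $\text{CB}(\psi)=\sqrt{\log(2KT^2)/(2Mf(\psi))}$, Hoeffding's inequality gives $\Pb[\,|\hat\nu_k(\psi)-\nu_k|>\text{CB}(\psi)\,]\le 1/(KT^2)$; since at most $T$ epochs are ever processed (one per time step), a union bound over $(k,\psi)$ yields $\Pb[\Ec]\ge 1-1/T$.

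On $\Ec$ I would first argue the phase stops by epoch $\psi_{\max}=\lceil\log_2(1/\Delta_{\min})\rceil$ with $k_\ddagger=k_\dagger$. Using $\text{CB}(\psi)=2^{-\psi-2}$ and $4\,\text{CB}(\psi_{\max})=2^{-\psi_{\max}}\le\Delta_{\min}\le\Delta_k$ for all $k\ne k_\dagger$, at epoch $\psi_{\max}$ we get $\text{LCB}_{k_\dagger}(\psi_{\max})\ge\nu_\dagger-2\text{CB}(\psi_{\max})\ge\nu_k+2\text{CB}(\psi_{\max})\ge\text{UCB}_k(\psi_{\max})$, so the stopping test fires with $j=k_\dagger$ if it has not already fired; moreover on $\Ec$ it can never fire with some $j\ne k_\dagger$, for that would force $\text{LCB}_j\ge\text{UCB}_{k_\dagger}\ge\nu_\dagger>\nu_j\ge\text{LCB}_j$, a contradiction (assuming a unique global optimum). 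Next, the UCB/LCB update for epoch $\psi$ is triggered as soon as every client has pulled every arm at least $F(\psi)$ times; because client $m$ is $([K],\gamma_1,\underline\eta_m,\overline\eta_m)$-sufficiently exploring and performs one pull on $[K]$ per time step, this holds for client $m$ by time $\underline\eta_m^{-1}(F(\psi);\gamma_1,[K])$, hence for all clients by $\max_m\underline\eta_m^{-1}(F(\psi);\gamma_1,[K])$. Taking $\psi=\psi_{\max}$ and combining with the preceding display gives $T_1\le\max_m\underline\eta_m^{-1}(F(\psi_{\max});\gamma_1,[K])$.

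The regret and cost bounds then reduce to pull counting on $\Ec$. For client $m$, the expected per-step global regret of pulling arm $k$ is $\Delta_k$, and by the ``at most'' side of sufficient exploration with $\tau=T_1$ total pulls on $[K]$ we have $N_{k,m}(T_1)\le\overline\eta_m(T_1;\gamma_1,[K])$; summing $\Delta_k N_{k,m}(T_1)$ over $k\ne k_\dagger$ and over $m\in[M]$ gives the claimed bound on $R^{\textup{TAL}}_{F,1}(T)$. Similarly a single pull of arm $k$ by client $m$ at time $t$ costs $|\sigma_m(t)|=|\gamma_1-X_{k,m}(t)|$, whose expectation conditional on the reward-independent pull pattern is exactly $\delta_{k,m}(\gamma_1)$; multiplying by the pull-count bound $\overline\eta_m(T_1;\gamma_1,[K])$ and summing over $k\in[K]$ and $m\in[M]$ gives the claimed bound on $C^{\textup{TAL}}_{F,1}(T)$. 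I expect the main obstacle to be the probabilistic bookkeeping rather than any hard inequality: one must (i) justify the i.i.d.\ structure of the collected samples despite adaptive client policies, which is cleanly handled by the constant-feedback observation above, and (ii) deal with the stopping time $T_1$ being correlated with the rewards when it appears inside $\overline\eta_m(\cdot)$ and inside the cost expectation, which is resolved by monotonicity of $\overline\eta_m,\underline\eta_m$ together with the deterministic bound on $T_1$, after which a non-random argument may be substituted throughout. The residual $\le 1/T$-probability event off $\Ec$ only contributes an $O(M)$ additive term and is absorbed when this lemma feeds the full regret/cost theorems.
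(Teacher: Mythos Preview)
Your proposal is correct and follows essentially the same route as the paper: establish the concentration event via Hoeffding plus a union bound over $(k,\psi)$, show on that event the stopping rule fires at or before epoch $\psi_{\max}$ with $k_\ddagger=k_\dagger$ via the chain $\text{LCB}_{k_\dagger}\ge\nu_\dagger-2\text{CB}\ge\nu_k+2\text{CB}\ge\text{UCB}_k$, invert $\underline\eta_m$ to bound $T_1$, and read off the regret and cost from $N_{k,m}(T_1)\le\overline\eta_m(T_1;\gamma_1,[K])$. Your explicit justification that constant feedback decouples the pull pattern from the true rewards (hence the $Mf(\psi)$ samples are genuinely i.i.d.) and your handling of the $T_1$-correlation via the deterministic bound are more carefully spelled out than in the paper, but the argument is the same.
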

{Note that the time step $T_1$ bounded via the sufficiently exploring lower bound (i.e., $\underline{\eta}$) ensures sufficient information collection, while the corresponding upper bound (i.e., $\overline{\eta}$) guarantees performance, i.e., regret and adjustment cost.} 

Then, for the teaching phase, since the cumulative observations from the learning phase are inherited to the client strategies, we can view the clients as ``warm-started''. The following notion of warm-start pulls is introduced, which measures the warm-start behavior of an algorithm.

\begin{definition}[Warm-start Pulls]\label{def:warm_start_pulls}
    In a $K$-armed bandit environment $\Bc$, if a reward sequence $H = \{H_k: k\in [K]\}$ is input to a bandit algorithm $\Pi$, where $H_k$ is a reward sequence for arm $k$, warm-start pulls on arm $k$ is defined as $\iota_k(T; H, \Bc, \Pi) := \Eb_{\Pi}[\sum_{t\in [T]}\oneb\{\pi(t) = k\}|H, \Bc]$, which represents the expected pulls performed by $\Pi$ on each arm $k$ during $T$ steps in environment $\Bc$ with prior input $H$. 
\end{definition}

Using this notion of warm-start pulls, the following guarantee on the teaching phase is established.

\begin{lemma}[Teaching Phase in TAL]\label{lem:tal_teaching_general}
    If the event in Lemma~\ref{lem:tal_learning_general} occurs, the regret and cost in the teaching phase of TAL are bounded, respectively, as
    \begin{align*}
        R^{\textup{TAL}}_{F,2}(T)\leq&\sum_{m\in [M]}\max_{H_m\in \Hc_m}\sum_{k\neq k_\dagger}\Delta_k \cdot \iota_{k}(T; H_m, \Bc_m, \Pi_m);\\
        C^{\textup{TAL}}_{F,2}(T)\leq& \sum_{m\in [M]}\max_{H_m\in \Hc_m}\sum_{k\neq k_\dagger}\delta_{k,m}(\gamma_2)\cdot \iota_{k}(T; H_m, \Bc_m, \Pi_m),
    \end{align*}
    where $\Bc_m$ denotes an environment with constant rewards as $\gamma_2$ for arm $k \neq k_\dagger$ and stochastic rewards with expectation $\mu_{\dagger,m}$ for arm $k_\dagger$. The set $\Hc_m$ is defined with each element of it as a reward sequence $H_m = \{H_{k,m} : k\in [K]\}$ where $H_{k,m}\in \{\{\gamma_1\}^\tau: \tau \in [\underline{\eta}_m(T_1; \gamma_1, [K]), \overline{\eta}_m(T_1; \gamma_1, [K])]\}$.
\end{lemma}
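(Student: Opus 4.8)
The plan is to reduce the teaching phase, as seen by each client $m$, to a single run of her own policy $\Pi_m$ in the fixed environment $\Bc_m$ started from a warm-start history that lies in $\Hc_m$, and then to read the two bounds off Definition~\ref{def:warm_start_pulls}. Throughout I work on the good event $\Ec$ of Lemma~\ref{lem:tal_learning_general}, on which $k_\ddagger=k_\dagger$ and the learning phase ends at some (random) time $T_1$; the complementary failure event carries the separate $1/T$ probability and is absorbed into the overall regret/cost bound elsewhere.

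\textbf{Client state entering the teaching phase.} First I would observe that during the learning phase every reward revealed to client $m$ equals the constant $\gamma_1$, since $\sigma_m(t)\gets\gamma_1-X_{\pi_m(t),m}(t)$. Because the phase switch in Alg.~\ref{alg:tal} is governed by the single global counter $F$, all clients switch simultaneously, and client $m$'s adjusted history at that moment, grouped by arm, assigns to arm $k$ the constant subsequence $\{\gamma_1\}^{n_{k,m}}$, where $n_{k,m}$ is her pull count of arm $k$ at the switch. Since up to the switch all of $[K]$ returns the fixed reward $\gamma_1$, the $([K],\gamma_1,\underline{\eta}_m,\overline{\eta}_m)$-sufficiently-exploring property (Definition~\ref{def:uniform_exploration}), applied at total pull count $T_1$, gives $\underline{\eta}_m(T_1;\gamma_1,[K])\le n_{k,m}\le\overline{\eta}_m(T_1;\gamma_1,[K])$ for every $k$; hence the realized warm-start history $H_m:=\{\{\gamma_1\}^{n_{k,m}}:k\in[K]\}$ belongs to $\Hc_m$.

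\textbf{Reduction to warm-start pulls and assembling.} Next, for every step $t$ after the switch, the adjustment \eqref{eqn:TAL_adjustment} together with $k_\ddagger=k_\dagger$ makes the revealed reward equal to the constant $\gamma_2$ when $\pi_m(t)\ne k_\dagger$ and equal to the unmodified $X_{k_\dagger,m}(t)$ (mean $\mu_{\dagger,m}$) when $\pi_m(t)=k_\dagger$ — precisely the environment $\Bc_m$, whose arm-$k_\dagger$ reward draws are independent of all learning-phase randomness. Thus $\Pi_m$'s continuation depends on the learning phase only through $H_m$, and by the tower property the expected number of teaching-phase pulls of any arm $k\ne k_\dagger$, conditioned on $\Ec$ and on $(H_m,T_1)$, equals $\iota_k(T-T_1;H_m,\Bc_m,\Pi_m)\le\iota_k(T;H_m,\Bc_m,\Pi_m)$, using that warm-start pulls are nondecreasing in the horizon and that the teaching phase lasts $T-T_1\le T$ steps (the case $T_1\ge T$ being vacuous). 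Now the teaching-phase regret equals $\sum_{m\in[M]}\sum_{k\ne k_\dagger}\Delta_k$ times the expected teaching-phase pulls of arm $k$ by client $m$ (the per-step loss being $\Delta_k$ on $k\ne k_\dagger$ and $0$ on $k_\dagger$); since the \emph{same} realized $H_m$ governs all arms at once, for that $H_m$ we have $\sum_{k\ne k_\dagger}\Delta_k\,\iota_k(T;H_m,\Bc_m,\Pi_m)\le\max_{H'_m\in\Hc_m}\sum_{k\ne k_\dagger}\Delta_k\,\iota_k(T;H'_m,\Bc_m,\Pi_m)$, and averaging over $H_m,T_1$ under $\Ec$ and summing over $m$ gives the claimed bound on $R^{\textup{TAL}}_{F,2}(T)$. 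The bound on $C^{\textup{TAL}}_{F,2}(T)$ is identical in form: a teaching-phase pull of $k\ne k_\dagger$ incurs $|\sigma_m(t)|=|\gamma_2-X_{k,m}(t)|$, of expectation $\delta_{k,m}(\gamma_2)$, while a pull of $k_\dagger$ incurs $0$, so replacing $\Delta_k$ by $\delta_{k,m}(\gamma_2)$ throughout finishes the argument.

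\textbf{Main obstacle.} The delicate part is the bookkeeping around the \emph{random} warm-start history $H_m$ and the \emph{random} switch time $T_1$: one must argue that on $\Ec$ the realized $H_m$ almost surely lands in $\Hc_m$, invoking the sufficiently-exploring bounds only inside the constant-$\gamma_1$ learning-phase environment; one must justify, via the tower property, that $\Pi_m$'s state entering the teaching phase is captured entirely by $H_m$ (so the reduction to a single warm-start instance is legitimate, and the fresh teaching-phase draws of $k_\dagger$ are independent of $\Ec$); and — crucially — one must keep $H_m$ fixed across all arms before passing to the supremum over $\Hc_m$, since a naive per-arm maximization would yield a strictly weaker bound. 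Monotonicity of $\iota_k$ in its horizon argument is all that is needed here; monotonicity of $\underline{\eta}_m,\overline{\eta}_m$ enters only later, when $T_1$ is replaced by its deterministic bound $\max_m\underline{\eta}_m^{-1}(F(\psi_{\max});\gamma_1,[K])$. The remaining steps are essentially definitional.
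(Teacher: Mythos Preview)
Your proposal is correct and follows essentially the same approach as the paper's own proof: identify the warm-start history $H_m\in\Hc_m$ at the switch time via the sufficiently-exploring bounds, recognize the teaching phase as $\Pi_m$ running in $\Bc_m$ with that history, and bound the $(T-T_1)$-step pulls by $\iota_k(T;H_m,\Bc_m,\Pi_m)$ before taking the max over $\Hc_m$. Your write-up is considerably more careful than the paper's terse argument---in particular, you make explicit the tower-property step, the independence of the fresh $k_\dagger$ draws from the learning-phase randomness, and the fact that the same realized $H_m$ governs all arms simultaneously (so the $\max$ sits outside the sum over $k$, matching the lemma as stated)---but the underlying logic is identical.
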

Note that $\Bc_m$ characterizes the environment of client $m$ in the teaching phase while $\Hc_m$ represents the cumulative observation inherited from the learning phase. 

Finally, {the overall performance guarantee can be obtained by combining the regrets from two phases}.
\begin{theorem}[Overall Performance of TAL]\label{thm:tal_general}
    Under the assumption in Lemma~\ref{lem:tal_learning_general}, with $R^{\textup{TAL}}_{F,1}(T), C^{\textup{TAL}}_{F,1}(T)$ defined in Lemma~\ref{lem:tal_learning_general} and $R^{\textup{TAL}}_{F,2}(T), C^{\textup{TAL}}_{F,2}(T)$ in Lemma~\ref{lem:tal_teaching_general}, the regret and cost of TAL are bounded, respectively, as
    \begin{align*}
        R^{\textup{TAL}}_F(T) \leq R^{\textup{TAL}}_{F,1}(T) + R^{\textup{TAL}}_{F,2}(T) + O(M);\\
        C^{\textup{TAL}}_F(T) \leq C^{\textup{TAL}}_{F,1}(T) + C^{\textup{TAL}}_{F,2}(T) + O(M).
    \end{align*}
\end{theorem}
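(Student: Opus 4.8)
The plan is to decompose the horizon at the (random) end $T_1$ of the learning phase, bound the learning and teaching portions separately via Lemmas~\ref{lem:tal_learning_general} and~\ref{lem:tal_teaching_general}, and absorb the low-probability failure of learning into the $O(M)$ residual. Let $\Ec$ be the event of Lemma~\ref{lem:tal_learning_general} (the learning phase ends by $T_1$ and identifies $k_\ddagger = k_\dagger$), so $\Pb(\Ec)\ge 1 - 1/T$. Using $\Eb[Y_{\pi_m(t)}(t)] = \Eb[\nu_{\pi_m(t)}]$ (the reward drawn at time $t$ is independent of the action, which depends only on earlier history), one has $R_F(T) = \Eb\big[\sum_{m\in[M]}\sum_{t\in[T]}(\nu_\dagger - \nu_{\pi_m(t)})\big]$, and I would split $\sum_{t\in[T]}=\sum_{t\le T_1}+\sum_{t>T_1}$; the cost decomposes identically. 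Applying the law of total expectation to each resulting term over $\Ec$ and $\Ec^c$, it then suffices to bound (i) the learning-phase terms on $\Ec$, (ii) the teaching-phase terms on $\Ec$, and (iii) all terms on $\Ec^c$.

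For (i), on $\Ec$ Lemma~\ref{lem:tal_learning_general} supplies the deterministic bounds $\sum_{m}\sum_{t\le T_1}(\nu_\dagger-\nu_{\pi_m(t)})\le R^{\textup{TAL}}_{F,1}(T)$ and the analogous one for cost, so these terms contribute at most $R^{\textup{TAL}}_{F,1}(T)$ and $C^{\textup{TAL}}_{F,1}(T)$. For (ii), the key observation is that on $\Ec$ each client $m$ enters the teaching phase in exactly the state that Lemma~\ref{lem:tal_teaching_general} anticipates: every reward revealed during learning equals $\gamma_1$, and the number of pulls on each arm at time $T_1$ lies in $[\underline\eta_m(T_1;\gamma_1,[K]),\overline\eta_m(T_1;\gamma_1,[K])]$ by the sufficient-exploration property, so the realized warm-start sequence lies in $\Hc_m$; moreover, since $k_\ddagger=k_\dagger$ on $\Ec$, the teaching adjustments of Eqn.~\eqref{eqn:TAL_adjustment} make client $m$'s effective environment precisely $\Bc_m$. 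Conditioning further on the realized warm-start histories and invoking Lemma~\ref{lem:tal_teaching_general} then gives $\Eb\big[\sum_{m}\sum_{t>T_1}(\nu_\dagger-\nu_{\pi_m(t)})\mid\Ec\big]\le R^{\textup{TAL}}_{F,2}(T)$ and similarly for cost; since $\Pb(\Ec)\le 1$, the same bounds hold for the $\oneb\{\Ec\}$-weighted versions.

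For (iii), I use the crudest estimates: each client's per-step global regret is at most $\nu_\dagger\le 1$ and each adjustment obeys $|\sigma_m(t)|\le 1$ by the clamping constraint, so on every outcome the total regret and total cost are both at most $MT$; weighting by $\Pb(\Ec^c)\le 1/T$ leaves at most $M=O(M)$ for each. Summing (i)--(iii) yields $R^{\textup{TAL}}_F(T)\le R^{\textup{TAL}}_{F,1}(T)+R^{\textup{TAL}}_{F,2}(T)+O(M)$ and $C^{\textup{TAL}}_F(T)\le C^{\textup{TAL}}_{F,1}(T)+C^{\textup{TAL}}_{F,2}(T)+O(M)$.

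Since the substance is carried by Lemmas~\ref{lem:tal_learning_general} and~\ref{lem:tal_teaching_general}, the remaining work is purely bookkeeping, and the step that needs the most care is (ii): one must check that conditioning on $\Ec$ causes no hidden inflation, i.e., that the random warm-start history and the random teaching-phase environment genuinely lie inside the worst-case families $\Hc_m$ and $\Bc_m$ over which Lemma~\ref{lem:tal_teaching_general} has already maximized — which is exactly why that lemma is stated with $\max_{H_m\in\Hc_m}$ rather than for a fixed history. The only other point to flag is that the $O(M)$ residual (rather than a vanishing term) is an artifact of these coarse failure-event estimates, but it is harmless because the leading terms are $\Omega(M\log T)$.
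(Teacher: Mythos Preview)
Your proposal is correct and follows essentially the same approach as the paper: condition on the good event of Lemma~\ref{lem:tal_learning_general}, combine the bounds from Lemmas~\ref{lem:tal_learning_general} and~\ref{lem:tal_teaching_general} on that event, and on the complement use the trivial $MT$ bound together with $\Pb(\Ec^c)\le 1/T$ to obtain the $O(M)$ residual. The paper's proof is in fact much terser than yours, so your additional bookkeeping in step~(ii) (verifying that the warm-start history lies in $\Hc_m$ and the effective environment is $\Bc_m$) only makes the argument more explicit.
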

The key difficulty behind this analysis resides in leveraging the quantities in Definitions~\ref{def:uniform_exploration} and \ref{def:warm_start_pulls}. In particular, how to specify $\underline{\eta}$, $\overline{\eta}$ and $\iota$ is non-trivial, which is one of the main technical challenges in proving Thm.~\ref{thm:tal_general}.  Furthermore, Thm.~\ref{thm:tal_general} implies that the desired logarithmic regret and cost can be achieved by TAL when $R^{\textup{TAL}}_{F,1}(T), R^{\textup{TAL}}_{F,2}(T), C^{\textup{TAL}}_{F,1}(T)$ and $C^{\textup{TAL}}_{F,2}(T)$ are all bounded in logarithmic orders. The analyses of these terms are further determined by the sufficiently exploring property and the warm-start pulls of the specific clients' strategies as stated in Lemmas~\ref{lem:tal_learning_general} and \ref{lem:tal_teaching_general}. 

In the following, to particularize the general guarantee in Thm.~\ref{thm:tal_general}, we analyze several well-known bandit algorithms as clients' strategies (i.e., UCB and $\varepsilon$-greedy).

\subsection{UCB Clients}\label{subsec:tal_ucb}
The popular UCB-type algorithms are first considered. In particular, we analyze the celebrated UCB1 algorithm \cite{auer2002finite} while noting that the analysis generalizes to other UCB variants  \cite{audibert2009minimax,garivier2011kl}.  
Especially, at time $t$, the UCB1 algorithm for client $m$ chooses arm as follows:
\begin{align*}
    \pi_m(t)= \argmax_{k\in[K]}\left\{\hat\mu'_{k,m}(t-1)+\sqrt{2\log(t)/N_{k,m}(t-1)}\right\},
\end{align*}
which considers both the perceived sample mean
\begin{align*}
    \hat\mu'_{k,m}(t) := \sum\nolimits_{\tau\in [N_{k,m}(t)]}X'_{k,m}(N^{-1}_{k,m}(\tau))/N_{k,m}(t)
\end{align*}
and the associated confidence bound.

First, the sufficiently exploring assumption in Lemma~\ref{lem:tal_learning_general} is verified for UCB1 in Lemma~\ref{lem:tal_learning_UCB}. This is intuitive as with constant rewards, the sample means are the same while additional pulls decrease the confidence bound in UCB1.
\begin{lemma}\label{lem:tal_learning_UCB}
    For any $\gamma\in [0,1]$ and set $\Ic\subseteq [K]$, UCB1 is $(\Ic, \gamma, \underline{\eta}, \overline{\eta})$-sufficiently exploring with $
        \underline{\eta}(\tau; \gamma, \Ic)= \lfloor \tau/ |\Ic| \rfloor$ and $\overline{\eta}(\tau; \gamma, \Ic) = \lceil \tau/ |\Ic| \rceil$.
\end{lemma}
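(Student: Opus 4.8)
The plan is to exploit the defining feature of the environment in Definition~\ref{def:uniform_exploration}: every arm $k\in\Ic$ returns the deterministic reward $\gamma$, so its perceived sample mean satisfies $\hat\mu'_{k,m}(t-1)=\gamma$ as soon as $N_{k,m}(t-1)\ge 1$. (In the learning phase of TAL this hypothesis is exactly what the adjustment $\sigma_m(t)\gets\gamma_1-X_{\pi_m(t),m}(t)$ guarantees, since then $X'_{\pi_m(t),m}(t)=\gamma_1$ identically.) Consequently the UCB1 index of any arm $k\in\Ic$ collapses to $\gamma+\sqrt{2\log(t)/N_{k,m}(t-1)}$, which is $+\infty$ when $N_{k,m}(t-1)=0$ and a strictly decreasing function of $N_{k,m}(t-1)$ once the initial round-robin (one pull per arm) is over. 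Hence, restricted to $\Ic$, the UCB1 index is maximized precisely at an arm with the smallest current pull count.

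The first step is to record the structural claim: whenever UCB1 selects an arm $k\in\Ic$ at a time $t$, necessarily $N_{k,m}(t-1)\le N_{j,m}(t-1)$ for every $j\in\Ic$. This follows from the previous paragraph together with the $\argmax$ rule — an arm carrying a strictly larger index can never be chosen, and on $\Ic$ the index ordering is the reverse of the pull-count ordering (strict monotonicity rules out selecting a strictly more-pulled arm of $\Ic$ over a less-pulled one); the opening phase, in which each of the $K$ arms is pulled once, is consistent with this since it pulls every arm of $\Ic$ exactly once.

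The second step is an induction on $\tau$, the number of pulls performed so far on the set $\Ic$, with hypothesis that each arm $k\in\Ic$ has been pulled between $\lfloor\tau/|\Ic|\rfloor$ and $\lceil\tau/|\Ic|\rceil$ times. The base case $\tau=0$ is trivial. For the inductive step, write $\tau=q|\Ic|+r$ with $0\le r<|\Ic|$; since the counts sum to $\tau$ and lie in $\{q,q+1\}$, exactly $r$ of them equal $q+1$ and the remaining $|\Ic|-r$ equal $q$. By the structural claim the next pull on $\Ic$ lands on one of the arms currently at count $q$, raising it to $q+1$; one then checks that the updated counts still lie in $\{\lfloor(\tau+1)/|\Ic|\rfloor,\lceil(\tau+1)/|\Ic|\rceil\}$, distinguishing the cases $r+1<|\Ic|$ (floor stays $q$, ceiling stays $q+1$) and $r+1=|\Ic|$ (all counts become $q+1=(\tau+1)/|\Ic|$). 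Reading this off at the moment the total number of pulls on $\Ic$ equals $\tau$ gives exactly $\underline{\eta}(\tau;\gamma,\Ic)=\lfloor\tau/|\Ic|\rfloor$ and $\overline{\eta}(\tau;\gamma,\Ic)=\lceil\tau/|\Ic|\rceil$.

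I do not expect a genuine obstacle here; the argument is elementary once the index is seen to depend on $k\in\Ic$ only through the pull count. The only points needing a little care are the tie-breaking/initialization subtleties handled in the structural claim — ensuring the $\argmax$ convention cannot favour a more-pulled arm of $\Ic$, and that the initial all-arms round-robin does not disturb the balance within $\Ic$ — and the bookkeeping in the inductive case split, which is routine.
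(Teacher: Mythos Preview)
Your proposal is correct and follows essentially the same approach as the paper: both exploit that with constant rewards $\gamma$ on $\Ic$ the UCB1 index reduces to $\gamma+\sqrt{2\log(t)/N_{k,m}(t-1)}$, so only a least-pulled arm of $\Ic$ can be selected, forcing the counts to differ by at most one. The paper packages this as a short contradiction argument (assume two arms in $\Ic$ differ by at least $2$ and compare indices at the last pull of the over-pulled arm), whereas you present the equivalent direct induction on $\tau$; the content is the same.
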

Then, the performance of TAL in the learning phase (in Lemma~\ref{lem:tal_learning_general}) can be bounded by recognizing $T_1 = O(K\log(T)/(M \Delta_{\min}^2))$, which further specifies the reward sequence set $\Hc_{m}$ in Lemma~\ref{lem:tal_teaching_general} and leads to the following lemma on the warm-start pulls of UCB1.
\begin{lemma}\label{lem:tal_teaching_UCB}
    If $\gamma_1 \geq \mu_{\dagger, m} > \gamma_2$ and $\Pi_m$ is UCB1, for all $k\neq k_{\dagger}$, it holds that $
        \max_{H_m\in \Hc_m}\{\iota_{k}(T; H_m, \Bc_m, \Pi_m)\} = O\left(\frac{(\gamma_1 - \gamma_2 )T_1}{K(\mu_{\dagger,m}-\gamma_2)} + \frac{\log(T)}{(\mu_{\dagger,m} - \gamma_2)^2}\right)$.
\end{lemma}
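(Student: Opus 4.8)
The plan is to bound, for a fixed client $m$ running UCB1 and a fixed sub-optimal arm $k \neq k_\dagger$, the number of pulls on arm $k$ during the teaching phase of length at most $T$, starting from any warm-start history $H_m \in \Hc_m$. By Definition~\ref{def:warm_start_pulls}, in environment $\Bc_m$ arm $k$ yields the constant reward $\gamma_2$ and arm $k_\dagger$ yields stochastic rewards with mean $\mu_{\dagger,m}$; the warm-start history consists, for each arm $j$, of a block of between $\underline{\eta}_m(T_1;\gamma_1,[K]) = \lfloor T_1/K\rfloor$ and $\overline{\eta}_m(T_1;\gamma_1,[K]) = \lceil T_1/K\rceil$ copies of the constant $\gamma_1$ (using Lemma~\ref{lem:tal_learning_UCB}). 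First I would set up the standard UCB1 pull-count argument: arm $k$ is pulled at a time $t$ in the teaching phase only if its UCB index exceeds that of $k_\dagger$, i.e. $\hat\mu'_{k,m}(t-1) + \sqrt{2\log t / N_{k,m}(t-1)} \geq \hat\mu'_{k_\dagger,m}(t-1) + \sqrt{2\log t / N_{k_\dagger,m}(t-1)}$. The new ingredient relative to textbook UCB1 is that the sample means $\hat\mu'$ are computed over the combined (warm-start $+$ teaching-phase) samples, so the warm-start block biases both means.

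The key steps, in order: (1) Control the biased mean of arm $k$. Since arm $k$'s teaching-phase rewards are all exactly $\gamma_2$ and its warm-start block is $N_0 \in [\lfloor T_1/K\rfloor, \lceil T_1/K\rceil]$ copies of $\gamma_1$, after $n$ teaching-phase pulls of arm $k$ we have exactly $\hat\mu'_{k,m} = (N_0\gamma_1 + n\gamma_2)/(N_0+n)$, a deterministic quantity that decreases from $\gamma_1$ toward $\gamma_2$; in particular it never exceeds $\gamma_2 + N_0(\gamma_1-\gamma_2)/(N_0+n)$. (2) Lower-bound the mean of $k_\dagger$. Its warm-start block contributes $N_0'$ copies of $\gamma_1 \geq \mu_{\dagger,m}$ and its teaching-phase samples concentrate around $\mu_{\dagger,m}$; by a Hoeffding/union bound (as in the classical UCB1 analysis) with high probability $\hat\mu'_{k_\dagger,m}(t-1) \geq \mu_{\dagger,m} - \sqrt{2\log t / N_{k_\dagger,m}(t-1)}$ uniformly (the warm-start samples only help since $\gamma_1 \geq \mu_{\dagger,m}$). (3) Combine: plugging these into the index comparison, a pull of arm $k$ at teaching-phase time with current count $n = N_{k,m}$ forces roughly
\[
\gamma_2 + \frac{N_0(\gamma_1-\gamma_2)}{N_0+n} + 2\sqrt{\frac{2\log T}{n}} \;\geq\; \mu_{\dagger,m}.
\]
This inequality fails once $n$ is large enough that both the bias term $N_0(\gamma_1-\gamma_2)/(N_0+n) \lesssim (\gamma_1-\gamma_2)N_0/n \lesssim (\gamma_1-\gamma_2)T_1/(Kn)$ and the confidence term $2\sqrt{2\log T/n}$ are each below a constant fraction of $\mu_{\dagger,m}-\gamma_2$. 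Solving gives the two stated terms: the bias term is killed once $n = \Omega\!\big((\gamma_1-\gamma_2)T_1/(K(\mu_{\dagger,m}-\gamma_2))\big)$, and the exploration term once $n = \Omega\!\big(\log T/(\mu_{\dagger,m}-\gamma_2)^2\big)$, so $N_{k,m}$ at the end of teaching is $O$ of the sum. (4) Take expectations, absorbing the low-probability event from step (2) into an $O(1)$ (or lower-order) additive term, and take the max over $H_m \in \Hc_m$ — which, by the monotonicity in steps (1)–(2), is attained at the extreme value $N_0 = \lceil T_1/K\rceil$, so the $T_1/K$ scaling is exactly what appears.

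The main obstacle I anticipate is making step (3) rigorous as a clean deterministic stopping-time statement rather than a heuristic one-shot inequality: in the warm-start regime the "effective" confidence radius is not $\sqrt{2\log t/N_{k,m}}$ over $N_{k,m}$ samples but over $N_0 + N_{k,m}$ samples, and one must be careful that the bias term for arm $k$ and the concentration term for $k_\dagger$ are controlled simultaneously for all relevant times via a single union bound over $t \in [T]$ (the usual UCB1 trick of summing $t^{-4}$ or similar). A secondary subtlety is verifying that the hypothesis $\gamma_1 \geq \mu_{\dagger,m} > \gamma_2$ is used in exactly the right places — $\gamma_1 \geq \mu_{\dagger,m}$ to ensure the warm-start never hurts $k_\dagger$'s index (so we don't lose control of $k_\dagger$'s pull count and can treat $N_{k_\dagger,m}(t-1)$ as growing essentially linearly after a short burn-in), and $\mu_{\dagger,m} > \gamma_2$ to guarantee the gap $\mu_{\dagger,m}-\gamma_2 > 0$ that appears in both denominators. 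Once the union-bound bookkeeping is in place, the rest is the routine UCB1 algebra.
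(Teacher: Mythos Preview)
Your proposal is correct and follows essentially the same route as the paper's proof: the paper likewise fixes a threshold $L_{k,m}(T)$ built from the two terms you derive, controls arm $k$'s index deterministically via the mixture $(\tau_{k,m}\gamma_1 + n_{k,m}\gamma_2)/(\tau_{k,m}+n_{k,m})$, and controls arm $k_\dagger$'s index by Hoeffding after the ``optimism'' replacement $\gamma_1 \mapsto \mu_{\dagger,m}$ (exactly your step~(2)), then closes with the standard double union bound over $(n_{k,m},n_{\dagger,m})$ summing $(t+t_H)^{-4}$. The only cosmetic difference is that the paper defines $L_{k,m}$ upfront rather than deriving it from your displayed inequality, and it handles the confidence radius directly over $\tau_{k,m}+n_{k,m}$ samples (precisely the bookkeeping you flag as the main obstacle), without ever needing the linear-growth heuristic for $N_{k_\dagger,m}$ you mention in passing.
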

Proving this lemma is non-trivial and may be of independent interest in understanding the warm-start behavior of UCB1. Essentially, the result can be interpreted as first offsetting the ``warm-start'' history (the first term) and then converging to arm $k_{\dagger}$ (the second term) in an environment $\Bc_m$, whose rewards for arm $k \neq k_\dagger$ are constant $\gamma_2$'s and rewards for arm $k_\dagger$ have an expectation $\mu_{\dagger,m}$  (see Lemma~\ref{lem:tal_teaching_general}).

It is noted that Lemma~\ref{lem:tal_teaching_UCB} requires $\gamma_1 \geq \mu_{\dagger, m}$, which maintains the optimism for the estimation of arm $k_\dagger$ on each local model $m$. The other requirement $\mu_{\dagger, m}> \gamma_2$ is intuitive as otherwise, the local client $m$ would not converge to arm $k_\dagger$. Since there is no prior information about $\mu_{\dagger, m}$, a feasible and sufficient solution is to set $\gamma_1 = 1$ while $\gamma_2 = 0$, which leads to the following theorem.

\begin{theorem}[TAL with UCB1 clients]\label{thm:tal_UCB}
    For TAL with $\gamma_1 = 1$ and $\gamma_2 = 0$, if all clients run  UCB1 locally and $\mu_{\dagger,m}\neq 0$ for all $m\in [M]$, it holds that
   \begin{align*}
		R^{\textup{TAL}}_F(T) &= O\bigg(\sum_{m\in[M]}\sum_{k\neq k_{\dagger}}\bigg[\frac{\Delta_{k}\log(T)}{\mu_{\dagger,m}M\Delta_{\min}^2}+\frac{\Delta_{k}\log(T)}{\mu_{\dagger,m}^2}\bigg]\bigg);\\
		C^{\textup{TAL}}_F(T) &= O\bigg(\sum_{m\in[M]}\sum_{k\in [K]}\frac{(1-\mu_{k,m})\log(T)}{M\Delta^2_{\min}}\\
  &+\sum_{m\in[M]}\sum_{k\neq k_{\dagger}}\bigg[\frac{\mu_{k,m}\log(T)}{\mu_{\dagger,m}M\Delta^2_{\min}}+\frac{\mu_{k,m}\log(T)}{\mu_{\dagger,m}^2}\bigg] \bigg).
	\end{align*}
\end{theorem}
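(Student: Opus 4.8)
The plan is to assemble Theorem~\ref{thm:tal_UCB} by combining the general decomposition of Theorem~\ref{thm:tal_general} with the UCB1-specific ingredients already laid out: Lemma~\ref{lem:tal_learning_UCB} (sufficient exploration), the observation $T_1 = O(K\log(T)/(M\Delta_{\min}^2))$, and Lemma~\ref{lem:tal_teaching_UCB} (warm-start pulls). The first move is to verify that the parameter choice $\gamma_1 = 1$, $\gamma_2 = 0$ meets the hypotheses of Lemma~\ref{lem:tal_teaching_UCB}: indeed $\gamma_1 = 1 \geq \mu_{\dagger,m}$ always holds, and $\mu_{\dagger,m} > \gamma_2 = 0$ is exactly the stated assumption $\mu_{\dagger,m}\neq 0$ (since $\mu_{\dagger,m}\geq 0$). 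So both the learning-phase lemma and the teaching-phase lemma apply, and the high-probability event in Lemma~\ref{lem:tal_learning_general} holds with probability at least $1 - 1/T$.

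First I would bound the learning-phase terms. By Lemma~\ref{lem:tal_learning_UCB}, $\overline{\eta}_m(T_1;\gamma_1,[K]) = \lceil T_1/K\rceil$, and $T_1 \leq \max_m \underline{\eta}_m^{-1}(F(\psi_{\max});\gamma_1,[K]) = K\cdot F(\psi_{\max}) + O(1)$. Since $F(\psi_{\max}) = \sum_{\tau\in[\psi_{\max}]} f(\tau) = \Theta(2^{2\psi_{\max}}\log(KT^2)/M)$ and $2^{\psi_{\max}} = \Theta(1/\Delta_{\min})$, this gives $T_1 = O(K\log(T)/(M\Delta_{\min}^2))$ and hence $\overline{\eta}_m(T_1;\gamma_1,[K]) = O(\log(T)/(M\Delta_{\min}^2))$. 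Plugging into Lemma~\ref{lem:tal_learning_general}: $R^{\textup{TAL}}_{F,1}(T) \leq \sum_m\sum_{k\neq k_\dagger}\Delta_k\cdot O(\log(T)/(M\Delta_{\min}^2))$, and $C^{\textup{TAL}}_{F,1}(T)\leq \sum_m\sum_k \delta_{k,m}(1)\cdot O(\log(T)/(M\Delta_{\min}^2))$, where $\delta_{k,m}(1) = \Eb[|1 - X_{k,m}(t)|] = 1 - \mu_{k,m}$ since $X_{k,m}(t)\in[0,1]$. This already matches the first cost term and part of the regret in the statement.

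Next I would bound the teaching-phase terms via Lemma~\ref{lem:tal_teaching_general} and Lemma~\ref{lem:tal_teaching_UCB}. With $\gamma_1 - \gamma_2 = 1$ and $\mu_{\dagger,m} - \gamma_2 = \mu_{\dagger,m}$, Lemma~\ref{lem:tal_teaching_UCB} gives $\max_{H_m}\iota_k(T;H_m,\Bc_m,\Pi_m) = O\!\big(\tfrac{T_1}{K\mu_{\dagger,m}} + \tfrac{\log(T)}{\mu_{\dagger,m}^2}\big) = O\!\big(\tfrac{\log(T)}{\mu_{\dagger,m}M\Delta_{\min}^2} + \tfrac{\log(T)}{\mu_{\dagger,m}^2}\big)$ after substituting the bound on $T_1$. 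Then $R^{\textup{TAL}}_{F,2}(T) \leq \sum_m\sum_{k\neq k_\dagger}\Delta_k\cdot O\!\big(\tfrac{\log(T)}{\mu_{\dagger,m}M\Delta_{\min}^2}+\tfrac{\log(T)}{\mu_{\dagger,m}^2}\big)$, and $C^{\textup{TAL}}_{F,2}(T) \leq \sum_m\sum_{k\neq k_\dagger}\delta_{k,m}(0)\cdot O(\cdots)$ with $\delta_{k,m}(0) = \Eb[|0 - X_{k,m}(t)|] = \mu_{k,m}$. These match the remaining regret and cost terms.

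Finally I would invoke Theorem~\ref{thm:tal_general} to add $R^{\textup{TAL}}_{F,1} + R^{\textup{TAL}}_{F,2} + O(M)$ and $C^{\textup{TAL}}_{F,1} + C^{\textup{TAL}}_{F,2} + O(M)$, noting the $O(M)$ terms are absorbed into the logarithmic bounds (since $K,M\ll T$), and separately account for the low-probability failure event of Lemma~\ref{lem:tal_learning_general}: on that event (probability $\leq 1/T$) the regret is at most $\sum_m T\Delta_{\max} \leq MT\Delta_{\max}$, contributing $\leq M\Delta_{\max} = O(M)$ to the expected regret, and the cost is at most $T$ per client, contributing $O(M)$, both negligible. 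Collecting terms yields the stated bounds. The main obstacle is not in this assembly — which is essentially bookkeeping — but in the two inputs it relies on, particularly Lemma~\ref{lem:tal_teaching_UCB}: controlling the warm-start pulls of UCB1 requires showing that a UCB1 client initialized with up to $\overline{\eta}_m(T_1;\gamma_1,[K])$ observations of value $\gamma_1$ on every arm still converges to $k_\dagger$ in environment $\Bc_m$, and quantifying how the inflated (but still optimistic, since $\gamma_1 \geq \mu_{\dagger,m}$) sample means on the suboptimal arms delay that convergence by exactly the first ("offset") term; that argument must handle the worst-case history $H_m\in\Hc_m$ uniformly and is where the real work lies, but it is granted to me here.
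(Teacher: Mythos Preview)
Your proposal is correct and follows essentially the same route as the paper's proof: instantiate the general decomposition of Theorem~\ref{thm:tal_general} with Lemma~\ref{lem:tal_learning_UCB} (giving $\overline{\eta}_m(T_1;\gamma_1,[K])=\lceil T_1/K\rceil$ and $T_1=O(K\log(T)/(M\Delta_{\min}^2))$) for the learning phase and Lemma~\ref{lem:tal_teaching_UCB} for the teaching phase, then read off $\delta_{k,m}(1)=1-\mu_{k,m}$ and $\delta_{k,m}(0)=\mu_{k,m}$. Your explicit verification of the hypotheses $\gamma_1\geq\mu_{\dagger,m}>\gamma_2$ and your separate accounting of the failure event are minor elaborations (the latter is in fact already absorbed into the $O(M)$ term of Theorem~\ref{thm:tal_general}), but the argument is the same.
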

{We note that with a focus on the dependencies on $M$ and $T$, the regret and cost are both of order $O(M\log(T))$; thus TAL is order-optimal w.r.t. both criteria stated in Sec.~\ref{sec:warm}, i.e., the general design goal is achieved. Moreover, the regret bound shows two dominating terms, which are from Lemma~\ref{lem:tal_teaching_UCB}, i.e., the teaching phase.  In fact, there is another non-dominating (thus hidden) term from Lemma~\ref{lem:tal_learning_general} for the learning phase; see more details in Appendix~\ref{app:tal_ucb}. 
A similar three-part form is shared by the cost. In particular, the first term is from the learning phase (thus the sum is over all arms $k\in [K]$ and each term scales with $1-\mu_{k,m}$), and the last two terms are from the teaching phase (thus the sum is over sub-optimal global arms $k\neq k_{\dagger}$ and scales with $\mu_{k,m}$). 
}

\subsection{$\varepsilon$-greedy Clients}\label{subsec:tal_epsilon}
The analysis is further extended to the clients running the $\varepsilon$-greedy algorithm \cite{sutton1998reinforcement}, another well-known bandit strategy. Especially, the $\varepsilon$-greedy algorithm for client $m$ is as follows:
\begin{equation*}
	 \pi_m(t) \gets
	\begin{cases}
		\argmax_{k\in[K]} \hat{\mu}'_{k,m}(t-1) &\text{w.p. $1-\varepsilon_m(t)$}\\
	    \text{a random arm in $[K]$} & \text{w.p. $\varepsilon_m(t)$}
	\end{cases},
\end{equation*}
where the exploration probability $\varepsilon_m(t)\in [0,1]$ is taken as $\varepsilon_m(t) = O(K/t)$,  following \cite{auer2002finite}. 

First, the following lemma states that $\varepsilon$-greedy is sufficiently exploring, which is intuitive as the constant rewards lead to the same sample mean for different arms.
\begin{lemma}\label{lem:tal_learning_epsilon}
    For any $\gamma\in [0,1]$, if ties among arms are broken uniformly at random, with probability at least $1-1/T$, $\varepsilon$-greedy is $([K], \gamma, \underline{\eta}, \overline{\eta})$-uniformly exploring with $\underline{\eta}(\tau; \gamma, [K])$ and $\overline{\eta}(\tau; \gamma, [K]) = O( \tau/K \pm \log(KT))$.
\end{lemma}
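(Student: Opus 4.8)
The plan is to exploit that a constant reward sequence of common value $\gamma$ on every arm forces $\varepsilon$-greedy to behave as (essentially) uniform random sampling, after which the claim follows from concentration of the resulting Binomial-like pull counts. Concretely, once every arm has been pulled at least once, all received rewards equal $\gamma$, so the perceived sample means satisfy $\hat\mu'_{k,m}(t-1)=\gamma$ for every $k\in[K]$. Hence the greedy branch $\argmax_{k}\hat\mu'_{k,m}(t-1)$, with ties broken uniformly at random (as hypothesized), draws an arm uniformly over $[K]$; since the exploration branch does so as well, $\Pb(\pi_m(t)=k\mid\mathcal F_{t-1})=\varepsilon_m(t)\cdot\tfrac1K+(1-\varepsilon_m(t))\cdot\tfrac1K=\tfrac1K$ for every $k$ at every such round, independently of the past. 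Writing $\tau$ for the number of elapsed rounds (which equals the number of pulls on $[K]$, since here $\Ic=[K]$), $N_{k,m}(\tau)$ thus behaves like a $\mathrm{Binomial}(\tau,1/K)$ variable up to a short initial transient.

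First I would dispose of that transient: before all arms have been pulled, the equal-means argument can fail because the conventionally initialized means of the not-yet-pulled arms need not equal $\gamma$. Under the standard variant in which each arm is pulled once at initialization, coverage is deterministic after $K$ rounds; otherwise one absorbs an $O(K\log(KT))$-round burn-in during which $\varepsilon_m(t)=1$ forces pure uniform exploration, so a coupon-collector bound gives coverage with probability at least $1-\tfrac1{2T}$. On the coverage event, for $\tau$ at most the coverage time $\tau_0$ the crude bound $0\le N_{k,m}(\tau)\le\tau_0$ suffices (and $\tau_0$ is negligible for the values of $\tau$ that matter when this lemma is fed into Lemma~\ref{lem:tal_learning_general}), while for $\tau>\tau_0$ we are in the uniform-sampling regime.

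For $\tau>\tau_0$, I would apply a Chernoff bound --- equivalently an Azuma bound, taking $\oneb\{\pi_m(s)=k\}-\tfrac1K$ as martingale increments --- to $N_{k,m}(\tau)-\tau/K$ and union bound over all $k\in[K]$ and all $\tau\le T$, obtaining with probability at least $1-\tfrac1{2T}$ a deviation $|N_{k,m}(\tau)-\tau/K|=O\big(\sqrt{\tau\log(KT)/K}+\log(KT)\big)$, which is of strictly lower order than $\tau/K$ and which the lemma records compactly as the $\pm\log(KT)$ slack. Intersecting with the coverage event yields the overall $1-\tfrac1T$ guarantee, and taking $\underline\eta(\tau;\gamma,[K])$ and $\overline\eta(\tau;\gamma,[K])$ to be the resulting lower and upper envelopes of $N_{k,m}(\tau)$ completes the proof. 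The hard part is the clean handling of the initial transient together with the unpulled-arm initialization convention, and making the ``both branches are uniform'' reduction fully rigorous --- in particular its insensitivity to the specific value of $\gamma$ once coverage holds; the ensuing concentration step and envelope extraction are routine.
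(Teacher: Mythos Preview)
Your approach is essentially the same as the paper's: observe that constant-$\gamma$ rewards force both the greedy and the exploration branches of $\varepsilon$-greedy to sample uniformly over $[K]$, so $\Pb(\pi(t)=k)=1/K$, and then apply concentration to the resulting Binomial-like pull counts. The paper uses Bernstein's inequality with a multiplicative-type deviation $x=\tau/(4K)+(8/3)\log(KT)$, obtaining $\underline\eta(\tau)=3\tau/(4K)-(8/3)\log(KT)$ and $\overline\eta(\tau)=5\tau/(4K)+(8/3)\log(KT)$, and union-bounds only over $k\in[K]$; it does not discuss the initial transient at all. Your proposal is a bit more careful---Chernoff/Azuma with an additional union over $\tau\le T$, plus an explicit coupon-collector/initialization argument for coverage---but the core argument is identical.
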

Due to the randomness in $\varepsilon$-greedy, it is complicated to analyze its warm-start pulls in general. Instead, the following lemma focuses on $\gamma_1 = \gamma_2 =0$. Under this setting, the sample means are all kept as zero in the learning phase. Thus, once a non-zero reward is collected in the teaching phase, that arm will immediately have the highest sample mean.
\begin{lemma}\label{lem:tal_teaching_epsilon}
    If $\Pi_m$ is $\varepsilon$-greedy and $\mu_{\dagger,m} >\gamma_1 = \gamma_2 = 0$, with probability at least $1-1/T$, it holds that $\max_{H_m\in \Hc_m}\{\sum_{k\neq k_\dagger}\iota_{k,m}(T; H_m, \Bc_m, \Pi_m)\} = O(K\log(KT)/\mu_{\dagger,m}^2)$.
\end{lemma}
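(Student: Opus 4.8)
The plan is to reduce the claim to an elementary counting argument inside the teaching‑phase environment $\Bc_m$, exploiting that with $\gamma_1=\gamma_2=0$ the only rewards ever revealed to client $m$ that can be nonzero are those drawn on arm $k_\dagger$ after the teaching phase starts. First I would fix an arbitrary warm‑start history $H_m\in\Hc_m$: by Lemma~\ref{lem:tal_teaching_general} together with $\gamma_1=0$, every block $H_{k,m}$ is a string of zeros, so at the first teaching step every arm's perceived sample mean $\hat\mu'_{k,m}$ equals $0$. Then I would record the invariant that drives everything. For each sub‑optimal arm $k\neq k_\dagger$ all later adjusted rewards are $0$ as well (since $\gamma_2=0$), so $\hat\mu'_{k,m}\equiv 0$ throughout the horizon; meanwhile the running reward sum on $k_\dagger$ is nondecreasing (rewards lie in $[0,1]$), so $\hat\mu'_{k_\dagger,m}$ stays $0$ until the first strictly positive reward is drawn on $k_\dagger$ and is strictly positive from then on.

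Let $\tau^\star$ be that first step (within the teaching phase) at which client $m$ pulls $k_\dagger$ and observes a positive reward, and split $\sum_{k\neq k_\dagger}\iota_{k,m}(T;H_m,\Bc_m,\Pi_m)$ into the expected number of sub‑optimal pulls before and after $\tau^\star$. After $\tau^\star$, the invariant makes $k_\dagger$ the unique maximizer of the perceived means, so it is chosen at every exploitation step and a sub‑optimal arm is pulled only at an exploration step; since exploration occurs with probability $\varepsilon_m(t)=O(K/t)$, this part is at most $\sum_{t\le T}O(K/t)=O(K\log T)$. Before $\tau^\star$, all $K$ perceived means are tied at $0$, so the uniform tie‑breaking makes $k_\dagger$ be selected with probability exactly $1/K$ at both exploit and explore steps, and, conditioned on selecting it, the drawn reward is positive with probability at least $\Eb[\text{reward on }k_\dagger]=\mu_{\dagger,m}$ (the reward lies in $[0,1]$, so $\Pr[\text{reward}>0]\ge\Eb[\text{reward}]$). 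This per‑step success probability is bounded below by $\mu_{\dagger,m}/K$ irrespective of the past, so $\tau^\star$ is stochastically dominated by a $\mathrm{Geom}(\mu_{\dagger,m}/K)$ variable, giving $\Eb[\tau^\star]\le K/\mu_{\dagger,m}$ and hence at most $K/\mu_{\dagger,m}$ sub‑optimal pulls before $\tau^\star$ in expectation.

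Summing the two contributions yields $\sum_{k\neq k_\dagger}\iota_{k,m}(T;H_m,\Bc_m,\Pi_m)=O(K\log T + K/\mu_{\dagger,m})$; since $\mu_{\dagger,m}\le 1$ this is absorbed into the claimed $O(K\log(KT)/\mu_{\dagger,m}^2)$, and because the argument never used any property of $H_m$ beyond its entries being $0$, the bound survives the outer $\max_{H_m\in\Hc_m}$, while the ``with probability at least $1-1/T$'' qualifier is inherited from the event of Lemma~\ref{lem:tal_learning_epsilon} under which $\Hc_m$ has the stated shape. The step I expect to be the main obstacle is making the invariant fully rigorous --- in particular, the claim that the single positive reward at $\tau^\star$ permanently pins $k_\dagger$ as the perceived‑best arm --- which demands tracking simultaneously that every sub‑optimal arm's perceived mean is frozen at $0$ and that $k_\dagger$'s cumulative reward is monotone. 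A secondary subtlety is that the per‑step ``pull $k_\dagger$ and draw a positive reward'' events before $\tau^\star$ are not mutually independent, so the geometric tail must come from a uniform conditional lower bound on the success probability (stochastic domination) rather than from multiplying independent probabilities.
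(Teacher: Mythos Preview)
Your proposal is correct and takes a cleaner route than the paper. The paper starts from the same invariant (all perceived means stay at $0$ until $k_\dagger$ draws a positive reward), but then proceeds via two concentration steps: it fixes $n'=\lceil \log(2T)/(2\mu_{\dagger,m}^2)\rceil$ and uses Hoeffding to argue that after $n'$ pulls of $k_\dagger$ a nonzero reward has appeared with probability at least $1-1/(2T)$, and then uses Bernstein on the uniform pulling to show that $n'$ pulls of $k_\dagger$ are accumulated within $\tau'=O(K\log(T)/\mu_{\dagger,m}^2)$ total steps with probability at least $1-1/(2T)$. Your geometric-domination argument bypasses both concentration steps and directly gives $\Eb[\tau^\star]\le K/\mu_{\dagger,m}$, which is sharper by a factor of $\log(T)/\mu_{\dagger,m}$ and more elementary. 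What the paper's route buys is a genuine high-probability statement about the random number of sub-optimal pulls, which is how it discharges the ``with probability at least $1-1/T$'' clause; your reading --- that the qualifier is inherited from Lemma~\ref{lem:tal_learning_epsilon} governing the shape of $\Hc_m$ --- is also coherent, since $\iota_{k,m}$ is by definition an expectation and your bound on it holds unconditionally once $H_m\in\Hc_m$. Either way, both arguments land well inside the stated $O(K\log(KT)/\mu_{\dagger,m}^2)$.
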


Combining these results with Thm.~\ref{thm:tal_general}, the following performance guarantees can be obtained.
\begin{theorem}[TAL with $\varepsilon$-greedy clients]\label{thm:tal_epsilon}
    For TAL with $\gamma_1 = \gamma_2 = 0$, if clients run $\varepsilon$-greedy and break ties uniformly at random, and $\mu_{\dagger,m }\neq 0, \forall m\in [M]$, it holds that
    \begin{align*}
            R^{\textup{TAL}}_F(T) &= O\bigg(\frac{K\Delta_{\max}\log(T)}{\Delta^2_{\min}}+\sum_{m\in[M]}\frac{K\Delta_{\max}\log(T)}{\mu_{\dagger,m}^2}\bigg),\\
            C^{\textup{TAL}}_F(T) &=O\bigg(\sum_{m\in[M]}\bigg[\frac{K\mu_{*,m}\log(T)}{M\Delta^2_{\min}}+\frac{K\mu_{*,m}\log(T)}{\mu_{\dagger,m}^2}\bigg]\bigg).
        \end{align*}
\end{theorem}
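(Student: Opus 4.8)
The plan is to derive Theorem~\ref{thm:tal_epsilon} by specializing the general decomposition in Thm.~\ref{thm:tal_general} to $\varepsilon$-greedy clients with the particular choice $\gamma_1 = \gamma_2 = 0$, feeding in Lemmas~\ref{lem:tal_learning_epsilon} and \ref{lem:tal_teaching_epsilon} for the sufficiently-exploring bounds and the warm-start pulls, respectively. By Thm.~\ref{thm:tal_general}, it suffices to bound $R^{\textup{TAL}}_{F,1}(T), C^{\textup{TAL}}_{F,1}(T)$ (learning phase, via Lemma~\ref{lem:tal_learning_general}) and $R^{\textup{TAL}}_{F,2}(T), C^{\textup{TAL}}_{F,2}(T)$ (teaching phase, via Lemma~\ref{lem:tal_teaching_general}) each by $O(M\log T)$-type quantities, and then collect the four contributions plus the $O(M)$ remainder. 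The event in Lemma~\ref{lem:tal_learning_general} holds w.p.\ at least $1-1/T$, and the event in Lemma~\ref{lem:tal_teaching_epsilon} holds w.p.\ at least $1-1/T$; on the complement (probability $O(1/T)$) the regret and cost are trivially $O(MT)\cdot O(1/T) = O(M)$ and $O(MT)\cdot O(1/T) = O(M)$ respectively, so these bad events are absorbed into the stated bounds.

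For the \textbf{learning phase}, I would first compute $T_1$. Lemma~\ref{lem:tal_learning_epsilon} gives $\underline{\eta}_m(\tau;0,[K]) = \tau/K - O(\log(KT))$, so $\underline{\eta}_m^{-1}(F(\psi_{\max});0,[K]) = O(K\cdot F(\psi_{\max}) + K\log(KT))$. Since $F(\psi_{\max}) = \sum_{\tau\le\psi_{\max}}f(\tau)$ with $f(\tau) = \frac{1}{M}2^{2\tau+3}\log(2KT^2)$, the geometric sum is dominated by its last term $f(\psi_{\max}) = \Theta(\frac{1}{M}2^{2\psi_{\max}}\log T)$, and $\psi_{\max} = \lceil \log_2(1/\Delta_{\min})\rceil$ yields $2^{2\psi_{\max}} = \Theta(1/\Delta_{\min}^2)$, hence $F(\psi_{\max}) = \Theta(\log(T)/(M\Delta_{\min}^2))$ and $T_1 = O(K\log(T)/\Delta_{\min}^2)$ (the additive $K\log(KT)$ is lower order under $K,M\ll T$). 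Then Lemma~\ref{lem:tal_learning_general} gives $R^{\textup{TAL}}_{F,1}(T) \le \sum_m\sum_{k\neq k_\dagger}\Delta_k\,\overline{\eta}_m(T_1;0,[K])$ with $\overline{\eta}_m(T_1;0,[K]) = O(T_1/K + \log(KT)) = O(\log(T)/\Delta_{\min}^2)$, producing the $O(MK\Delta_{\max}\log(T)/\Delta_{\min}^2)$ contribution to the regret (note $\sum_{k\neq k_\dagger}\Delta_k \le K\Delta_{\max}$). For the cost, $\delta_{k,m}(\gamma_1) = \delta_{k,m}(0) = \Eb[|0 - X_{k,m}(t)|] = \mu_{k,m} \le \mu_{*,m}$, so $C^{\textup{TAL}}_{F,1}(T) \le \sum_m\sum_{k\in[K]}\mu_{k,m}\,\overline{\eta}_m(T_1;0,[K]) = O\big(\sum_m K\mu_{*,m}\log(T)/\Delta_{\min}^2\big)$; one more factor of $1/M$ appears because $\overline{\eta}_m = O(T_1/K)$ and $T_1$ already carries $\Theta(\log T/\Delta_{\min}^2)$ without the $M$, but tracing the dependence through $F(\psi_{\max})\propto 1/M$ cancels correctly to match the stated $\frac{K\mu_{*,m}\log(T)}{M\Delta_{\min}^2}$ form.

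For the \textbf{teaching phase}, $\gamma_2 = 0 < \mu_{\dagger,m}$ (using the hypothesis $\mu_{\dagger,m}\neq 0$, i.e., $\mu_{\dagger,m}>0$), so Lemma~\ref{lem:tal_teaching_epsilon} applies and gives $\max_{H_m}\sum_{k\neq k_\dagger}\iota_{k,m}(T;H_m,\Bc_m,\Pi_m) = O(K\log(KT)/\mu_{\dagger,m}^2)$. Plugging into Lemma~\ref{lem:tal_teaching_general}: the teaching-phase regret is $R^{\textup{TAL}}_{F,2}(T) \le \sum_m\max_{H_m}\sum_{k\neq k_\dagger}\Delta_k\,\iota_k(\cdots) \le \sum_m \Delta_{\max}\cdot O(K\log(T)/\mu_{\dagger,m}^2) = O\big(\sum_m K\Delta_{\max}\log(T)/\mu_{\dagger,m}^2\big)$, which is exactly the second regret term; and the teaching-phase cost is $C^{\textup{TAL}}_{F,2}(T) \le \sum_m\max_{H_m}\sum_{k\neq k_\dagger}\delta_{k,m}(\gamma_2)\,\iota_k(\cdots)$ with $\delta_{k,m}(0) = \mu_{k,m}\le\mu_{*,m}$, giving $O\big(\sum_m K\mu_{*,m}\log(T)/\mu_{\dagger,m}^2\big)$, the second cost term. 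Summing the learning and teaching contributions and adding the $O(M)$ from Thm.~\ref{thm:tal_general} and the $O(M)$ from the failure events yields precisely the claimed $R^{\textup{TAL}}_F(T)$ and $C^{\textup{TAL}}_F(T)$.

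The \textbf{main obstacle} is not this final assembly — which is essentially bookkeeping — but rather the bounds it rests on, namely establishing Lemma~\ref{lem:tal_teaching_epsilon} (the warm-start pulls of $\varepsilon$-greedy): one must argue that, starting from an all-zero reward history of length in $[\underline{\eta}_m(T_1;0,[K]),\overline{\eta}_m(T_1;0,[K])]$ on every arm, the first non-zero sample from arm $k_\dagger$ immediately makes it the empirical leader, after which a standard $\varepsilon$-greedy argument (concentration of the exploration counts $\sum_{t}\oneb\{\text{explore}\}\oneb\{\text{pick }k\}$ around $\sum_t \varepsilon_m(t)/K = O(\log(T))$, plus a union bound controlling the probability that arm $k_\dagger$'s empirical mean ever drops below a mistaken arm's) bounds the residual pulls of suboptimal arms by $O(K\log(KT)/\mu_{\dagger,m}^2)$. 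Equally, one must verify that with $\gamma_1 = 0$ the learning-phase estimators still concentrate (trivially, since all observed rewards equal $0$ and only the true $X_{k,m}$ matter for the confidence bounds of Eqn.~\eqref{eqn:f_ulcb}, which use the original $X_{k,m}$, not $X'_{k,m}$) — a subtlety worth flagging but not an obstruction. In the write-up I would handle the probability bookkeeping for the two $1-1/T$ events up front, state the $\delta_{k,m}(0) = \mu_{k,m}$ identity once, and then present the four term-by-term substitutions in a single display each.
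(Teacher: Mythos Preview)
Your proposal is correct and follows essentially the same route as the paper's proof: specialize Lemmas~\ref{lem:tal_learning_general} and \ref{lem:tal_teaching_general} using Lemmas~\ref{lem:tal_learning_epsilon} and \ref{lem:tal_teaching_epsilon}, note $\delta_{k,m}(0)=\mu_{k,m}\le\mu_{*,m}$, and assemble via Thm.~\ref{thm:tal_general} with the $O(1/T)$ failure events contributing $O(M)$. The only wobble is your tracking of the $1/M$ factor in $T_1$---it should read $T_1 = O(K\log(T)/(M\Delta_{\min}^2))$ from the outset, so that $\overline{\eta}_m(T_1;0,[K])=O(\log(T)/(M\Delta_{\min}^2))$ and the sum over $m\in[M]$ produces the stated $O(K\Delta_{\max}\log(T)/\Delta_{\min}^2)$ learning-phase regret without the extra $M$---but you flag and resolve this yourself.
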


{The two parts in regret and cost are from the learning and teaching phases, respectively. It can be observed that TAL with $\varepsilon$-greedy clients also achieves the goal illustrated in Section~\ref{sec:warm}. Moreover, compared with Theorem~\ref{thm:tal_UCB}, dependencies on $\Delta_{\max}$ and $\mu_{*,m}$ (instead of $\Delta_{k}$ and $\mu_{k,m}$) can be observed, which is a worst-case consideration to capture the random actions generated from the $\varepsilon$-greedy policy.}

\subsection{Discussions: Thompson Sampling and Beyond}\label{subsec:tal_analysis_beyond}
Another popular bandit strategy is Thompson sampling (TS) \cite{thompson1933likelihood}. Experiment results in Sec.~\ref{sec:exp} verify the performance of TAL with TS clients; however, the theoretical analysis remains open. In particular, unlike the sufficiently exploring UCB and $\varepsilon$-greedy, \cite{kalvit2021closer} indicates that when facing two arms with constant reward $1$'s, the pulls by TS can be arbitrarily imbalanced. Instead, balanced pulls can be achieved with reward $0$'s for these two arms. This phenomenon motivates using $\gamma_1 = 0$ to encourage TS explorations in the learning phase, whose effectiveness is verified empirically but not analytically. On the other hand, the complicated warm-start behavior of TS also requires further investigation.

{Furthermore, in Secs.~\ref{subsec:tal_ucb} and \ref{subsec:tal_epsilon}, the hyper-parameter $\gamma_1$ is set to different values (i.e., $1$ for UCB clients and $0$ for $\varepsilon$-greedy clients). These choices are made to facilitate the corresponding ``warm-start'' analyses required in Definition~\ref{def:warm_start_pulls} (i.e., to maintain the optimism of estimations in UCB and to avoid complicated analyses due to the randomness in $\varepsilon$-greedy). However, the capabilities of TAL extend beyond these theoretically sound options. Especially, experiments in Sec.~\ref{sec:exp} show that various other choices (e.g., $\gamma_1 = 0$ for UCB clients and $\gamma_1 = 1$ for $\varepsilon$-greedy clients) can also lead to reasonable performances. Thus, it would be an interesting future direction to investigate whether a unified hyper-parameter $\gamma_1$ in TAL is sufficient for certain classes of client strategies (e.g., UCB and $\varepsilon$-greedy). The main difficulty along this direction is still to analyze the ``warm-start'' behaviors, which are largely determined by the specific strategy.}

Moreover, Thm.~\ref{thm:tal_general} has established conditions on clients' strategies to obtain performance guarantees of TAL, i.e., sufficiently exploring and low sub-optimal warm-start pulls. An interesting direction is to verify the client-strategy-agnostic nature of TAL in an even broad sense, e.g., with any no-regret client strategy. Experimental results are provided later to enlighten future works on this open problem.

\section{TWL: A More Adaptive Extension}\label{sec:twl}
\subsection{Algorithm Design}
To further optimize the performance, a more adaptive ``Teaching-While-Learning'' (TWL) algorithm (presented in Alg.~\ref{alg:twl}) is proposed, which breaks the non-adaptive phased structure of TAL by leveraging a different idea of \emph{successive arm elimination} \cite{jamieson2014best,even2002pac}.
In TWL, the server maintains a set $\Upsilon$ of active arms (on the global model), which is initialized as $[K]$. If $|\Upsilon|>1$, the following update is performed after each active arm $k\in \Upsilon$ has been pulled at least $F(\psi)$ times by each client:
\begin{equation*}
	\Upsilon\gets \{j\in \Upsilon:\text{UCB}_{j}(\psi)\geq \text{LCB}_{k}(\psi), \forall k\in \Upsilon\},
\end{equation*}
where $\text{UCB}_{k}(\psi)$ and $\text{LCB}_{k}(\psi)$ are defined in Eqn.~\eqref{eqn:f_ulcb} and  $\psi$ is the epoch counter as in TAL. In this process, the arms that do not satisfy the requirement are eliminated (i.e., marked as inactive). Then, based on the set $\Upsilon$, the following adjustment is performed for client $m$:
\begin{align*}
		\sigma_m(t) \gets
	\begin{cases}
		\gamma_2 -X_{\pi_m(t),m}(t) & \text{if $\pi_m(t) \notin \Upsilon$}\\
  \gamma_1 -X_{\pi_m(t),m}(t)  & \text{if $\pi_m(t) \in \Upsilon$ and $|\Upsilon|>1$}\\
		$0$ & \text{if $\pi_m(t) \in \Upsilon$ and $|\Upsilon|=1$}
	\end{cases},
\end{align*}
where $\gamma_1, \gamma_2 \in [0,1]$ are to-be-specified input parameters.

In other words, the local rewards of all inactive arms are adjusted to $\gamma_2$ (typically small) to discourage explorations. For an active arm, when there are other active arms (i.e., $|\Upsilon|>1$), the server uniformly adjusts its rewards to $\gamma_1$ to encourage explorations. When an arm is the only active one (which is arm $k_\dagger$ with high probability), its original rewards are kept to save server adjustments, which is sufficient as all other arms are inactive with a small perceived reward $\gamma_2$.

TWL is more refined than TAL as it only encourages explorations on the active arms (instead of all arms), which is important in two aspects. First, only necessary arm-dependent explorations are encouraged. Second, fewer cumulative rewards on the sub-optimal arms also alleviate the server's burden of teaching clients to converge to the optimal arm.

\begin{algorithm}[htb]
    \small
	\caption{TWL}
	\label{alg:twl}
	\begin{algorithmic}[1]
        \Require Parameter $\gamma_1, \gamma_2\in [0,1]$; Time Horizon $T$
		\State Initialize: active arm set $\Upsilon\gets [K]$; iteration counter $\psi \gets 1$
		\For{$t\leq T$} 
		\State Observe $\{\pi_m(t), X_{\pi_m(t),m}(t):m\in[M]\}$
		\If{$|\Upsilon|>1$ and $N_{k,m}(t)\geq F(\psi), \forall k \in \Upsilon, m\in [M]$}
		\State Update $\{\text{UCB}_{k}(\psi), \text{LCB}_{k}(\psi): k \in \Upsilon\}$ as in Eqn.~\eqref{eqn:f_ulcb}
        \State Update $\Upsilon\gets \{j\in \Upsilon:\text{UCB}_{j}(\psi)\geq \text{LCB}_{k}(\psi), \forall k\in \Upsilon\}$
        \State Set $\psi \gets \psi+1$
		\EndIf
        \State $\forall m\in [M]$, set
        \begin{equation*}
            \begin{aligned}
            		\sigma_m(t) \gets
            	\begin{cases}
            		\gamma_2 -X_{\pi_m(t),m}(t) & \text{if $\pi_m(t) \notin \Upsilon$}\\
              \gamma_1 -X_{\pi_m(t),m}(t)  & \text{if $\pi_m(t) \in \Upsilon$ and $|\Upsilon|>1$}\\
            		$0$ & \text{if $\pi_m(t) \in \Upsilon$ and $|\Upsilon|=1$}
            	\end{cases},
            \end{aligned}
            \end{equation*}
        \State Set $X'_{\pi_m(t),m}(t) \gets X_{\pi_m(t),m}(t) + \sigma_m(t)$
		\State Reveal $X'_{\pi_m(t),m}(t)$ to each client $m\in [M]$
		\EndFor
	\end{algorithmic}
\end{algorithm}

\begin{figure*}[tbh]
	\centering
	\subfigure[UCB1: regrets.]{ \includegraphics[width=0.23\linewidth]{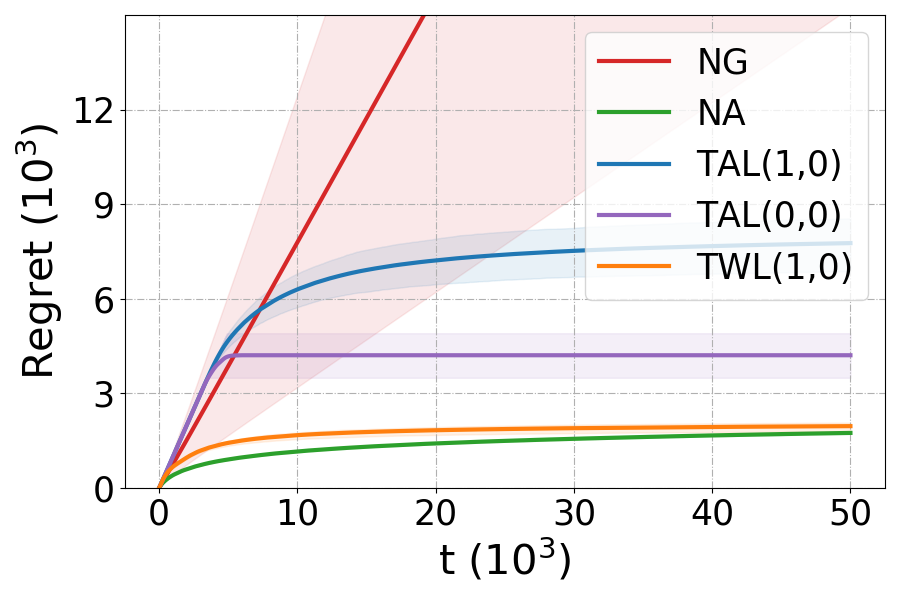}\label{fig:ucb_regret}}
	\subfigure[UCB1: costs.]{ \includegraphics[width=0.23\linewidth]{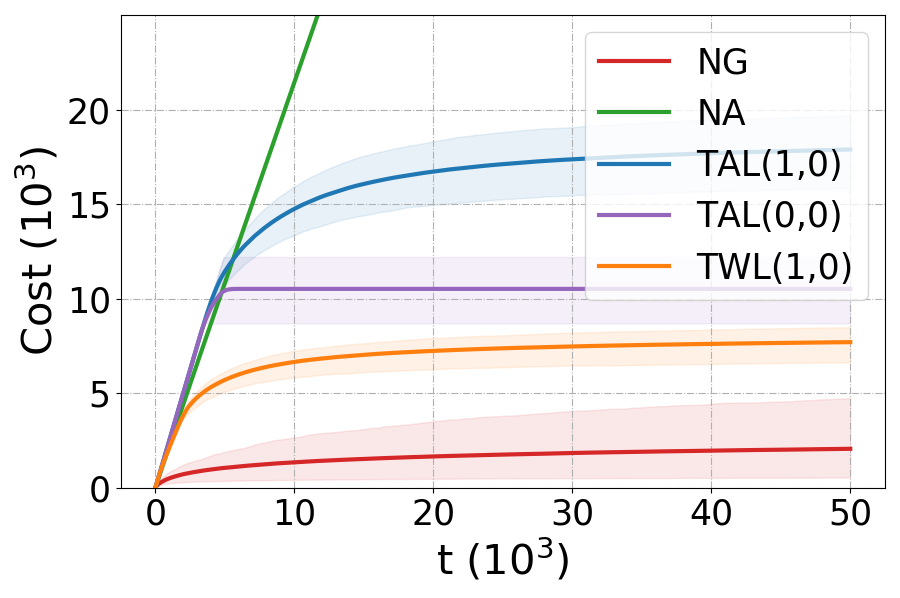}\label{fig:ucb_cost}}
	\vspace{-0.1in}
	\subfigure[$\varepsilon$-greedy: regrets.]{ \includegraphics[width=0.23\linewidth]{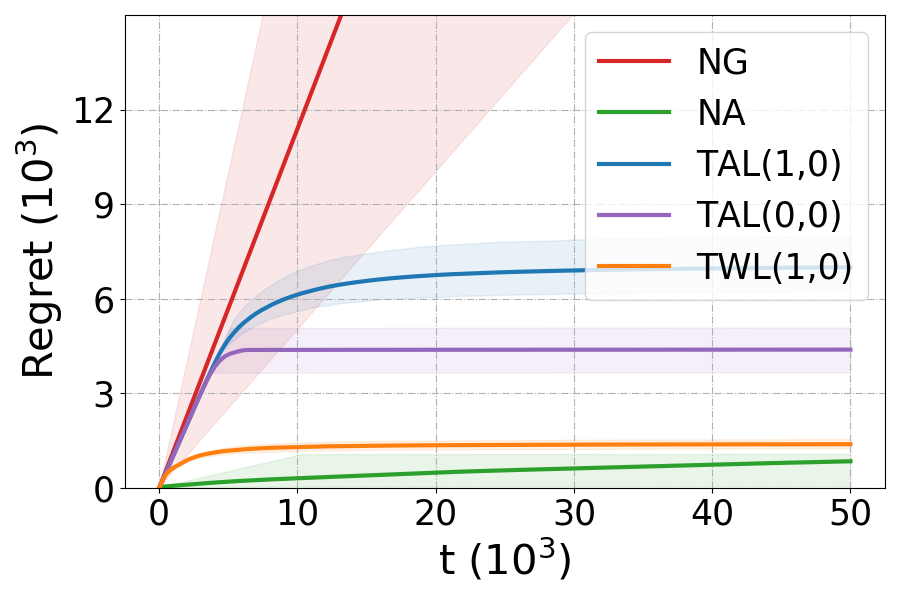}\label{fig:epsilon_regret}}
	\subfigure[$\varepsilon$-greedy: costs.]{ \includegraphics[width=0.23\linewidth]{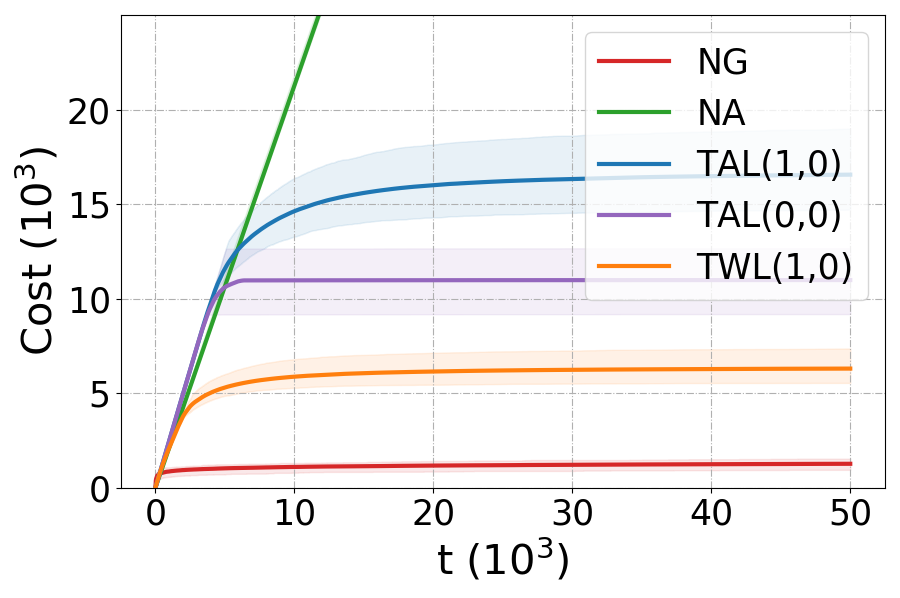}\label{fig:epsilon_cost}}
	\subfigure[TS: regrets.]{ \includegraphics[width=0.23\linewidth]{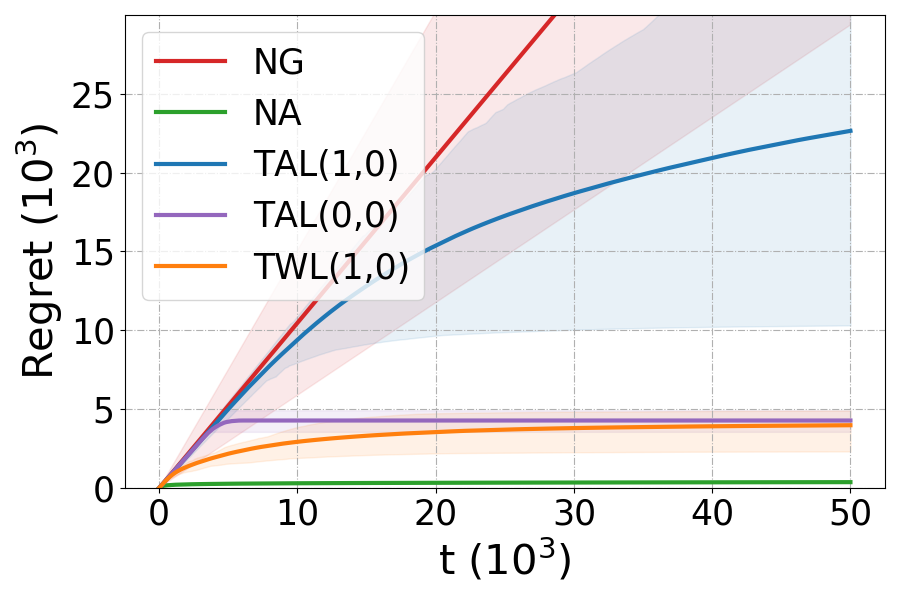}\label{fig:ts_regret}}
	\subfigure[TS: costs.]{ \includegraphics[width=0.23\linewidth]{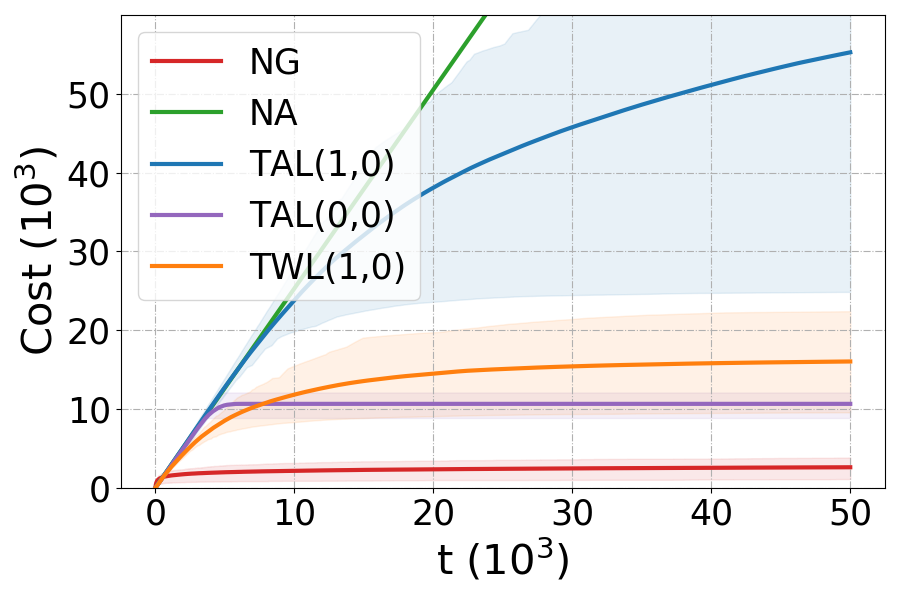}\label{fig:ts_cost}}
    \vspace{-0.1in}
    \subfigure[Mixed: regrets.]{\includegraphics[width=0.23 \linewidth]{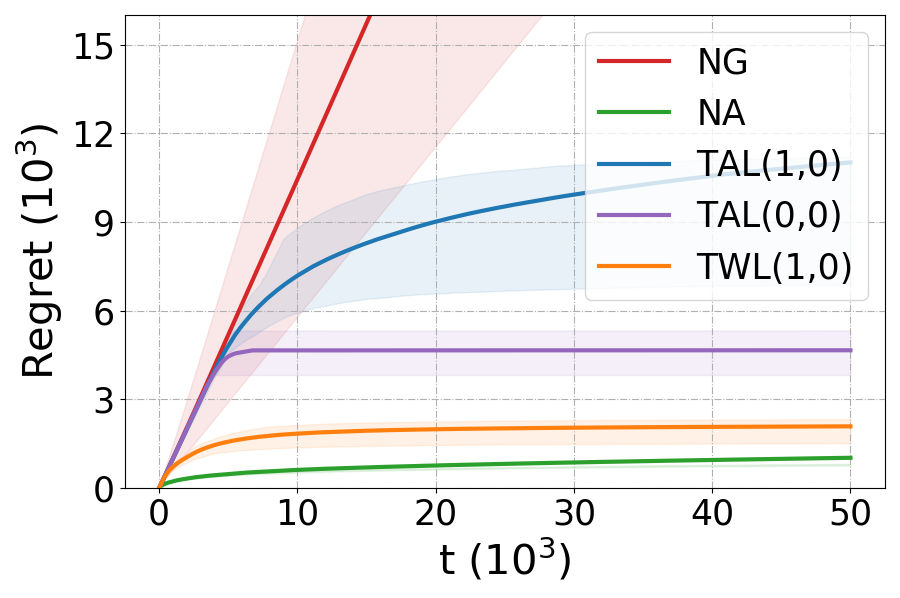}\label{fig:mixed_regret}}
	\subfigure[Mixed: costs.]{ \includegraphics[width=0.23 \linewidth]{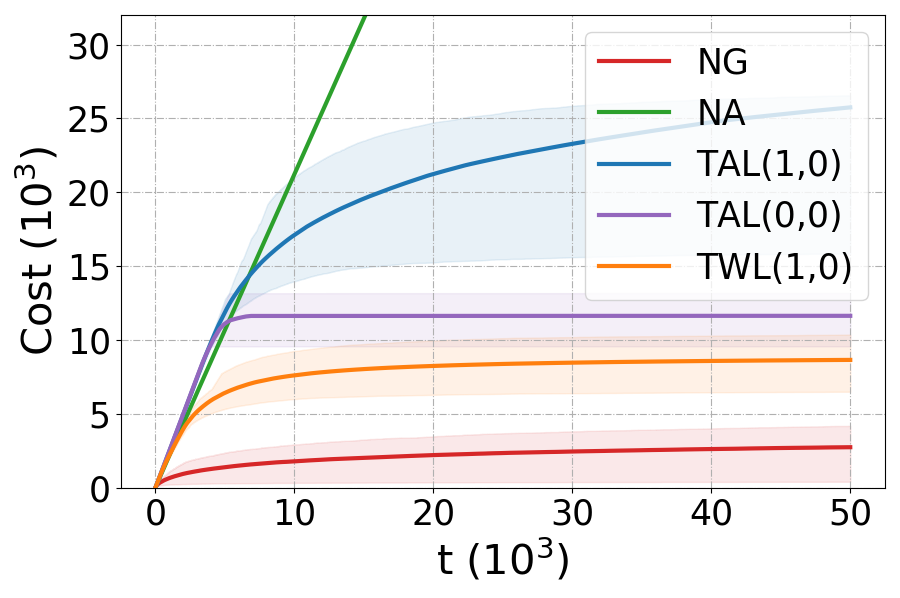}\label{fig:mixed_cost}}
    \subfigure[UCB1: random insts.]{ \includegraphics[width=0.23\linewidth]{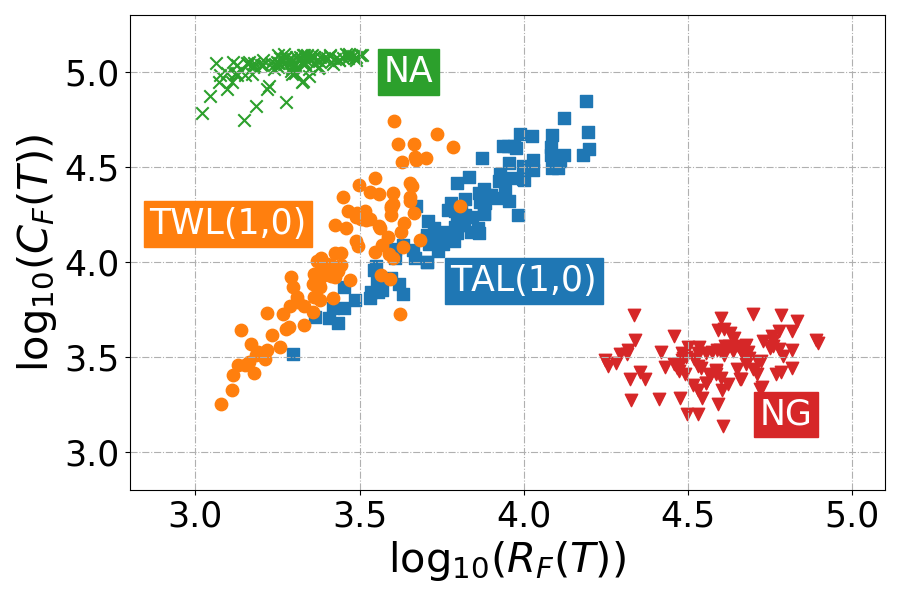}\label{fig:ucb_random}}
    \subfigure[$\varepsilon$-greedy: random insts.]{ \includegraphics[width=0.23\linewidth]{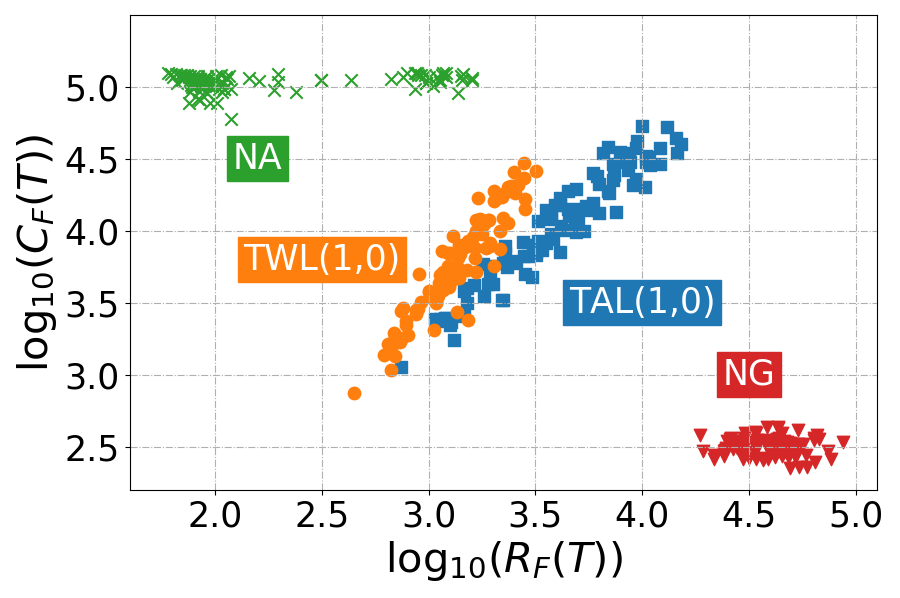}\label{fig:epsilon_random}}
    \subfigure[TS: random insts.]{ \includegraphics[width=0.23\linewidth]{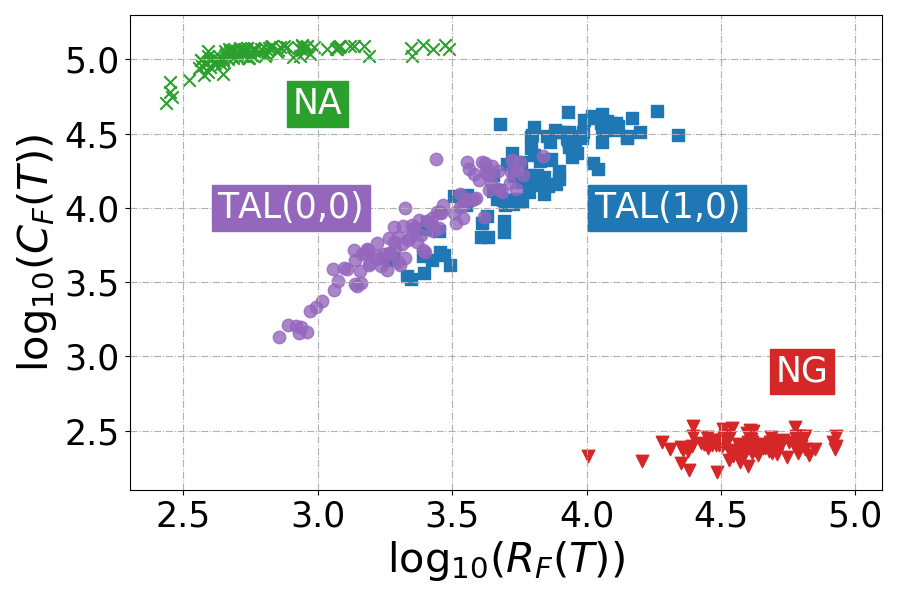}\label{fig:ts_random}}
    \subfigure[Mixed: random insts.]{ \includegraphics[width=0.23\linewidth]{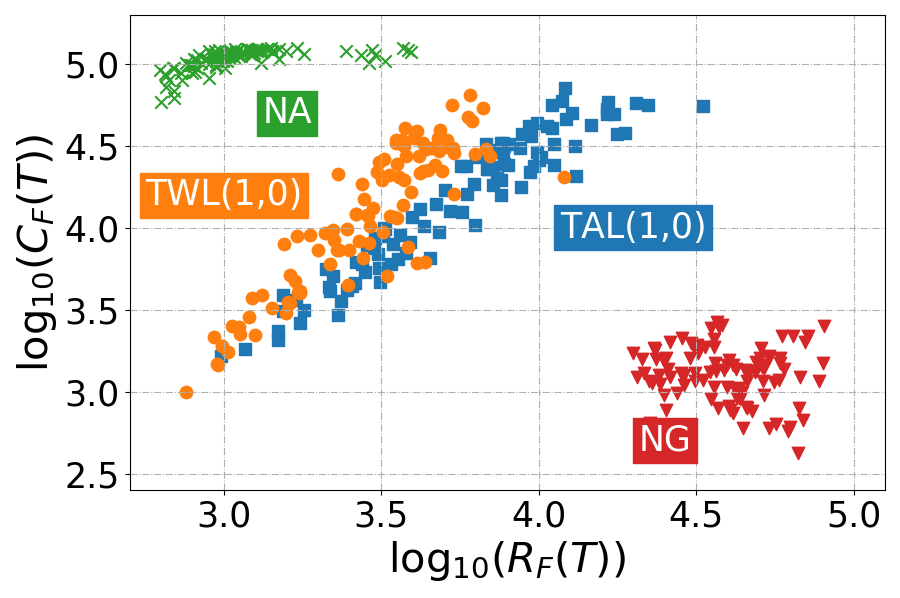}\label{fig:mix_random}}
	\caption{Experimental results on synthetic datasets with clients running UCB1, $\varepsilon$-greedy, TS, and mixed strategies. Evaluations of (a--h) are under a fixed $5$-clients-$5$-arms instance, where the curves represent the empirically averaged values and the shadowed areas represent the upper and lower $80\%$ confidence intervals. Evaluations of (i--l) are with $100$ randomly generated $5$-clients-$5$-arms instances, where each dot reports the performance (in a $\log$-$\log$ scale) under one instance and plots of a few algorithms are omitted for a better presentation here. The mixed strategies are two UCB1, two $\varepsilon$-greedy, and one TS. All time horizons are $T = 50000$.
	}
	\label{fig:performance}
\end{figure*}

\subsection{Theoretical Analysis}
The general performance of TWL can be similarly analyzed as that of TAL in Sec.~\ref{sec:tal_analysis}. The following result establishes the performance guarantee for UCB1 clients. 
\begin{theorem}[TWL with UCB1 clients]\label{thm:twl_UCB}
    For TWL with $\gamma_1 = 1$ and $\gamma_2 = 0$, if all clients  run UCB1 locally and $\mu_{\dagger,m}\neq 0$ for all $m\neq [M]$, it holds that
     \begin{align*}
		R^{\textup{TWL}}_F(T) &= O\bigg(\sum_{m\in[M]}\sum_{k\neq k_{\dagger}}\bigg[\frac{\log(T)}{\mu_{\dagger,m}M\Delta_{k}}+\frac{\Delta_{k}\log(T)}{\mu_{\dagger,m}^2}\bigg]\bigg),\\
		C^{\textup{TWL}}_F(T) &= O\bigg(\sum_{m\in[M]}\sum_{k\in [K]}\frac{(1-\mu_{k,m})\log(T)}{M\Delta^2_{k}}\\
  &+\sum_{m\in[M]}\sum_{k\neq k_{\dagger}}\bigg[\frac{\mu_{k,m}\log(T)}{\mu_{\dagger,m}M\Delta^2_{k}}+\frac{\mu_{k,m}\log(T)}{\mu_{\dagger,m}^2}\bigg]\bigg).
	\end{align*}
\end{theorem}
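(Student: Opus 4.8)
The plan is to mirror the phased analysis of TAL but interleave the "learning" and "teaching" per-epoch: in TWL there is no clean separation, so instead of a single learning phase followed by a single teaching phase, I would decompose the time horizon arm-by-arm according to when each suboptimal arm is eliminated from the active set $\Upsilon$. Fix a client $m$ and condition on the high-probability event (analogous to Lemma~\ref{lem:tal_learning_general}) that for every epoch $\psi$ and every arm $k$ still active, $\text{LCB}_k(\psi)\le\nu_k\le\text{UCB}_k(\psi)$; this holds with probability $\ge 1-1/T$ by the choice $\text{CB}(\psi)=2^{-\psi-2}$ and a union bound over the $O(\log T)$ epochs and $K$ arms. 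On this event, the optimal arm $k_\dagger$ is never eliminated, and any suboptimal arm $k$ is eliminated by the end of epoch $\psi_k:=\lceil\log_2(1/\Delta_k)\rceil$, because once $\text{CB}(\psi)<\Delta_k/2$ we have $\text{UCB}_k(\psi)<\text{LCB}_{k_\dagger}(\psi)$. Crucially — and this is the improvement over TAL — the number of pulls client $m$ must place on arm $k$ before its elimination is governed by $F(\psi_k)=\sum_{\tau\le\psi_k}f(\tau)=O(2^{2\psi_k}\log(KT^2)/M)=O(\log(T)/(M\Delta_k^2))$, an \emph{arm-dependent} quantity, rather than the worst-case $O(\log(T)/(M\Delta_{\min}^2))$ that TAL incurs on every arm. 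This is why the TWL bounds replace $\Delta_{\min}$ by $\Delta_k$ inside the sums.

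Next I would verify the two structural properties needed. First, the sufficiently-exploring property: while $|\Upsilon|>1$, all active arms receive adjusted reward $\gamma_1=1$, so UCB1 on the restricted instance over $\Upsilon$ pulls each active arm within $\lfloor\tau/|\Upsilon|\rfloor$ and $\lceil\tau/|\Upsilon|\rceil$ of the uniform count (reusing Lemma~\ref{lem:tal_learning_UCB} with $\Ic=\Upsilon$); this guarantees the epoch counters advance and the eliminations occur on schedule, giving the claimed $O(\log(T)/(M\Delta_k^2))$ pulls on arm $k$ and hence the first (learning) term of the cost, $\sum_k (1-\mu_{k,m})\log(T)/(M\Delta_k^2)$ (the per-pull cost on arm $k$ in the $\gamma_1=1$ regime is $\delta_{k,m}(1)=1-\mu_{k,m}$), as well as a (non-dominating) learning-regret contribution $\sum_k \Delta_k\cdot O(\log(T)/(M\Delta_k^2))=O(\sum_k \log(T)/(M\Delta_k))$ — which is exactly the first regret term. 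Second, the warm-start behavior after an arm becomes inactive (or after $|\Upsilon|$ collapses to $\{k_\dagger\}$): once arm $k$ is moved out of $\Upsilon$, its reward is pinned to $\gamma_2=0$, and client $m$'s UCB1 is "warm-started" with at most $O(\log(T)/(M\Delta_k^2))$ prior pulls of arm $k$ carrying empirical mean $0$ (actually $\le 1$ in general — here exactly the accumulated value at reward $1$, but I would bound $\hat\mu'_{k,m}\le$ the worst value in $[\gamma_2,\gamma_1]=[0,1]$). Invoking the warm-start analysis behind Lemma~\ref{lem:tal_teaching_UCB}, with $\gamma_1=1,\gamma_2=0$ and the environment $\Bc_m$ having constant $0$ on $k\ne k_\dagger$ and mean $\mu_{\dagger,m}$ on $k_\dagger$, the total extra pulls of $k$ after elimination are $O\big(\frac{(\gamma_1-\gamma_2)\cdot(\text{prior pulls})}{\mu_{\dagger,m}-\gamma_2}+\frac{\log T}{(\mu_{\dagger,m}-\gamma_2)^2}\big)=O\big(\frac{\log(T)}{\mu_{\dagger,m}M\Delta_k^2}+\frac{\log T}{\mu_{\dagger,m}^2}\big)$, which yields both the second regret term $\sum_k\Delta_k(\cdots)$ after the offsetting term is absorbed (the $\Delta_k/(M\Delta_k^2)=1/(M\Delta_k)$ part merges into the first regret term) and the second and third cost terms (per-pull cost $\delta_{k,m}(0)=\mu_{k,m}$ in the teaching regime).

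Finally I would assemble the pieces: sum the learning-regret, teaching-regret, learning-cost, and teaching-cost contributions over all suboptimal $k$ and all clients $m$, add the $O(M)$ slack from the low-probability event (on which regret/cost are trivially $O(MT\cdot 1/T)=O(M)$) exactly as in Theorem~\ref{thm:tal_general}, and collect terms to match the stated bounds. The main obstacle I anticipate is the warm-start step: unlike TAL, where the warm-start history is a single clean block of $\gamma_1$-rewards of a controlled length, in TWL the history of arm $k$ at elimination time is itself the random output of UCB1 run on the shrinking active set, so its length and composition are not deterministic. I would handle this by replacing Lemma~\ref{lem:tal_teaching_UCB}'s set $\Hc_m$ with a maximization over all histories consisting of $\le F(\psi_k)$ copies of rewards in $[0,1]$ on arm $k$, using monotonicity of UCB1's suboptimal-pull count in the warm-start empirical mean and length — so that the deterministic bound $F(\psi_k)=O(\log(T)/(M\Delta_k^2))$ on the history length is all that propagates through. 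A secondary subtlety is that the elimination epochs and the active set are themselves server-side random variables correlated with the clients' trajectories; conditioning on the clean confidence-bound event decouples these, since on that event the elimination schedule is a deterministic function of the $\Delta_k$'s.
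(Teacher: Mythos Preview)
Your proposal is correct and follows essentially the same route as the paper: establish the high-probability event under which each arm $k\neq k_\dagger$ is eliminated by epoch $\psi_k=\lceil\log_2(1/\Delta_k)\rceil$, bound the active-phase pulls $N^1_{k,m}(T)$ by $F(\psi_k)=O(\log(T)/(M\Delta_k^2))$ via Lemma~\ref{lem:tal_learning_UCB} applied to the (shrinking) active set, bound the inactive-phase pulls $N^2_{k,m}(T)$ via the warm-start argument of Lemma~\ref{lem:tal_teaching_UCB} with prior length $N^1_{k,m}(T)$, and then sum over $k$ and $m$ with the $O(M)$ slack for the bad event. The obstacle you anticipate is milder than you suggest: on the good event the elimination schedule is deterministic and every reward arm $k$ receives while active is exactly $\gamma_1=1$ (not a random value in $[0,1]$), so the warm-start history at elimination is precisely $F(\psi_k)$ copies of $\gamma_1$ (up to $+1$) and Lemma~\ref{lem:tal_teaching_UCB} applies directly with $\tau_{k,m}=N^1_{k,m}(T)$---no maximization over arbitrary $[0,1]$-valued histories or monotonicity argument is needed.
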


{The above guarantees can be interpreted in similar ways as those of TAL in Thm.~\ref{thm:tal_UCB}, i.e., one part from learning the global optimal arm and the other part from guiding agents towards it. More importantly, it is noted that with UCB1 clients, TWL strictly outperforms TAL w.r.t. both criteria since the dependency on the minimum gap $\Delta_{\min}$ is replaced by arm-dependent gaps $\Delta_{k}\geq \Delta_{\min}$, which comes precisely from its adaptive design.}

\begin{remark}
    For $\varepsilon$-greedy clients, with $\gamma_1 = \gamma_2 = 0$, the same performance guarantee as Thm.~\ref{thm:tal_epsilon} can be established for TWL because the active and non-active arms are not distinctly treated under this specification, which degrades TWL to TAL. However, experimental results show that better empirical performance is achieved with $\gamma_1 = 1$ and $\gamma_2 = 0$, whose theoretical analyses are left open for future works.
\end{remark}

{
\begin{remark}
    While TWL improves the regret of TAL regarding the dependency on $\Delta_k$, it is unclear whether its dependencies on parameters other than $M$ and $T$ are tight. One the one hand, as mentioned in Remark~\ref{rmk:lower}, a refined lower bound would be instructive in evaluating such tightness. On the other hand, it is equally worth exploring whether a refined upper bound can be obtained, which is left for further investigations.
\end{remark}
}

\section{Experimental Results}\label{sec:exp}
In this section, the proposed algorithms are empirically evaluated against two baselines, NG and NA from Sec.~\ref{sec:warm}, to demonstrate their superiority and generality. 

\subsection{Synthetic Dataset}
First, experimental results with synthetic datasets are reported in Fig~\ref{fig:performance}. In particular, two sets of experiments are performed: (1) the first environment is a fixed instance with $M=5$ clients and $K = 5$ arms, where each client's local model is specified (left to right: arm 1 to arm 5) with the following mean rewards: Client 1--$[0.2, 0.9, 0.1, 0.8, 0.6]$, Client 2--[0.4, 0.1, 0.9, 0.4, 0.8], Client 3--[0.2, 0.2, 0.5, 0.5, 0.9], Client 4--[0.4, 0.3, 0.8, 0.9, 0.4], Client 5--[0.3, 0.5, 0.2, 0.4, 0.8].
The corresponding global game then has the following mean rewards with a gap of $\Delta_{\min} = 0.1$ (left to right: arm 1 to arm 5): [0.3, 0.4,  0.5,  0.6,  0.7];
(2) the second setting is $100$ randomly generated instances with $M=5$ clients and $K = 5$ arms. Especially, the mean reward of each local arm for each client is sampled from a uniform distribution in $[0,1]$. The obtained results from these two sets of environments are reported with different client strategies in Figs.~\ref{fig:ucb_regret}--\ref{fig:mixed_cost} and Figs.~\ref{fig:ucb_random}--\ref{fig:mix_random}, respectively, and discussed in the following.
To facilitate presentations, we denote $\text{TAL}(\gamma_1, \gamma_2)$ (resp. $\text{TWL}(\gamma_1, \gamma_2)$) as TAL (resp. TWL) with specific parameters $\gamma_1$ and $\gamma_2$. {We note that with the randomly generated instances in the second environment, the reported observations are sufficiently general.} 

\textbf{UCB1 clients.} First, with UCB1 clients, from Figs.~\ref{fig:ucb_regret} and \ref{fig:ucb_cost}, it can be observed that the proposed algorithms are capable of converging while the superiority of TWL over TAL is verified. However, as claimed in Sec.~\ref{sec:warm}, the baselines are at two extremes: NG (resp. NA) is almost linear in regret (resp. cost) although performing well w.r.t. cost (resp. regret).
Fig.~\ref{fig:ucb_random} further demonstrates that TAL and TWL strike a balance between regret and cost, while the advantage of TWL is evident again. In particular, their performance scatter plots from $100$ randomly generated instances are concentrated in the diagonal between the two axes. However, the plots of the two baselines are near one axis but far from the other. 

\textbf{$\varepsilon$-greedy clients.} Figs.~\ref{fig:epsilon_regret} and \ref{fig:epsilon_cost} report that TAL and TWL can successfully teach $\varepsilon$-greedy clients with a reasonably low regret and cost at the same time. Somewhat unexpectedly, $\text{TWL}(1,0)$ has a better performance even over the theoretically sound $\text{TAL}(0,0)$ (equivalently, $\text{TWL}(0,0)$), which warrants further investigations with $\varepsilon$-greedy clients. Fig.~\ref{fig:epsilon_random} verifies that the above observations hold in general.

\textbf{TS clients.} Although not theoretically studied, Figs.~\ref{fig:ts_regret}, \ref{fig:ts_cost} and \ref{fig:ts_random} report the performances of the proposed algorithms with TS clients. While converging, the performance of $\text{TAL}(1,0)$ and $\text{TWL}(1,0)$ are highly unstable, which verifies the imbalanced exploration of TS discussed in Sec.~\ref{subsec:tal_analysis_beyond}. On the other hand, $\text{TAL}(0,0)$ has stable and competitive behaviors.

\textbf{Mixed clients.} Beyond one single local strategy, TAL and TWL are also tested with mixed strategies for clients. Especially, with two UCB1 clients, two $\varepsilon$-greedy clients, and one TS client, the results are reported in Figs.~\ref{fig:mixed_regret}, \ref{fig:mixed_cost} and \ref{fig:mix_random}. It can be observed that the proposed designs are capable of effectively guiding the clients to the global optimal arm in the face of mixed client strategies while achieving a good balance between regret and cost. These results further demonstrate the broad applicability of the designs and their appealing property of being client-strategy-agnostic.

{
\subsection{Real-world Dataset}
To further complement the observations obtained from synthetic datasets, the empirical performances of the proposed designs are further evaluated on the MovieLens dataset \cite{harper2015movielens}. The available users and movies in the dataset are both randomly divided into 15 groups to form an FMAB environment with 15 clients and 15 arms. The average movie ratings from each group of users are used to construct their local rewards. Also, the clients are considered to adopt mixed bandit strategies: 5 clients using each choice of UCB1, $\varepsilon$, and TS, respectively. From the results reported in Fig.~\ref{fig:movielens}, the aforementioned key observation is further verified that the proposed designs, i.e., TAL and TWL, are capable of effectively
guiding the clients towards the global optimal arm with a reasonable amount of adjustment cost, i.e., balancing between regret and cost. These results further demonstrate the practicability of the proposed designs.
\begin{figure}[tbh]
	\centering
	\subfigure[Mixed: regrets.]{ \includegraphics[width=0.46\linewidth]{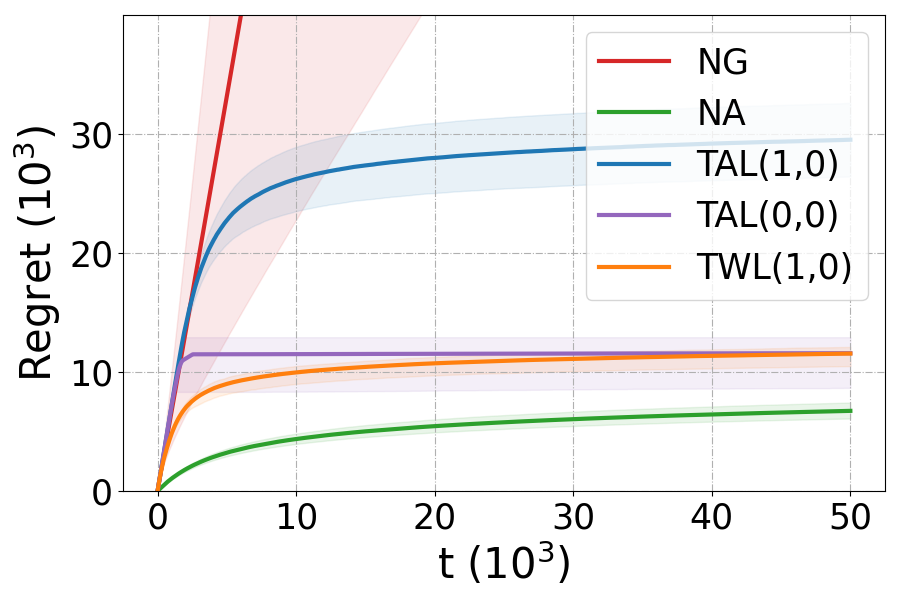}\label{fig:movielens_regret}}
	\subfigure[Mixed: costs.]{ \includegraphics[width=0.46\linewidth]{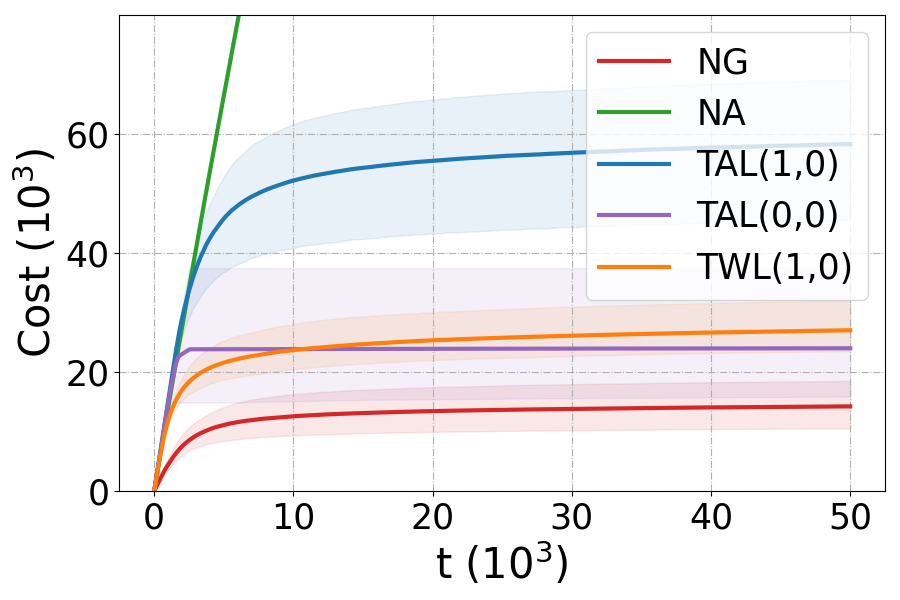}\label{fig:movielens_cost}}
 \caption{{Experimental results on the real-world MovieLens dataset with clients running mixed strategies. Evaluations (a) and (b) are under a fixed $15$-clients-$15$-arms instance, which is extracted by grouping users and movies in the MovieLens dataset. The curves represent the empirically averaged values and the shadowed areas represent the upper and lower $80\%$ confidence intervals. }}
 \label{fig:movielens}
 \end{figure}
}

\section{Conclusions}
A novel idea of reward teaching was proposed to have the server guide autonomous clients in an unknown FMAB environment via reward adjustments, which avoids any changes to the clients' protocols and removes the previous requirement of naive clients in FMAB. Two client-strategy-agnostic algorithms, TAL and TWL, were proposed. The TAL algorithm was designed with two phases to separately encourage and discourage explorations. The TWL algorithm further optimized the performance by breaking the non-adaptive phased structure into a flexible interleaving scheme. General performance analysis was established for TAL when the clients' strategies satisfy certain requirements. Especially, for the representative UCB1 and $\varepsilon$-greedy clients, rigorous analyses showed that TAL strikes a balance between regret and adjustment cost (logarithmic in both metrics), which is order-optimal w.r.t. the natural lower bound. Moreover, the analyses also demonstrated that TWL achieves an improved dependency on the sub-optimality gap than TAL due to its adaptive design. Experimental results further demonstrated the effectiveness and efficiency of the proposed algorithms. Under the reward teaching framework, many interesting questions were left open for further investigations, e.g., theoretical analysis on TAL and TWL with Thompson sampling clients.

\appendices

\section{TAL: Performance Analysis}\label{app:tal}
\subsection{General Analysis: Theorem~\ref{thm:tal_general}}
First, the following good event is established to demonstrate the effectiveness of the proposed confidence bounds.
\begin{lemma}\label{lem:tal_good_event}
	Denoting event $\Ec_F$ as
	\begin{align*}
		\Ec_F: = \left\{\forall \psi\leq T, \forall k\in [K], \left|\hat{\nu}_{k}(\psi)-\nu_{k}\right|\leq 2^{-\psi - 2}\right\}
	\end{align*}
	where $\hat{\nu}_{k}(\psi):= \frac{1}{M}\sum_{m=1}^M\hat{\mu}_{k,m}(\psi)$,
	it holds that $
		\Pb\left(\Ec_F\right) \geq 1-1/T$.
\end{lemma}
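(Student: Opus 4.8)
The plan is to control, for each epoch $\psi$ and each arm $k$, the deviation $|\hat\nu_k(\psi)-\nu_k|$ and then take a union bound over all $(k,\psi)$ pairs. The key observation is that by construction the estimate $\hat\nu_k(\psi)=\frac{1}{M}\sum_{m\in[M]}\hat\mu_{k,m}(\psi)$ is an average of $Mf(\psi)$ i.i.d.\ reward samples, each bounded in $[0,1]$: for each client $m$ the quantity $\hat\mu_{k,m}(\psi)$ uses the rewards $X_{k,m}(\cdot)$ observed strictly between the $F(\psi-1)+1$-st and $F(\psi)$-th pulls of arm $k$ by client $m$. Because these sample windows across different epochs $\psi$ use disjoint blocks of pulls (and different clients have independent local models), each $\hat\mu_{k,m}(\psi)$ is the empirical mean of $f(\psi)$ fresh i.i.d.\ samples with mean $\mu_{k,m}$. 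Crucially, the rewards used are the \emph{true} rewards $X_{k,m}$, not the adjusted ones, so $\Eb[\hat\mu_{k,m}(\psi)] = \mu_{k,m}$ and hence $\Eb[\hat\nu_k(\psi)] = \frac{1}{M}\sum_m \mu_{k,m} = \nu_k$.

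First I would fix $k$ and $\psi$ and apply Hoeffding's inequality to the average of the $Mf(\psi)$ bounded i.i.d.\ samples making up $\hat\nu_k(\psi)$: for any $\epsilon>0$,
\begin{align*}
    \Pb\left(|\hat\nu_k(\psi)-\nu_k|\geq \epsilon\right) \leq 2\exp\left(-2 Mf(\psi)\,\epsilon^2\right).
\end{align*}
Next I would substitute $\epsilon = \text{CB}(\psi) = \sqrt{\log(2KT^2)/(2Mf(\psi))} = 2^{-\psi-2}$; this is exactly the value for which the exponent becomes $-\log(2KT^2)$, so the right-hand side equals $2\cdot\frac{1}{2KT^2} = \frac{1}{KT^2}$. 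Note that the identity $\text{CB}(\psi)=2^{-\psi-2}$ follows from $f(\psi)=\frac{1}{M}2^{2\psi+3}\log(2KT^2)$, which makes $2Mf(\psi) = 2^{2\psi+4}\log(2KT^2)$ and hence $\sqrt{\log(2KT^2)/(2Mf(\psi))}=2^{-(\psi+2)}$. Then I would union bound over all $k\in[K]$ and all $\psi\leq T$ (the epoch counter never exceeds $T$ since at least one pull happens per step), giving a total failure probability at most $KT\cdot\frac{1}{KT^2} = \frac{1}{T}$, so $\Pb(\Ec_F)\geq 1-1/T$. Finally, the good event $\Ec_F$ combined with the definition of $\text{UCB}_k(\psi),\text{LCB}_k(\psi)$ in Eqn.~\eqref{eqn:f_ulcb} immediately yields the sandwich $\text{LCB}_k(\psi)\leq \nu_k\leq \text{UCB}_k(\psi)$ used later.

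The main obstacle is the independence/measurability argument: one must argue carefully that $\hat\mu_{k,m}(\psi)$ is genuinely an average of $f(\psi)$ i.i.d.\ samples of the true reward distribution of arm $(k,m)$, \emph{despite} the fact that the times $N^{-1}_{k,m}(\tau)$ at which these pulls occur are random and determined by the (adjusted) interaction history. The standard resolution is to appeal to the i.i.d.\ structure of the reward tape: one fixes, for each arm and client, an infinite i.i.d.\ reward sequence drawn in advance, and the $\tau$-th pull always reveals the $\tau$-th entry of that sequence; the windowing by pull-count $[F(\psi-1)+1, F(\psi)]$ then selects a deterministic (index-based, not time-based) block of that fixed sequence, which is i.i.d.\ with the correct mean and independent across $\psi$. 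This decouples the randomness of \emph{when} pulls happen from the randomness of \emph{what} rewards are seen, and is exactly why the epoched design with pull-count thresholds $F(\psi)$ was adopted; I would state this reduction explicitly before invoking Hoeffding.
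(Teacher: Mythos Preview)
Your proposal is correct and follows essentially the same route as the paper: apply Hoeffding's inequality to the $Mf(\psi)$ bounded samples forming $\hat\nu_k(\psi)$ to get a per-$(k,\psi)$ failure probability of $1/(KT^2)$, then union bound over $k\in[K]$ and $\psi\le T$. Your additional discussion of the reward-tape argument and the verification that $\text{CB}(\psi)=2^{-\psi-2}$ make the independence step more explicit than the paper's own proof, but the underlying approach is identical.
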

\begin{proof}
	With Hoeffding's inequality and the design that
    \begin{align*}
        \hat{\nu}_{k}(\psi) = \sum\nolimits_{m=1}^M\sum\nolimits_{\tau=F(\psi-1)+1}^{F(\psi)}X_{k,m}(N^{-1}_{k,m}(\tau))/(Mf(\psi)),
    \end{align*}
    at epoch $\psi$, for arm $k$, we have
	\begin{align*}
		\Pb\left(\left|\hat{\nu}_{k}(\psi)-\nu_{k}\right|> 2^{-\psi - 2}\right) &\leq 2\exp(-2Mf(\psi)2^{-2\psi-4})\\
        &= 1/(KT^2).
	\end{align*}
	With a union bound over $\psi \leq T$ and $k \in [K]$, the lemma can be proved.
\end{proof}

\begin{lemma}[Learning Phase in TAL; Restatement of Lemma~\ref{lem:tal_learning_general}]\label{lem:tal_learning_general_restate}
     If $\Pi_m$ is $([K], \gamma_{1}, \underline{\eta}_{m}, \overline{\eta}_m)$-sufficiently exploring for all $m\in [M]$, 
    with probability (w.p.) at least $1- 1/T$, the learning phase ends with $k_\ddagger = k_\dagger$ by time step $T_1$, and the regret and cost in the learning phase of TAL are bounded, respectively, as
    \begin{align*}
        R_{F,1}(T)\leq &\sum\nolimits_{m\in [M]}\sum\nolimits_{k\neq k_\dagger} \Delta_k \cdot \overline{\eta}_{m}\left(T_1; \gamma_1, [K]\right);\\
        C_{F,1}(T)\leq&\sum\nolimits_{m\in [M]}\sum\nolimits_{k\in [K]} \delta_{k,m}(\gamma_1) \cdot \overline{\eta}_{m}\left(T_1; \gamma_1, [K]\right),
    \end{align*}
    where $T_1 \leq \max_{m\in [M]}\left\{\underline{\eta}_{m}^{-1}\left(F(\psi_{\max}) ; \gamma_1, [K]\right)\right\}$.
\end{lemma}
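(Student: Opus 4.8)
The plan is to condition on the high-probability good event $\Ec_F$ of Lemma~\ref{lem:tal_good_event} and then argue deterministically. Since $\text{CB}(\psi)=2^{-\psi-2}$ is exactly the Hoeffding width appearing in $\Ec_F$, on $\Ec_F$ every confidence interval built during the learning phase is valid, i.e.\ $\text{LCB}_k(\psi)\le \nu_k\le \text{UCB}_k(\psi)$ for all $k\in[K]$ and all epochs $\psi\le T$. Correctness of the identified arm is then immediate: if the learning phase terminates at some epoch $\psi$ with $k_\ddagger=j$, the stopping rule gives $\text{LCB}_j(\psi)\ge\text{UCB}_k(\psi)$ for every $k\neq j$, so $\nu_j\ge\text{LCB}_j(\psi)\ge\text{UCB}_k(\psi)\ge\nu_k$, forcing $j=k_\dagger$ (the optimal global arm is unique because $\Delta_{\min}>0$, which is implicit in $\psi_{\max}<\infty$).

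Next I would show the learning phase terminates in time, combining two facts. First, on $\Ec_F$ one has at epoch $\psi$ that $\text{LCB}_{k_\dagger}(\psi)-\text{UCB}_k(\psi)\ge(\nu_{k_\dagger}-\nu_k)-4\,\text{CB}(\psi)=\Delta_k-2^{-\psi}\ge\Delta_{\min}-2^{-\psi}$, which is nonnegative once $2^{-\psi}\le\Delta_{\min}$, i.e.\ by $\psi=\psi_{\max}=\lceil\log_2(1/\Delta_{\min})\rceil$; hence whenever the algorithm reaches epoch $\psi_{\max}$ the stopping test passes there. Second, throughout the learning phase every client observes the constant reward $\gamma_1$ on every arm, so Definition~\ref{def:uniform_exploration} applies with $\Ic=[K]$: after client $m$ has made $\underline{\eta}_m^{-1}(F(\psi_{\max});\gamma_1,[K])$ pulls it has pulled every arm at least $F(\psi_{\max})$ times, which — as $F(\cdot)$ is increasing — also meets the pull requirement of every earlier epoch's test. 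Combining, by time step $T_1\le\max_{m\in[M]}\underline{\eta}_m^{-1}(F(\psi_{\max});\gamma_1,[K])$ (up to a lower-order $O(\psi_{\max})$ needed to advance the epoch counter one step per round, which is dominated by $F(\psi_{\max})$ since $\underline{\eta}_m^{-1}(N)\ge N$) every client has either already triggered a successful stopping test or reached the epoch-$\psi_{\max}$ test, which succeeds; this yields the claimed bound on the end time of the learning phase.

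For the regret and cost I would reuse the sufficiently-exploring upper bound. At the end of the learning phase exactly $T_1$ pulls have been made by each client on $[K]$, so on $\Ec_F$ client $m$ has pulled each arm $k$ at most $\overline{\eta}_m(T_1;\gamma_1,[K])$ times during that phase. Every such pull of a suboptimal arm $k$ contributes $\Delta_k$ to the global regret while pulls of $k_\dagger$ contribute none, so summing over $m$ and $k$ gives $R_{F,1}(T)\le\sum_{m\in[M]}\sum_{k\neq k_\dagger}\Delta_k\,\overline{\eta}_m(T_1;\gamma_1,[K])$. For the cost, note that during the learning phase the clients' decisions are driven only by the constant $\gamma_1$ (and any internal randomness) and are thus independent of the true reward realizations; hence the expected adjustment spent on arm $k$ by client $m$ is $\delta_{k,m}(\gamma_1)$ per pull, and summing yields $C_{F,1}(T)\le\sum_{m\in[M]}\sum_{k\in[K]}\delta_{k,m}(\gamma_1)\,\overline{\eta}_m(T_1;\gamma_1,[K])$.

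I expect the main obstacle to be the bookkeeping interfacing the epoch/confidence-bound machinery with Definition~\ref{def:uniform_exploration}: verifying that the stopping test must pass by epoch $\psi_{\max}$ and that the per-client pull requirement $F(\psi_{\max})$ translates into the stated time-step bound via $\underline{\eta}^{-1}$ (relying on the monotonicity/invertibility built into the sufficiently-exploring notion and on absorbing the $O(\psi_{\max})$ epoch-advancement overhead). The only other formal point is that every conclusion holds on $\Ec_F$, whose complement has probability at most $1/T$ and contributes only $O(M)$ to regret and cost — precisely the slack carried in Theorem~\ref{thm:tal_general}.
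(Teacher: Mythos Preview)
Your proposal is correct and follows essentially the same route as the paper's proof: condition on the good event $\Ec_F$, use confidence-interval validity to show that any arm selected by the stopping rule must be $k_\dagger$, verify the stopping test passes at epoch $\psi_{\max}$ via $\text{LCB}_{k_\dagger}(\psi)-\text{UCB}_k(\psi)\ge\Delta_k-4\,\text{CB}(\psi)\ge 0$, translate the per-arm pull requirement $F(\psi_{\max})$ into a time bound via $\underline{\eta}^{-1}$, and read off the regret and cost bounds from $\overline{\eta}$. Your explicit remarks on the $O(\psi_{\max})$ epoch-advancement overhead and on the independence of the clients' actions from the true rewards (justifying the per-pull cost $\delta_{k,m}(\gamma_1)$) are slightly more careful than the paper's terse treatment, but the substance is identical.
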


\begin{proof}[Proof of Lemma~\ref{lem:tal_learning_general_restate}]
    With event $\Ec_F$ in Lemma~\ref{lem:tal_good_event} happening, we assume the learning phases end at time step $T_1$ such that
    \begin{equation*}
        T_1 \geq \max\nolimits_{m\in [M]}\left\{\underline{\eta}_{m}^{-1}\left(F(\psi_{\max}) ; \gamma_1, [K]\right)\right\}.
    \end{equation*}
    Since each local algorithm $\Pi_m$ is $([K], \gamma_{1}, \underline{\eta}_{m}, \overline{\eta}_m)$-sufficiently exploring and the rewards on all arms are constant $\gamma_1$'s, it holds that $N_{k,m}(T_1) \geq F(\psi_{\max}), \forall k\in [K], \forall m\in [M]$,
    which means that epoch $\psi_{\max}$ is reached. Thus, the confidence bound can be bounded as $\text{CB}(\psi_{\max})\leq \frac{1}{4} \cdot 2^{-\psi_{\max}} \leq \frac{1}{4} \Delta_{\min}$,
    which results in
    \begin{align*}
        \text{LCB}_{\dagger}(\psi_{\max}) &=  \hat\nu_{\dagger}(\psi_{\max}) - \text{CB}(\psi_{\max})\\
        &\geq  \nu_{\dagger} - 2\text{CB}(\psi_{\max})\geq \nu_{\dagger}-\frac{\Delta_{\min}}{2} \\
        &\geq \nu_{k}+\frac{\Delta_{\min}}{2}\geq   \nu_{k}+2\text{CB}(\psi_{\max})\\
        &\geq \hat\nu_{k}(\psi_{\max})+\text{CB}(\psi_{\max}) \\
        &=\text{UCB}_{k}(\psi_{\max}), \qquad \forall k \neq k_{\dagger}.
    \end{align*}
    Thus, the learning phase should already end. Similarly, it can be obtained that arm $k_{\dagger}$ would not be dominated by any other arm; thus $k_{\ddagger} = k_{\dagger}$.
    Then, with the observation that
    \begin{equation*}
        N_{k,m}(T_1) \leq \overline{\eta}_m(T_1; \gamma_1, [K]), \qquad \forall k\in [K], m\in [M],
    \end{equation*}
    the lemma can be proved.
\end{proof}

\begin{lemma}[Teaching Phase in TAL; Restatement of Lemma~\ref{lem:tal_teaching_general}]\label{lem:tal_teaching_general_restate}
    If the event in Lemma~\ref{lem:tal_learning_general_restate} occurs, the regret and cost in the teaching phase of TAL are bounded, respectively, as
    \begin{align*}
        R_{F,2}(T)\leq&\sum_{m\in [M]}\max_{H_m\in \Hc_m}\sum_{k\neq k_\dagger}\Delta_k \cdot \iota_{k}(T; H_m, \Bc_m, \Pi_m);\\
        C_{F,2}(T)\leq& \sum_{m\in [M]}\max_{H_m\in \Hc_m}\sum_{k\neq k_\dagger}\delta_{k,m}(\gamma_2)\cdot \iota_{k}(T; H_m, \Bc_m, \Pi_m),
    \end{align*}
    where $\Bc_m$ denotes an environment with constant rewards as $\gamma_2$ for arm $k \neq k_\dagger$ and stochastic rewards with expectation $\mu_{\dagger,m}$ for arm $k_\dagger$. The set $\Hc_m$ is defined with each element of it as a reward sequence $H_m = \{H_{k,m} : k\in [K]\}$ where $H_{k,m}\in \{\{\gamma_1\}^\tau: \tau \in [\underline{\eta}_m(T_1; \gamma_1, [K]), \overline{\eta}_m(T_1; \gamma_1, [K])]\}$.
\end{lemma}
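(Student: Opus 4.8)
The plan is to recognize the teaching phase of TAL as a ``warm-started'' single-client bandit run, and then invoke Definition~\ref{def:warm_start_pulls} verbatim. First I would condition on the event of Lemma~\ref{lem:tal_learning_general_restate}: on this event the learning phase terminates at some (random) time $T_1$ with $k_\ddagger = k_\dagger$, and each client $m$ has pulled each arm $k$ a number of times $N_{k,m}(T_1) \in [\underline{\eta}_m(T_1;\gamma_1,[K]),\, \overline{\eta}_m(T_1;\gamma_1,[K])]$. The key observation is that during the learning phase the server overwrites every observed reward with the constant $\gamma_1$, so the adjusted history that client $m$ carries into the teaching phase is precisely $H_m = \{H_{k,m}: k\in[K]\}$ with $H_{k,m} = \{\gamma_1\}^{N_{k,m}(T_1)}$; by the pull-count bounds above this $H_m$ is an element of $\Hc_m$.

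Next I would describe the teaching phase from client $m$'s viewpoint. Since $k_\ddagger = k_\dagger$, the adjustment rule~\eqref{eqn:TAL_adjustment} turns the perceived reward of every arm $k \neq k_\dagger$ into the constant $\gamma_2$ and leaves the reward of $k_\dagger$ untouched (mean $\mu_{\dagger,m}$) --- this is exactly the environment $\Bc_m$. Because the learning-phase good event, the stopping time $T_1$, and $H_m$ are all measurable with respect to learning-phase randomness, while the teaching-phase rewards are fresh draws, the conditional law of client $m$'s trajectory over the (at most $T - T_1 \le T$) teaching steps is that of $\Pi_m$ run in $\Bc_m$ with prior input $H_m$. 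Definition~\ref{def:warm_start_pulls}, together with monotonicity of warm-start pulls in the horizon, then gives that the conditional expected number of pulls client $m$ places on an arm $k \neq k_\dagger$ during the teaching phase is at most $\iota_k(T; H_m, \Bc_m, \Pi_m) \le \max_{H'_m \in \Hc_m} \iota_k(T; H'_m, \Bc_m, \Pi_m)$; the outer maximization over $\Hc_m$ absorbs the residual randomness in which admissible history actually occurred.

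Finally I would assemble the two quantities. Splitting $R_{F,2}(T)$ and $C_{F,2}(T)$ over the teaching steps and using $\Eb[Y_{\pi_m(t)}(t)\mid \pi_m(t)] = \nu_{\pi_m(t)}$, each teaching-phase pull of an arm $k \neq k_\dagger$ by client $m$ contributes expected regret $\Delta_k$ and, since the fresh reward draw is independent of the pull decision, expected cost $\Eb[|\gamma_2 - X_{k,m}(t)|] = \delta_{k,m}(\gamma_2)$, whereas a pull of $k_\dagger$ contributes zero to both (its adjustment is $0$ and $\Delta_{k_\dagger} = 0$). Summing over $k \neq k_\dagger$ and $m\in[M]$ and plugging in the per-arm bound from the previous step yields the stated bounds on $R_{F,2}(T)$ and $C_{F,2}(T)$.

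I expect the delicate point to be the reduction in the second step: making rigorous that, conditioned on the random learning-phase outcome (including $T_1$ and all pull counts), the teaching phase is genuinely a warm-started execution of $\Pi_m$ in $\Bc_m$, so that Definition~\ref{def:warm_start_pulls} applies. This hinges on the fact that $\Pi_m$'s decisions depend only on the adjusted history it has seen --- not on the true rewards nor on the server's internal state --- so the conditioning does not distort the teaching-phase environment; the maximization over $\Hc_m$ is exactly the device that converts this conditional, realization-dependent claim into the uniform bound in the lemma.
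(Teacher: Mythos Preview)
Your proposal is correct and follows essentially the same approach as the paper's proof: identify the learning-phase history as an element of $\Hc_m$, recognize the teaching phase as $\Pi_m$ warm-started in $\Bc_m$, and invoke Definition~\ref{def:warm_start_pulls} with monotonicity in the horizon ($T-T_1 \le T$). The paper's own argument is extremely terse (two sentences), so your version actually spells out details---the per-pull regret/cost decomposition and the measurability justification for the conditioning---that the paper leaves implicit.
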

\begin{proof}[Proof of Lemma~\ref{lem:tal_teaching_general_restate}]
    This lemma can be obtained by realizing that if the event in Lemma~\ref{lem:tal_learning_general} happens, at the beginning of the teaching phase, i.e., time step $T_1$, client $m$ has observed constant reward $\gamma_1$ on each arm $k\in [K]$ for at least $\underline{\eta}_m(T_1; \gamma_1, [K])$ and at most $\overline{\eta}_m(T_1; \gamma_1, [K])$ times, which leads to the definition of $\Hc_{m}$.

    Starting at time step $T_1$, the local bandit algorithm $\Pi_m$ can be viewed as interacting with environment $\Bc_m$ with prior input $H_m \in \Hc_m$. By recognizing that with the reward sequence $H_m\in \Hc_m$, $
        \Eb[N_{k,m}(T - T_1)|H_m, \Bc_m] \leq \Eb[N_{k,m}(T)|H_m, \Bc_m] = \iota_{k,m}(T; H_m, \Bc_m, \Pi_m)$,
    the lemma can be proved.
\end{proof}

\begin{theorem}[Overall Performance of TAL; Restatement of Theorem~\ref{thm:tal_general}]\label{thm:tal_general_restate}
    Under the assumption in Lemma~\ref{lem:tal_learning_general_restate}, with $R_{F,1}(T), C_{F,1}(T)$ defined in Lemma~\ref{lem:tal_learning_general_restate} and $R_{F,2}(T), C_{F,2}(T)$ in Lemma~\ref{lem:tal_teaching_general_restate}, the regret and cost of TAL are bounded, respectively, as $R_F(T) \leq R_{F,1}(T) + R_{F,2}(T) + O(M)$ and $C_F(T) \leq C_{F,1}(T) + C_{F,2}(T) + O(M)$.
\end{theorem}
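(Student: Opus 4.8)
The plan is to assemble the two phase-wise bounds of Lemmas~\ref{lem:tal_learning_general_restate} and \ref{lem:tal_teaching_general_restate} and to absorb the failure probability of the confidence bounds into the $O(M)$ term. Fix a client $m$ and write $R_m(T)=\Eb[\sum_{t\in[T]}\Delta_{\pi_m(t)}]$ with the convention $\Delta_{k_\dagger}=0$, and similarly $C_m(T)=\Eb[\sum_{t\in[T]}|\sigma_m(t)|]$. Let $\Ec_F$ be the good event of Lemma~\ref{lem:tal_good_event}, so $\Pb(\Ec_F^c)\le 1/T$. Splitting each expectation over $\Ec_F$ and $\Ec_F^c$ and using that all rewards (hence all $\Delta_k$) and all adjustments $|\sigma_m(t)|=|X'_{\pi_m(t),m}(t)-X_{\pi_m(t),m}(t)|$ lie in $[0,1]$, the contribution of $\Ec_F^c$ is at most $T\cdot\Pb(\Ec_F^c)\le 1$ to each of $R_m(T)$ and $C_m(T)$; summed over the $M$ clients this is exactly the claimed $O(M)$ slack.

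On $\Ec_F$, Lemma~\ref{lem:tal_learning_general_restate} guarantees that TAL switches to the teaching phase at some (random) step $\tau\le T_1$ having set $k_\ddagger=k_\dagger$. I would then write $\sum_{t\in[T]}=\sum_{t\le\tau}+\sum_{\tau<t\le T}$. For the learning-phase sum the rewards shown to $\Pi_m$ are the constant $\gamma_1$, so sufficient exploration gives $N_{k,m}(\tau)\le\overline{\eta}_m(\tau;\gamma_1,[K])\le\overline{\eta}_m(T_1;\gamma_1,[K])$ for every arm $k$ (using monotonicity of $\overline{\eta}_m$ in its first argument together with $\tau\le T_1$); summing $\Delta_k N_{k,m}(\tau)$ over $k\neq k_\dagger$, respectively $\delta_{k,m}(\gamma_1)N_{k,m}(\tau)$ over $k\in[K]$ (using $\Eb[|\gamma_1-X_{k,m}(t)|]=\delta_{k,m}(\gamma_1)$ and the independence of the learning-phase pull pattern from the unrevealed reward realizations), and then over $m$, reproduces exactly $R_{F,1}(T)$ and $C_{F,1}(T)$. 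For the teaching-phase sum I would further condition on client $m$'s history $H_m$ accumulated up to step $\tau$: on $\Ec_F$ this consists, on each arm, of a run of values all equal to $\gamma_1$ of length between $\underline{\eta}_m(T_1;\gamma_1,[K])$ and $\overline{\eta}_m(T_1;\gamma_1,[K])$, i.e. $H_m\in\Hc_m$, and since $k_\ddagger=k_\dagger$ the trajectory of $\Pi_m$ after step $\tau$ is exactly that of running $\Pi_m$ in the environment $\Bc_m$ (reward $\gamma_2$ on every $k\neq k_\dagger$, mean $\mu_{\dagger,m}$ on $k_\dagger$) with prior input $H_m$. By Definition~\ref{def:warm_start_pulls} and Lemma~\ref{lem:tal_teaching_general_restate}, $\Eb[\sum_{\tau<t\le T}\Delta_{\pi_m(t)}\mid\Ec_F,H_m]\le\sum_{k\neq k_\dagger}\Delta_k\,\iota_k(T;H_m,\Bc_m,\Pi_m)$, and likewise with $\delta_{k,m}(\gamma_2)$ weights for the cost (here using $\sigma_m(t)=0$ whenever $\pi_m(t)=k_\dagger$); taking $\max_{H_m\in\Hc_m}$ and summing over $m$ yields $R_{F,2}(T)$ and $C_{F,2}(T)$. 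Adding the learning-phase, teaching-phase, and $\Ec_F^c$ contributions gives $R_F(T)\le R_{F,1}(T)+R_{F,2}(T)+O(M)$ and $C_F(T)\le C_{F,1}(T)+C_{F,2}(T)+O(M)$.

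I expect the only delicate point to be the conditioning in the teaching-phase step: one must verify that, conditionally on $\Ec_F$ and on the realized learning history $H_m$, the law of the process generated by $\Pi_m$ from step $\tau+1$ onward coincides with the warm-started process in $\Bc_m$ that defines $\iota_k$, and that this $H_m$ necessarily belongs to $\Hc_m$ — both of which rest on the high-probability correctness of the identification $k_\ddagger=k_\dagger$ from Lemma~\ref{lem:tal_learning_general_restate}, plus the mild monotonicity of $\overline{\eta}_m$. Everything else — the phase split, the crude $[0,1]$ bounds on $\Delta_k$ and $|\sigma_m(t)|$, and the $\Pb(\Ec_F^c)\le 1/T$ accounting — is routine bookkeeping, so this theorem is essentially the assembly step for the phase-wise guarantees already in place.
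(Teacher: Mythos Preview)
Your proposal is correct and follows essentially the same approach as the paper: split over the good event $\Ec_F$ (where Lemmas~\ref{lem:tal_learning_general_restate} and \ref{lem:tal_teaching_general_restate} combine) and its complement (where the trivial $MT$ bound together with $\Pb(\Ec_F^c)\le 1/T$ gives the $O(M)$ term). Your write-up is in fact more careful than the paper's about the conditioning in the teaching phase and the role of the monotonicity of $\overline{\eta}_m$, but these are precisely the points the paper sweeps under ``can be obtained as the combination of'' the two lemmas.
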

\begin{proof}[Proof of Theorem~\ref{thm:tal_general}]
    When event $\Ec_F$ happens, the regret and cost can be obtained as the combination of Lemmas~\ref{lem:tal_learning_general_restate} and \ref{lem:tal_teaching_general_restate}. Otherwise, the regret and cost can be bounded linearly by $MT$. The lemma can then be proved with the guarantee that $\Pb(\Ec_F)\geq 1- 1/T$ as shown in Lemma~\ref{lem:tal_learning_general_restate}.
\end{proof}

\subsection{UCB1 Clients: Theorem~\ref{thm:tal_UCB}}\label{app:tal_ucb}
\begin{lemma}[Restatement of Lemma~\ref{lem:tal_learning_UCB}]\label{lem:tal_learning_UCB_restate}
    For any $\gamma\in [0,1]$ and set $\Ic\subseteq [K]$, UCB1 is $(\Ic, \gamma, \underline{\eta}, \overline{\eta})$-sufficiently exploring with $
        \underline{\eta}(\tau; \gamma, \Ic)= \lfloor \tau/ |\Ic| \rfloor$ and $\overline{\eta}(\tau; \gamma, \Ic) = \lceil \tau/ |\Ic| \rceil$.
\end{lemma}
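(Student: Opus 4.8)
The plan is to analyze the behavior of UCB1 directly when it faces an environment in which every arm in $\Ic$ returns the deterministic reward $\gamma$. The key observation is that, regardless of how many times an arm $k \in \Ic$ has been pulled, its perceived empirical mean is exactly $\hat\mu'_{k,m}(t) = \gamma$ (an average of copies of $\gamma$ is $\gamma$), so the UCB1 index of any $k \in \Ic$ reduces to $\gamma + \sqrt{2\log(t)/N_{k,m}(t-1)}$. Hence, among the arms in $\Ic$, the index is a strictly decreasing function of the pull count $N_{k,m}(t-1)$, and at any time step UCB1 (restricted to how it orders arms inside $\Ic$) will always prefer the least-pulled arm of $\Ic$. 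The arms outside $\Ic$ are irrelevant to the claim, since Definition~\ref{def:uniform_exploration} only constrains pulls on arms in $\Ic$ given that $\tau$ total pulls have landed in $\Ic$.

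First I would set up the induction on $\tau$, the number of pulls that have so far been allocated to arms in $\Ic$. The inductive claim is that after $\tau$ such pulls, the pull counts $\{N_{k,m} : k \in \Ic\}$ differ by at most one, i.e. each lies in $\{\lfloor \tau/|\Ic|\rfloor, \lceil \tau/|\Ic|\rceil\}$. The base case $\tau = 0$ is trivial. For the inductive step, suppose the counts are balanced to within one after $\tau$ pulls into $\Ic$; consider the next time step at which UCB1 selects an arm of $\Ic$. Since all indices of arms in $\Ic$ share the common empirical value $\gamma$ and the same $\log t$ numerator, the selected arm within $\Ic$ is one achieving the largest $1/\sqrt{N_{k,m}}$, i.e. one with the minimal current count; if all counts are equal this is any of them, and if they already differ by one it must be one of those at the lower value $\lfloor\tau/|\Ic|\rfloor$. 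Either way, incrementing that arm's count keeps the spread at most one and produces counts in $\{\lfloor(\tau+1)/|\Ic|\rfloor, \lceil(\tau+1)/|\Ic|\rceil\}$. This closes the induction and immediately gives $\underline{\eta}(\tau;\gamma,\Ic) = \lfloor \tau/|\Ic|\rfloor$ and $\overline{\eta}(\tau;\gamma,\Ic) = \lceil \tau/|\Ic|\rceil$.

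One technical point worth spelling out is the initialization convention of UCB1: the algorithm typically pulls each arm once at the outset so that $N_{k,m}$ is positive before the index formula is used. During this warm-up each arm in $\Ic$ is pulled exactly once, which is perfectly consistent with the balanced-to-within-one invariant, so the induction can simply be started after the initial round (or, with the convention that an unpulled arm has index $+\infty$, started from $\tau=0$ directly). I would also remark that since the rewards on $\Ic$ are constant, no high-probability qualifier is needed here — the statement holds deterministically, in contrast to the $\varepsilon$-greedy case in Lemma~\ref{lem:tal_learning_epsilon} where the randomized exploration forces an anti-concentration/concentration argument.

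I do not expect a serious obstacle in this lemma; it is essentially a deterministic round-robin argument. The only mild subtlety is handling ties in the $\argmax$ — when several arms in $\Ic$ share the minimal count they also share the maximal index, and an adversarial tie-break could in principle keep choosing the ``same'' arm, but that cannot happen because once an arm is incremented it strictly exceeds the others in count and hence strictly drops below them in index, so the tie-break choice at any single step is immaterial to the invariant. Thus the proof is a clean two-line induction once the constancy of $\hat\mu'_{k,m}$ on $\Ic$ is observed.
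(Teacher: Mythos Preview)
Your proposal is correct and follows essentially the same approach as the paper: both arguments rest on the observation that with constant rewards $\gamma$ on $\Ic$ the UCB1 index of any $k\in\Ic$ reduces to $\gamma+\sqrt{2\log t/N_k}$, so among arms in $\Ic$ the least-pulled one is always preferred, forcing the counts to stay within one of each other. The only cosmetic difference is that the paper phrases the ``spread at most one'' claim as a proof by contradiction (assume $N_k\ge N_{k'}+2$ and look at the last pull of $k$), whereas you run it as a forward induction on $\tau$; the content is identical.
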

\begin{proof}
    The UCB1 algorithm is defined in Sec.~\ref{subsec:tal_ucb} and the subscript $m$ is ignored in the following to denote a general UCB1 algorithm. To prove the lemma, it is essential to obtain that if at time step $t$, $\sum_{k\in \Ic} N_{k}(t) = \tau$, then $\max_{k\in \Ic}N_k(t) - \min_{k\in \Ic} N_k(t) \leq 1$. If this claim does not hold, {there exist arms} $k, k'$ such that $N_k(t) \geq N_{k'}(t) + 2$. Then, at the last time step that the arm $k$ is pulled, denoted as $t'$, it holds that
    \begin{align*}
        \mu'_{k}(t') &+ \sqrt{\frac{2\log(t')}{N_{k}(t')}} \overset{(a)}{\leq} \gamma + \sqrt{\frac{2\log(t')}{N_{k'}(t)+1}} \leq \gamma + \sqrt{\frac{2\log(t')}{N_{k'}(t)}} \\
        &\leq \gamma + \sqrt{\frac{2\log(t')}{N_{k'}(t')}} \overset{(b)}{=} \mu'_{k'}(t') + \sqrt{\frac{2\log(t')}{N_{k'}(t')}},
    \end{align*}
    where steps (a) and (b) leverages the fact that both arm $k$ and $k'$ receive reward $\gamma$'s. A contradiction is thus raised as arm $k$ would not be pulled, and the lemma is proved.
\end{proof}

Then, with Lemma~\ref{lem:tal_learning_UCB_restate}, we can observe that
\begin{align}
    T_1 &\leq K F(\psi_{\max}) = O\left(\frac{K\log(KT)}{M\Delta_{\min}^2}\right). \label{eqn:tal_ucb_T1}
\end{align}

\begin{lemma}[Restatement of Lemma~\ref{lem:tal_teaching_UCB}]\label{lem:tal_teaching_UCB_restate}
    If $\gamma_1 \geq \mu_{\dagger, m} > \gamma_2$ and $\Pi_m$ is UCB1, for all $k\neq k_{\dagger}$, it holds that $
        \max_{H_m\in \Hc_m}\{\iota_{k}(T; H_m, \Bc_m, \Pi_m)\} = O\left(\frac{(\gamma_1 - \gamma_2 )T_1}{K(\mu_{\dagger,m}-\gamma_2)} + \frac{\log(KT)}{(\mu_{\dagger,m} - \gamma_2)^2}\right)$.
\end{lemma}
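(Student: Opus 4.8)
The plan is to adapt the classical UCB1 analysis of \cite{auer2002finite} to a \emph{warm-started} client, carefully tracking the two opposing effects of the $n_0:=N_{k,m}(T_1)$ inherited observations of value $\gamma_1$ that each arm carries into the teaching phase. By Lemma~\ref{lem:tal_learning_UCB_restate} and the definition of $\Hc_m$, for every arm $n_0\in\{\lfloor T_1/K\rfloor,\lceil T_1/K\rceil\}$, so it suffices to bound things uniformly for $n_0\le\lceil T_1/K\rceil$. Since $\gamma_1\ge\mu_{\dagger,m}$, these observations keep arm $k_\dagger$ optimistic; but they also inflate the perceived mean of a suboptimal arm $k$ (whose teaching-phase rewards are the \emph{deterministic} constant $\gamma_2$ in $\Bc_m$), so that extra pulls of $k$ are needed before this bias falls below the shrinking confidence radius. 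The first phenomenon produces the $\log(T)/(\mu_{\dagger,m}-\gamma_2)^2$ term, the second the $(\gamma_1-\gamma_2)T_1/(K(\mu_{\dagger,m}-\gamma_2))$ term.

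\emph{Step 1: optimism of $k_\dagger$.} I would first show that, outside an event of small total mass, the UCB1 index of $k_\dagger$ stays at least $\mu_{\dagger,m}$ throughout the teaching phase. Let $n_0^\dagger\ge 1$ be its warm-start count and $s'$ its number of teaching pulls at step $t$, with stochastic empirical mean $\widehat{\mu}_{\mathrm{st}}(s')$. If $s'=0$ the index is $\gamma_1+\sqrt{2\log t/n_0^\dagger}\ge\gamma_1\ge\mu_{\dagger,m}$. For $s'\ge 1$, on the event $\widehat{\mu}_{\mathrm{st}}(s')\ge\mu_{\dagger,m}-\sqrt{2\log t/s'}$ (which fails with probability $\le t^{-4}$ by Hoeffding), using $\gamma_1\ge\mu_{\dagger,m}$ the full sample mean satisfies $\widehat{\mu}'_{k_\dagger,m}\ge\mu_{\dagger,m}-\sqrt{2s'\log t}/(n_0^\dagger+s')$, and since $\sqrt{2s'\log t}/(n_0^\dagger+s')\le\sqrt{2\log t/(n_0^\dagger+s')}$ (the ratio of squares is $s'/(n_0^\dagger+s')\le 1$), adding the confidence radius $\sqrt{2\log t/(n_0^\dagger+s')}$ yields an index $\ge\mu_{\dagger,m}$. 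The delicate point is that the warm start \emph{shrinks} the effective radius (dilution by $n_0^\dagger$ in the denominator), yet the same shrinkage hits the deviation, so optimism survives.

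\emph{Step 2: elimination of $k$.} Since UCB1 picks the arm of maximal index, whenever $k$ is pulled at step $t$ with current count $N=N_{k,m}(t-1)$ its index is at least that of $k_\dagger$, hence at least $\mu_{\dagger,m}$ on the Step~1 event. Writing $k$'s sample mean as $\gamma_2+n_0(\gamma_1-\gamma_2)/N$, this forces
\[
\frac{n_0(\gamma_1-\gamma_2)}{N}+\sqrt{\frac{2\log t}{N}}\ \ge\ \mu_{\dagger,m}-\gamma_2\ =:\ \Delta'>0 .
\]
Splitting $\Delta'$ evenly between the two summands, such a pull is possible only if $N\le u_k:=\max\{\,2n_0(\gamma_1-\gamma_2)/\Delta',\ 8\log t/\Delta'^2\,\}$, and $t\le T$ makes this a time-independent threshold $u_k=O\big(n_0(\gamma_1-\gamma_2)/\Delta'+\log(T)/\Delta'^2\big)$. (Note $u_k\ge 2n_0$, since $\gamma_1-\gamma_2\ge\mu_{\dagger,m}-\gamma_2=\Delta'$, so the threshold is always compatible with the warm-start count.)

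\emph{Step 3: assembling.} I would then decompose $\iota_k(T;H_m,\Bc_m,\Pi_m)=\Eb[\sum_t\oneb\{\pi_m(t)=k\}]$ into pulls of $k$ occurring while the Step~1 event holds — at most $u_k+1$ of them, since $N_{k,m}$ increments on each — plus pulls occurring when it fails, whose expectation is bounded by $\sum_t\sum_{1\le s'\le t}t^{-4}=O(1)$ in the standard way. Plugging in $n_0\le\lceil T_1/K\rceil$, $\gamma_1-\gamma_2\le 1$, $\log t\le\log T\le\log(KT)$, and absorbing the $O(1)$ and the $O(1/\Delta')$ leftover into $\log(KT)/\Delta'^2$ (valid since $\Delta'\le 1$), then maximizing over $H_m\in\Hc_m$ (which only moves $n_0$ within its two-point range), gives the claimed $O\big((\gamma_1-\gamma_2)T_1/(K(\mu_{\dagger,m}-\gamma_2))+\log(KT)/(\mu_{\dagger,m}-\gamma_2)^2\big)$ bound. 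I expect the main obstacle to be Step~1 combined with pinning down the threshold $u_k$ in Step~2: one must simultaneously exploit that the $\gamma_1$-biased warm start \emph{helps} $k_\dagger$ while controlling that the same warm start dilutes its confidence radius, and then extract the exact form of $u_k$ that cleanly separates the warm-start-offset cost from the usual $\log(T)/\Delta'^2$ exploration cost.
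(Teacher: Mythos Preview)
Your proposal is correct and follows essentially the same approach as the paper: both set a threshold on the pull count of arm $k$ (your $u_k$, the paper's $L_{k,m}$), establish optimism of $k_\dagger$'s index via $\gamma_1\ge\mu_{\dagger,m}$ and Hoeffding, show deterministically that arm $k$'s index falls below $\mu_{\dagger,m}$ once its count exceeds the threshold, and close with the standard $\sum_t\sum_{s'}t^{-4}=O(1)$ union bound. The only cosmetic difference is that you apply Hoeffding to the $s'$ stochastic rewards and then combine algebraically, whereas the paper first replaces the $\gamma_1$'s by $\mu_{\dagger,m}$ via monotonicity and applies Hoeffding to the full $(\tau_{\dagger,m}+n_{\dagger,m})$-term sum; both routes yield the same $t^{-4}$ tail.
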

\begin{proof}[Proof of Lemma~\ref{lem:tal_teaching_UCB_restate}]
	For $H_m$ in the set $\Hc_m$, it contains $\tau_{k,m}$ times reward $\gamma_1$ on each arm $k\in [K]$, where $\tau_{k,m} \in [\lfloor T_1/K \rfloor, \lceil T_1/K \rceil]$. We denote $t_H = \sum_{k\in [K]}\tau_{k,m}\leq T_1 + K$ as the length of reward sequence in $H_m$ and \begin{align*}
        L_{k,m}&: = \frac{(\gamma_1 - \gamma_2) \tau_{k,m}}{4(\mu - \gamma_2)^2} + \frac{2\log(T+t_H)}{4(\mu - \gamma_2)^2}\\
        &\leq \frac{(\gamma_1 - \gamma_2) (T_1/K + 1)}{4(\mu - \gamma_2)^2} + \frac{2\log(T+T_1 + K)}{4(\mu - \gamma_2)^2}.
    \end{align*} 
    It holds that
	\begin{align*}
		&\iota_{k,m}(T; H_m, \Bc_m, \Pi_m)\leq  L_{k,m}(T)\\
  &+ \Eb\left[\sum_{t\in [T]} \oneb\left\{\pi_m(t)=k, N_{k,m}(t-1) > L_{k,m}(T)\right\}\right]\\
		&= L_{k,m}(T) + \sum_{t\in [T]}\Pb\left(\pi_m(t)=k, N_{k,m}(t-1) > L_{k,m}(T)\right).
	\end{align*}
	With UCB1 as $\Pi_m$, it further holds that
	\begin{align*}
	    &\Pb\left(\pi_m(t)=k, N_{k,m}(t-1) > L_{k,m}(T)\right)\\
	    &\leq \Pb\left(\hat\mu'_{k,m}(t-1)+\sqrt{\frac{2\log(t + t_H)}{\tau_{k,m}+ N_{k,m}(t-1)}}\geq \hat\mu'_{\dagger,m}(t-1)+\right.\\
        &\left.\sqrt{\frac{2\log(t + t_H)}{\tau_{\dagger,m} + N_{\dagger,m}(t-1)}}, N_{k,m}(t-1) > L_{k,m}(T)\right)
	\end{align*}
	where the last inequality is due to a union bound.
 
    Let us separately consider $N_{k,m}(t-1) = n_{k,m}\in [L_{k,m}(T), t]$ and $N_{\dagger,m}(t-1) = n_{k,m}\in [t]$. It holds that
    \begin{align*}
        &\Pb\left(\hat\mu'_{\dagger,m}(t-1)+\sqrt{\frac{2\log(t + t_H)}{\tau_{\dagger,m } + n_{\dagger,m}}} \leq  \mu_{\dagger,m} \right)\\
        & = \Pb\left( \frac{\tau_{\dagger,m} \gamma_1 + \sum_{\tau = 1}^{n_{\dagger,m}}X_{\dagger,m}^\tau}{\tau_{\dagger,m} + n_{\dagger,m}}+\sqrt{\frac{2\log(t + t_H)}{\tau_{\dagger,m} + n_{\dagger,m}}}\leq  \mu_{\dagger,m} \right)\\
        & \overset{(a)}{\leq} \Pb\left(\frac{\tau_{\dagger,m}  \mu_{\dagger,m} + \sum_{\tau = 1}^{n_{\dagger, m}}X_{\dagger,m}^{\tau}}{\tau_{\dagger,m} + n_{\dagger,m}}+\sqrt{\frac{2\log(t + t_H)}{\tau_{\dagger,m} + n_{\dagger,m}}}\leq  \mu_{\dagger,m} \right)\\
        & \overset{(b)}{\leq} \frac{1}{(t+t_H)^4},
    \end{align*}
    where inequality (a) is an essential step of ``optimism'' due to $\gamma_1 \geq \mu_{\dagger,m}$ and inequality (b) holds is from Hoeffding's inequality.
    Also, it can be observed that with $n_{k,m}\geq L_{k,m}(T)$, 
	\begin{align*}
	    &\hat\mu'_{k,m}(t-1)+\sqrt{\frac{2\log(t + t_H)}{\tau_{k,m} + n_{k,m}}}\\
     &=  \frac{\tau_{k,m}\cdot \gamma_1 + n_{k,m} \cdot \gamma_2 }{\tau_{k,m} + n_{k,m}}+\sqrt{\frac{2\log(t + t_H)}{\tau_{k,m} + n_{k,m}}}\leq \mu_{\dagger,m}.
	\end{align*}

    Thus, with a union bound, it holds that
    \begin{align*}
        &\Pb\left(\pi_m(t)=k, N_{k,m}(t-1) > L_{k,m}(T)\right) \\
        &\leq \sum\nolimits_{n_{k,m} = L_{k,m}(T)}^t \sum\nolimits_{n_{\dagger,m}\in [t]} \frac{1}{(t+t_H)^4} \leq \frac{1}{t^2}
    \end{align*}
	It is then indicated that
	\begin{align*}
		\Eb\left[N_{k,m}(T)\right]\leq L_{k,m}(T) + \sum\nolimits_{t=1}^T\frac{1}{t^2} \leq L_{k,m}(T) + 2,
	\end{align*}
    which proves  Lemma~\ref{lem:tal_teaching_UCB_restate}.
\end{proof}

\begin{theorem}[TAL with UCB1 clients; Restatement of Theorem~\ref{thm:tal_UCB}]\label{thm:tal_UCB_restate}
    For TAL with $\gamma_1 = 1$ and $\gamma_2 = 0$, if all clients run  UCB1 locally and $\mu_{\dagger,m}\neq 0$ for all $m\in [M]$, it holds that
    \begin{align*}
           R_F(T) &= O\bigg(\sum_{m\in[M]}\sum_{k\neq k_{\dagger}}\bigg[\frac{\Delta_{k}\log(KT)}{\mu_{\dagger,m}M\Delta_{\min}^2}+\frac{\Delta_{k}\log(KT)}{\mu_{\dagger,m}^2}\bigg]\bigg);\\
		C_F(T) &= O\bigg(\sum_{m\in[M]}\sum_{k\in [K]}\frac{(1-\mu_{k,m})\log(KT)}{M\Delta^2_{\min}}\\
  &+\sum_{m\in[M]}\sum_{k\neq k_{\dagger}}\bigg[\frac{\mu_{k,m}\log(KT)}{\mu_{\dagger,m}M\Delta^2_{\min}}+\frac{\mu_{k,m}\log(KT)}{\mu_{\dagger,m}^2}\bigg] \bigg).
       \end{align*}
\end{theorem}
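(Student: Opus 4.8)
The plan is to assemble Theorem~\ref{thm:tal_UCB_restate} by plugging the specific choices $\gamma_1 = 1$ and $\gamma_2 = 0$ into the general decomposition of Theorem~\ref{thm:tal_general_restate}, using the UCB1-specific bounds from Lemmas~\ref{lem:tal_learning_UCB_restate} and \ref{lem:tal_teaching_UCB_restate}. First I would verify that the hypotheses of Lemma~\ref{lem:tal_learning_general_restate} are met: by Lemma~\ref{lem:tal_learning_UCB_restate}, UCB1 is $([K], \gamma_1, \underline{\eta}, \overline{\eta})$-sufficiently exploring with $\underline{\eta}(\tau;\gamma_1,[K]) = \lfloor \tau/K\rfloor$ and $\overline{\eta}(\tau;\gamma_1,[K]) = \lceil \tau/K \rceil$. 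This immediately gives, via Eqn.~\eqref{eqn:tal_ucb_T1}, the bound $T_1 \leq K F(\psi_{\max}) = O(K\log(KT)/(M\Delta_{\min}^2))$, and since each $\overline{\eta}_m(T_1;\gamma_1,[K]) \leq \lceil T_1/K\rceil = O(\log(KT)/(M\Delta_{\min}^2))$, the learning-phase bounds in Lemma~\ref{lem:tal_learning_general_restate} become $R_{F,1}(T) \leq \sum_{m}\sum_{k\neq k_\dagger}\Delta_k \cdot O(\log(KT)/(M\Delta_{\min}^2))$ and $C_{F,1}(T)\leq \sum_m \sum_{k\in[K]} \delta_{k,m}(1)\cdot O(\log(KT)/(M\Delta_{\min}^2))$. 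Here I would note that with $\gamma_1 = 1$, $\delta_{k,m}(1) = \Eb[|1 - X_{k,m}(t)|] = 1 - \mu_{k,m}$ since $X_{k,m}(t)\in[0,1]$, which produces the first term of the cost bound.

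Next I would handle the teaching phase. With $\gamma_1 = 1 \geq \mu_{\dagger,m}$ and $\gamma_2 = 0 < \mu_{\dagger,m}$ (the latter being exactly the hypothesis $\mu_{\dagger,m}\neq 0$), the conditions of Lemma~\ref{lem:tal_teaching_UCB_restate} hold, so for every $k\neq k_\dagger$,
\begin{align*}
\max_{H_m\in\Hc_m}\iota_k(T;H_m,\Bc_m,\Pi_m) = O\left(\frac{(1-0)T_1}{K(\mu_{\dagger,m}-0)} + \frac{\log(KT)}{(\mu_{\dagger,m}-0)^2}\right) = O\left(\frac{T_1}{K\mu_{\dagger,m}} + \frac{\log(KT)}{\mu_{\dagger,m}^2}\right).
\end{align*}
Substituting $T_1/K = O(\log(KT)/(M\Delta_{\min}^2))$ yields $\max_{H_m}\iota_k = O\left(\frac{\log(KT)}{\mu_{\dagger,m}M\Delta_{\min}^2} + \frac{\log(KT)}{\mu_{\dagger,m}^2}\right)$. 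Feeding this into Lemma~\ref{lem:tal_teaching_general_restate} gives $R_{F,2}(T) \leq \sum_m\sum_{k\neq k_\dagger}\Delta_k\cdot O\left(\frac{\log(KT)}{\mu_{\dagger,m}M\Delta_{\min}^2} + \frac{\log(KT)}{\mu_{\dagger,m}^2}\right)$, which is precisely the regret bound claimed. For the teaching-phase cost, $\delta_{k,m}(\gamma_2) = \delta_{k,m}(0) = \Eb[|0 - X_{k,m}(t)|] = \mu_{k,m}$, so $C_{F,2}(T) \leq \sum_m\sum_{k\neq k_\dagger}\mu_{k,m}\cdot O\left(\frac{\log(KT)}{\mu_{\dagger,m}M\Delta_{\min}^2} + \frac{\log(KT)}{\mu_{\dagger,m}^2}\right)$, producing the remaining two terms of the cost bound.

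Finally I would invoke Theorem~\ref{thm:tal_general_restate} to combine $R_F(T) \leq R_{F,1}(T) + R_{F,2}(T) + O(M)$ and $C_F(T) \leq C_{F,1}(T) + C_{F,2}(T) + O(M)$; since every displayed term is already $\Omega(\log(KT))$ while the $O(M)$ correction is a lower-order additive constant under the standing assumption $K, M \ll T$, it is absorbed into the stated bounds, completing the proof. The only genuinely substantive work is bookkeeping: tracking how $T_1$ propagates through both phases and checking the two elementary identities $\delta_{k,m}(1) = 1-\mu_{k,m}$ and $\delta_{k,m}(0) = \mu_{k,m}$. I do not anticipate a real obstacle here — all the heavy lifting was done in Lemmas~\ref{lem:tal_learning_UCB_restate} and \ref{lem:tal_teaching_UCB_restate}; the main thing to be careful about is that the dominant term in the learning-phase regret, $\sum_m\sum_{k\neq k_\dagger}\Delta_k\log(KT)/(M\Delta_{\min}^2)$, is dominated by (or at worst comparable to) the teaching-phase term of the same shape, so it need not appear separately in the final statement, consistent with the remark after Theorem~\ref{thm:tal_UCB} that a non-dominating learning-phase term is hidden.
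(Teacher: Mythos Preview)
Your proposal is correct and follows essentially the same approach as the paper's own proof: instantiate $T_1$ via Eqn.~\eqref{eqn:tal_ucb_T1}, plug Lemma~\ref{lem:tal_learning_UCB_restate} into Lemma~\ref{lem:tal_learning_general_restate} for the learning phase, plug Lemma~\ref{lem:tal_teaching_UCB_restate} into Lemma~\ref{lem:tal_teaching_general_restate} for the teaching phase (using $\delta_{k,m}(1)=1-\mu_{k,m}$ and $\delta_{k,m}(0)=\mu_{k,m}$), and then combine via Theorem~\ref{thm:tal_general_restate}. Your additional remarks about verifying the hypothesis $\gamma_1\geq\mu_{\dagger,m}>\gamma_2$ and about the learning-phase regret term being dominated are accurate and slightly more explicit than the paper's writeup.
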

\begin{proof}[Proof of Theorem~\ref{thm:tal_UCB_restate}]
    From Eqn.~\eqref{eqn:tal_ucb_T1}, it holds that $
        T_1 = O\left(\frac{K\log(KT)}{M\Delta_{\min}^2}\right)$.
    With Lemmas~\ref{lem:tal_learning_general_restate} and \ref{lem:tal_learning_UCB_restate}, it holds that
    \begin{align*}
        R_{F,1}(T)&\leq \sum\nolimits_{m\in [M]}\sum\nolimits_{k\neq k_\dagger} \Delta_k \cdot \overline{\eta}_{m}\left(T_1; \gamma_1, [K]\right) \\
        &= O\left(\sum\nolimits_{m\in [M]}\sum\nolimits_{k\neq k_\dagger}\frac{\Delta_k\log(KT)}{M\Delta_{\min}^2}\right);\\
        C_{F,1}(T)&\leq\sum\nolimits_{m\in [M]}\sum\nolimits_{k\in [K]} \delta_{k,m}(\gamma_1) \cdot \overline{\eta}_{m}\left(T_1; \gamma_1, [K]\right) \\
        &\overset{(a)}{=} O\left(\sum\nolimits_{m\in [M]}\sum\nolimits_{k\in [K]}\frac{(1-\mu_{k,m})\log(KT)}{M\Delta_{\min}^2}\right),
    \end{align*}
    where equation (a) also utilizes that with $\gamma_1 = 1$, $\delta_{k,m}(\gamma_1) = 1-\mu_{k,m}$.

    Then, with Lemmas~\ref{lem:tal_teaching_general_restate} and \ref{lem:tal_teaching_UCB_restate}, it holds that
    \begin{align*}
        &R_{F,2}(T)\leq\sum_{m\in [M]}\max_{H_m\in \Hc_m}\sum_{k\neq k_\dagger}\Delta_k \cdot \iota_{k}(T; H_m, \Bc_m, \Pi_m) \\
        &= O\left(\sum_{m\in [M]}\sum_{k\neq k_{\dagger}}\frac{\Delta_k\log(KMT)}{\mu_{\dagger,m}M\Delta^2_{\min}} + \frac{\Delta_k\log(KMT)}{\mu_{\dagger,m}^2}\right);\\
        &C_{F,2}(T)\leq \sum_{m\in [M]}\max_{H_m\in \Hc_m}\sum_{k\neq k_\dagger}\delta_{k,m}(\gamma_2)\cdot \iota_{k}(T; H_m, \Bc_m, \Pi_m) \\
        &\overset{(a)}{=} O\left(\sum_{m\in[M]}\sum_{k\neq k_{\dagger}}\frac{\mu_{k,m}\log(KT)}{\mu_{\dagger,m}M\Delta^2_{\min}}+\frac{\mu_{k,m}\log(KT)}{\mu_{\dagger,m}^2}\right),
    \end{align*}
    where equation (a) uses the fact that with $\gamma_2 = 0$, $\delta_{k,m}(\gamma_2) = \mu_{k,m}$. The theorem is then proved.
\end{proof}

\subsection{$\varepsilon$-greedy Clients: Theorem~\ref{thm:tal_epsilon}}
\begin{lemma}\label{lem:tal_learning_epsilon_restate}
    For any $\gamma\in [0,1]$, if ties among arms are broken uniformly at random, with probability at least $1-1/T$, $\varepsilon$-greedy is $([K], \gamma, \underline{\eta}, \overline{\eta})$-sufficiently exploring with $\underline{\eta}(\tau; \gamma, [K])$ and $\overline{\eta}(\tau; \gamma, [K]) = O( \tau/K \pm \log(KT))$.
\end{lemma}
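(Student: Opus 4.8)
The plan is to exploit the fact that constant rewards collapse all the arm statistics, reduce the pull counts to a balanced multinomial, and then invoke standard concentration. The key structural observation is this: since every arm in $[K]$ always returns the fixed reward $\gamma$, at any round $t$ in which every arm has already been pulled at least once we have $\hat\mu'_{k,m}(t-1)=\gamma$ for all $k$, so the exploitation branch of $\varepsilon$-greedy faces the argmax set $\argmax_{k}\hat\mu'_{k,m}(t-1)=[K]$ and, by the uniform tie-breaking rule, selects an arm uniformly at random over $[K]$; the exploration branch does so by definition. Hence, conditioned on the history $H'_m(t-1)$, the pull $\pi_m(t)$ is uniform on $[K]$ and independent of the past, and in particular the exact value of $\varepsilon_m(t)$ is irrelevant. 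Consequently the sequence of pulled arms (after a brief initialization, discussed below) is i.i.d.\ uniform on $[K]$, so after $\tau$ total pulls on $[K]$ the count $N_{k,m}(\tau)$ is, up to a small additive offset, a sum of $\tau$ i.i.d.\ $\mathrm{Bernoulli}(1/K)$ variables.

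Given this reduction, the second step is a Chernoff/Bernstein bound: for fixed $k$ and fixed number of pulls $\tau$, $\Pb\big(|N_{k,m}(\tau)-\tau/K|\ge a_\tau\big)\le 2\exp(-c\,a_\tau^2 K/\tau)$ for $a_\tau\le \tau/K$ (with a Poisson-tail variant handling the regime $\tau\le K$), so choosing $a_\tau=\Theta\big(\sqrt{\tau\log(KT)/K}+\log(KT)\big)$ drives this probability below $1/(KT^2)$. Note that $\sqrt{\tau\log(KT)/K}\le \tfrac12(\tau/K+\log(KT))$ by AM--GM, so $a_\tau=O(\tau/K+\log(KT))$, which matches the stated form. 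A union bound over the $K$ arms and over $\tau\le T$ then yields, with probability at least $1-1/T$, that $\lfloor\tau/K\rfloor-a_\tau\le N_{k,m}(\tau)\le \lceil\tau/K\rceil+a_\tau$ simultaneously for all $k$ and all $\tau$; this is exactly the claimed $([K],\gamma,\underline\eta,\overline\eta)$-sufficiently-exploring property (Definition~\ref{def:uniform_exploration}) with $\underline\eta(\tau;\gamma,[K])=\lfloor\tau/K\rfloor-a_\tau$ and $\overline\eta(\tau;\gamma,[K])=\lceil\tau/K\rceil+a_\tau=O(\tau/K\pm\log(KT))$. The high-probability qualifier here (absent for UCB1 in Lemma~\ref{lem:tal_learning_UCB_restate}) is precisely the price of $\varepsilon$-greedy's randomized selection.

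The main obstacle is the initialization/``burn-in'' bookkeeping needed to legitimately claim that every pull is uniform. Two points require care. First, the step ``$\argmax_k\hat\mu'_{k,m}=[K]$'' needs every arm to have been pulled at least once: under the standard $\varepsilon$-greedy convention that pulls each arm once in the first $K$ rounds this is immediate and contributes an offset of exactly one per arm, trivially absorbed into $a_\tau$; for other conventions (e.g.\ treating an unpulled arm optimistically) one must instead argue that the warm-up phase during which the perceived means may differ lasts only $O(K\log(KT))$ rounds with high probability and adds an $O(\log(KT))$ offset to each $N_{k,m}$, again absorbable. Second, one must calibrate the deviation $a_\tau$ so that the $KT$-fold union bound collapses to $1/T$, i.e.\ the per-event budget is $1/(KT^2)$, which forces the $\sqrt{\tau\log(KT)}$-type scaling. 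With these handled, combining $\underline\eta_m^{-1}(F(\psi_{\max});\gamma_1,[K])=O(K\,F(\psi_{\max}))$ with Lemma~\ref{lem:tal_learning_general_restate} gives $T_1=O\!\big(K\log(KT)/(M\Delta_{\min}^2)\big)$, which feeds the downstream analysis of Theorem~\ref{thm:tal_epsilon}.
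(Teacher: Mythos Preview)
Your proposal is correct and follows essentially the same approach as the paper: both reduce the pull sequence under constant rewards and uniform tie-breaking to i.i.d.\ $\mathrm{Bernoulli}(1/K)$ draws and then apply Bernstein's inequality with a deviation of the form $c_1\tau/K + c_2\log(KT)$ (the paper takes $x=\tau/(4K)+(8/3)\log(KT)$ directly, whereas you arrive at the same order via $\sqrt{\tau\log(KT)/K}+\log(KT)$ and AM--GM). You are in fact slightly more careful than the paper on two points: you explicitly handle the initialization needed to ensure $\argmax_k\hat\mu'_{k,m}=[K]$, and you union-bound over $\tau\le T$ as well as over $k\in[K]$ (the paper's proof only union-bounds over $k$), but neither constitutes a different route.
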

\begin{proof}[Proof of Lemma~\ref{lem:tal_learning_epsilon_restate}]
    Since the rewards on each arm are all $\gamma$ (thus the sample means are all $\gamma$) and the ties among arms are broken uniformly at random, the algorithm would pull each arm $k\in [K]$ with equal probability $1/K$. Thus, denoting the number of pulls on an arbitrary arm $k$ by time $\tau$ as $N_k(\tau)$, it holds that $
        N_k(\tau) = \sum_{t\in [\tau]}\oneb\{\pi(t) = k\}$.
    Using Bernstein's inequality and $\Pb(\pi(t) = k) = 1/K$, we can obtain that
    \begin{align*}
        \Pb\left(\left|N_k(\tau) - \frac{\tau}{K} \right|\geq x\right) \leq 2\exp\left(-\frac{x^2}{2\frac{\tau}{K} + \frac{2}{3}x}\right)\leq \frac{1}{KT},
    \end{align*}
    where $x: = \frac{\tau}{4K}+ \frac{8}{3}\log(KT)$.
    
    With a union bound over $k\in [K]$, we can obtain that $\underline{\eta}(\tau; \gamma, [K]) = \frac{3\tau}{4K} - \frac{8}{3}\log(KT)$ and $\overline{\eta}(\tau; \gamma, [K]) = \frac{5\tau}{4K} + \frac{8}{3}\log(KT)$,
    which concludes the proof.
\end{proof}

Using Lemmas~\ref{lem:tal_learning_general_restate} and \ref{lem:tal_learning_epsilon_restate}, we can obtain that with probability $1-1/T$, the learning phase of TAL ends at $T_1$ such that
\begin{align}
    T_1&\leq \max_{m\in [M]}\left\{\underline{\eta}_{m}^{-1}\left(F(\psi_{\max}) ; \gamma_1, [K]\right)\right\} \notag \\
    &= \frac{4K}{3}F(\psi_{\max}) + \frac{32K}{9}\log(KMT) \notag\\
    &= O\left(\frac{K\log(KT)}{M\Delta_{\min}^2} + K\log(KMT)\right) \label{eqn:tal_epsilon_T1}.
\end{align}

\begin{lemma}\label{lem:tal_teaching_epsilon_restate}
    If $\Pi_m$ is $\varepsilon$-greedy and $\mu_{\dagger,m} >\gamma_1 = \gamma_2 = 0$, with probability at least $1-1/T$, it holds that $
        \max_{H_m\in \Hc_m}\left\{\sum\nolimits_{k\neq k_\dagger}\iota_{k,m}(T; H_m, \Bc_m, \Pi_m)\right\} = O\left(\frac{K\log(T)}{\mu_{\dagger,m}^2}\right)$.
\end{lemma}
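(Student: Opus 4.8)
The plan is to exploit the very special structure created by $\gamma_1=\gamma_2=0$. On the good event of Lemma~\ref{lem:tal_good_event} (so that the learning phase behaves as in Lemma~\ref{lem:tal_learning_general_restate} and the sufficiently-exploring property of Lemma~\ref{lem:tal_learning_epsilon_restate} holds), the warm-start sequence $H_m\in\Hc_m$ consists entirely of $0$'s on every arm, and in the teaching environment $\Bc_m$ every sub-optimal arm keeps yielding the constant reward $\gamma_2=0$. Hence, for every teaching-phase round $t$ and every $k\neq k_\dagger$, the perceived empirical mean is \emph{exactly} $\hat\mu'_{k,m}(t)=0$, whereas $\hat\mu'_{\dagger,m}(t)$ becomes and stays strictly positive the moment arm $k_\dagger$ collects its first strictly positive reward. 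Since the argument uses only that $H_m$ is all-zeros, it applies verbatim to every $H_m\in\Hc_m$, so the $\max_{H_m\in\Hc_m}$ is harmless; I would fix an arbitrary such $H_m$ and study a single run of $\varepsilon$-greedy on $\Bc_m$ with prior $H_m$.

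First I would introduce the stopping time $\rho$, the first teaching-phase round at which $\hat\mu'_{\dagger,m}$ is strictly positive, and split the pulls of sub-optimal arms into those occurring at rounds $t\le\rho$ and those at rounds $t>\rho$. For $t>\rho$, arm $k_\dagger$ is the unique $\argmax$, so a sub-optimal arm can be selected only on an exploration round; summing over $t$, the expected number of such pulls is at most $\sum_{t\le T}\varepsilon_m(t)$, which is $O(K\log(T))$ for the schedule $\varepsilon_m(t)=O(K/t)$ (and in any case $O(K\log(T)/\mu_{\dagger,m}^2)$ since $\mu_{\dagger,m}\le 1$). For $t\le\rho$, all $K$ empirical means are tied at $0$, so by uniform tie-breaking --- and since exploration is also uniform --- every round selects $k_\dagger$ with probability exactly $1/K$; conditioned on such a selection, the observed reward is strictly positive with probability $p:=\Pb(X_{\dagger,m}(t)>0)\ge \Eb[X_{\dagger,m}(t)]=\mu_{\dagger,m}$, because $X_{\dagger,m}(t)\in[0,1]$. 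Thus every round with $t\le\rho$ terminates this phase with probability at least $\mu_{\dagger,m}/K$ conditionally on the past, i.e.\ $\rho$ is stochastically dominated by a geometric variable of parameter $\mu_{\dagger,m}/K$; a geometric tail bound then gives $\rho = O\big(\tfrac{K}{\mu_{\dagger,m}}\log T\big)$ with probability at least $1-1/T$, and $\rho$ itself upper-bounds the number of sub-optimal pulls accrued before $\rho$.

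Adding the two contributions yields $\sum_{k\neq k_\dagger}N_{k,m}(T) = O\big(\tfrac{K}{\mu_{\dagger,m}}\log T\big)+O\big(\tfrac{K}{\mu_{\dagger,m}^2}\log T\big) = O\big(\tfrac{K\log T}{\mu_{\dagger,m}^2}\big)$ on the stated $1-1/T$ event; since off that event the contribution to the expectation $\iota_{k,m}$ is at most $T\cdot(1/T)=O(1)$, the same order bound holds for $\max_{H_m\in\Hc_m}\sum_{k\neq k_\dagger}\iota_{k,m}(T;H_m,\Bc_m,\Pi_m)$, using the definitions of $\Hc_m,\Bc_m$ from Lemma~\ref{lem:tal_teaching_general_restate}. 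The part I expect to be most delicate is making the stopping-time/geometric-domination step rigorous --- the per-round success probability $\mu_{\dagger,m}/K$ must be valid conditionally on the entire past filtration, which is exactly where the constant-$0$ rewards on sub-optimal arms and the uniform tie-breaking are indispensable --- together with the bookkeeping around the exploration budget $\sum_t\varepsilon_m(t)$ when the client's schedule is specified w.r.t.\ its local model rather than the teaching environment. The remaining steps (a Chernoff/geometric tail estimate for $\rho$, and a union bound folding all failure events into the $1/T$ budget) are routine.
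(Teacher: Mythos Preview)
Your argument is correct and follows the same overall decomposition as the paper: both proofs hinge on the observation that with $\gamma_1=\gamma_2=0$ every sample mean stays at $0$ until arm $k_\dagger$ draws its first strictly positive reward, after which $k_\dagger$ is the unique argmax and sub-optimal arms are touched only by the $\varepsilon$-exploration budget $\sum_t\varepsilon_m(t)=O(K\log T)$.

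The difference lies in how the waiting time for that first positive reward is controlled. The paper does it in two stages: first Hoeffding gives that $n'=\lceil\log(2T)/(2\mu_{\dagger,m}^2)\rceil$ pulls of $k_\dagger$ suffice to see a positive reward w.p.\ $\ge 1-1/(2T)$, and then Bernstein (on the uniform pulling induced by the all-zero ties) shows $n'$ such pulls are accumulated within $\tau'=O(K\log T/\mu_{\dagger,m}^2)$ rounds. You collapse these two steps into a single geometric-domination argument using $\Pb(X_{\dagger,m}>0)\ge\mu_{\dagger,m}$, which is both more elementary and yields the sharper intermediate bound $\rho=O(K\log T/\mu_{\dagger,m})$; the final $O(K\log T/\mu_{\dagger,m}^2)$ then comes only from absorbing the exploration term via $\mu_{\dagger,m}\le 1$. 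The filtration concern you flag is handled exactly as you describe: before $\rho$ all empirical means are identically $0$, so the per-round success probability is deterministically $1/K$ times $\Pb(X_{\dagger,m}>0)$, independent of the past.
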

\begin{proof}[Proof of Lemma~\ref{lem:tal_teaching_epsilon_restate}]
    Since $\gamma_1  = 0$, the reward sequence $H_{m}\in \Hc_m$ are all zeros. Further with $\gamma_2 = 0$, once a non-zero reward is received on arm $k_{\dagger}$, it will immediately have the highest sample mean.
    First, if arm $k_{\dagger}$ has been played at least $n'= \left\lceil \frac{\log(2T)}{2\mu^2_{\dagger,m}}\right\rceil$ times, where
    Hoeffding's inequality indicates that with a probability of at least $1-\frac{1}{2T}$, there is at least one non-zero reward collected from arm $k_{\dagger}$.   
    
    Furthermore, when there are no non-zero rewards collected on arm $k_{\dagger}$, the arms are pulled with equal probabilities (since they all have zero as sample means due to $\gamma_2 = 0$). With Bernstein's inequality, it further holds that
    \begin{align*}
            &\Pb\left(\sum\nolimits_{t\in [\tau']}\oneb\{\pi_m(t)=k_{\dagger}\} - \frac{\tau'}{K}\leq n' - \frac{\tau'}{K}\right)\\
            &\leq \exp\left(-\left(\frac{\tau'}{K} - n'\right)^2/\left(2\frac{\tau'}{K} + \frac{2}{3}\left(\frac{\tau'}{K} - n'\right)\right)\right) \leq \frac{1}{2T},
        \end{align*}
    where $\tau' = \frac{4K\log(2T)}{3\mu_{\dagger,m}^2} + \frac{32K\log(T)}{9}  = O\left(\frac{K\log(T)}{\mu_{\dagger,m}^2}\right)$.
    
    Thus, with at most $\tau'$ steps, the arm $k_{\dagger}$ would have the highest sample mean. Afterward, the other arms will only be pulled during exploration, i.e., with probability $\varepsilon(t) = O(K/t)$, which would only result in $O(K\log(T))$ pulls in expectation. The lemma is then proved.
\end{proof}

\begin{theorem}[TAL with $\varepsilon$-greedy clients; Restatement of Theorem~\ref{thm:tal_epsilon}]\label{thm:tal_epsilon_restate}
    For TAL with $\gamma_1 = \gamma_2 = 0$, if clients run $\varepsilon$-greedy and break ties uniformly at random, and $\mu_{\dagger,m }\neq 0, \forall m\in [M]$, it holds that $R_F(T) = O\left(\left[\frac{K\Delta_{\max}}{\Delta^2_{\min}}+\sum_{m\in[M]}\frac{K\Delta_{\max}}{\mu_{\dagger,m}^2}\right]\log(KMT)\right)$ and $C_F(T)=O\left(\sum_{m\in[M]}\left[\frac{K\mu_{*,m}}{M\Delta^2_{\min}}+\frac{K\mu_{*,m}}{\mu_{\dagger,m}^2}\right]\log(KMT)\right)$.
\end{theorem}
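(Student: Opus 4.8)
The plan is to feed the $\varepsilon$-greedy characterizations of the clients' behavior into the general decomposition of Theorem~\ref{thm:tal_general_restate} and then simplify the resulting expressions using $\gamma_1=\gamma_2=0$. The first step is to fix the relevant good events: the confidence-bound event $\Ec_F$ of Lemma~\ref{lem:tal_good_event}; the event (call it $\Ec_1$) from Lemma~\ref{lem:tal_learning_epsilon_restate} that the $\varepsilon$-greedy clients are $([K],0,\underline{\eta}_m,\overline{\eta}_m)$-sufficiently exploring with $\underline{\eta}_m(\tau;0,[K]),\overline{\eta}_m(\tau;0,[K]) = O(\tau/K\pm\log(KT))$; and the teaching-phase event $\Ec_2$ from Lemma~\ref{lem:tal_teaching_epsilon_restate} guaranteeing $\sum_{k\neq k_\dagger}\iota_{k,m}(T;H_m,\Bc_m,\Pi_m) = O(K\log(T)/\mu_{\dagger,m}^2)$ for all $m$. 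A union bound gives $\Pb(\Ec_F\cap\Ec_1\cap\Ec_2)\geq 1-O(1/T)$; on the complement, the regret and cost are each bounded crudely by $MT$, so this bad case contributes only an additive $O(M)$ to both quantities. It therefore suffices to control $R_{F,1}+R_{F,2}$ and $C_{F,1}+C_{F,2}$ on the good event through Theorem~\ref{thm:tal_general_restate}.

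For the learning phase, I would substitute the $\varepsilon$-greedy exploration rates together with the bound $T_1 = O(K\log(KT)/(M\Delta_{\min}^2)+K\log(KMT))$ from Eqn.~\eqref{eqn:tal_epsilon_T1} into Lemma~\ref{lem:tal_learning_general_restate}, yielding $\overline{\eta}_m(T_1;0,[K]) = O(\log(KMT)/(M\Delta_{\min}^2)+\log(KMT))$. Using $\sum_{k\neq k_\dagger}\Delta_k\leq K\Delta_{\max}$ and, for the cost, the identity $\delta_{k,m}(0) = \Eb[\lvert 0-X_{k,m}(t)\rvert] = \mu_{k,m}$ together with $\sum_{k\in[K]}\mu_{k,m}\leq K\mu_{*,m}$, I obtain $R_{F,1}(T) = O(K\Delta_{\max}\log(KMT)/\Delta_{\min}^2)$ and $C_{F,1}(T) = O(\sum_{m\in[M]}K\mu_{*,m}\log(KMT)/(M\Delta_{\min}^2))$, which are exactly the first terms of the two claimed bounds.

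For the teaching phase, Lemma~\ref{lem:tal_teaching_general_restate} bounds $R_{F,2}$ and $C_{F,2}$ by the weighted sums $\sum_{m}\max_{H_m}\sum_{k\neq k_\dagger}\Delta_k\,\iota_{k}$ and $\sum_{m}\max_{H_m}\sum_{k\neq k_\dagger}\delta_{k,m}(0)\,\iota_{k} = \sum_{m}\max_{H_m}\sum_{k\neq k_\dagger}\mu_{k,m}\,\iota_{k}$. Since Lemma~\ref{lem:tal_teaching_epsilon_restate} only controls the \emph{aggregate} $\sum_{k\neq k_\dagger}\iota_{k,m}$ (in contrast with the per-arm bound available for UCB1 in Lemma~\ref{lem:tal_teaching_UCB_restate}), I would pull the largest weight out of the sum: $\sum_{k\neq k_\dagger}\Delta_k\,\iota_{k,m}\leq\Delta_{\max}\sum_{k\neq k_\dagger}\iota_{k,m}$ and $\sum_{k\neq k_\dagger}\mu_{k,m}\,\iota_{k,m}\leq\mu_{*,m}\sum_{k\neq k_\dagger}\iota_{k,m}$. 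This gives $R_{F,2}(T) = O(\sum_{m\in[M]}K\Delta_{\max}\log(T)/\mu_{\dagger,m}^2)$ and $C_{F,2}(T) = O(\sum_{m\in[M]}K\mu_{*,m}\log(T)/\mu_{\dagger,m}^2)$, matching the second terms. Adding $R_{F,1}+R_{F,2}+O(M)$ and $C_{F,1}+C_{F,2}+O(M)$ and absorbing the lower-order $O(M)$ and $\log(KT)$ slack into the stated bounds (using $K,M\ll T$) finishes the proof.

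The main obstacle is not any single estimate but the careful bookkeeping forced by the probabilistic nature of the $\varepsilon$-greedy guarantees: unlike the UCB1 case, the exploration rates $\underline{\eta}_m,\overline{\eta}_m$ (hence the random quantities $T_1$ and the sets $\Hc_m$) and the warm-start bound hold only with high probability, so Theorem~\ref{thm:tal_general_restate} must be invoked conditionally on $\Ec_F\cap\Ec_1\cap\Ec_2$, with the complement absorbed via the crude $MT$ bound and the overall failure probability kept at $O(1/T)$. The secondary subtlety is that Lemma~\ref{lem:tal_teaching_epsilon_restate} delivers only the summed warm-start pulls, which is precisely why the teaching-phase terms here carry a $\Delta_{\max}$ (resp.\ $\mu_{*,m}$) factor rather than arm-wise gaps.
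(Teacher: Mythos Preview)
Your proposal is correct and follows essentially the same approach as the paper: plug Lemmas~\ref{lem:tal_learning_epsilon_restate} and~\ref{lem:tal_teaching_epsilon_restate} into the general decomposition of Theorem~\ref{thm:tal_general_restate}, use $\delta_{k,m}(0)=\mu_{k,m}\le\mu_{*,m}$, pull out $\Delta_{\max}$ (resp.\ $\mu_{*,m}$) from the aggregate warm-start bound, and absorb the bad events via the $MT\cdot O(1/T)=O(M)$ trick. Your explicit handling of the intersection $\Ec_F\cap\Ec_1\cap\Ec_2$ is in fact a bit more careful than the paper's; the only minor slip is that $R_{F,1}$ also carries an $MK\Delta_{\max}\log(KMT)$ term (from the $+\log(KMT)$ in $\overline{\eta}_m$ summed over $m$), but this is harmlessly absorbed by the teaching-phase term since $\mu_{\dagger,m}\le 1$.
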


\begin{proof}[Proof of Theorem~\ref{thm:tal_epsilon_restate}]
    From Eqn.~\eqref{eqn:tal_epsilon_T1}, with probability $1-1/T$, it holds that $
        T_1 = O\left(\frac{K\log(KT)}{M\Delta^2_{\min}} + K\log(KMT)\right)$.
    Using $\overline{\eta}(T_1; \gamma_1, [K])$ from Lemma~\ref{lem:tal_learning_epsilon_restate}, it holds that
    \begin{align*}
        &R_{F,1}(T)\leq \sum\nolimits_{m\in [M]}\sum\nolimits_{k\neq k_\dagger} \Delta_k \cdot \overline{\eta}_{m}\left(T_1; \gamma_1, [K]\right) \\
        & = O\left(\frac{K\Delta_{\max}\log(KT)}{\Delta_{\min}^2} + MK\Delta_{\max}\log(KMT)\right);\\
        &C_{F,1}(T)\leq\sum\nolimits_{m\in [M]}\sum\nolimits_{k\in [K]} \delta_{k,m}(\gamma_1) \cdot \overline{\eta}_{m}\left(T_1; \gamma_1, [K]\right) \\
        &\overset{(a)}{=} O\left(\frac{K\mu_{*,m}\log(KT)}{\Delta_{\min}^2} + MK\mu_{*,m}\log(KMT)\right),
    \end{align*}
    where step (a) leverages the fact that with $\gamma_1 = 0$, $\delta_{k,m}(\gamma_1) = \mu_{k,m}\leq \mu_{*,m}$.

    Furthermore, combining Lemmas~\ref{lem:tal_teaching_general_restate} and \ref{lem:tal_teaching_epsilon_restate}, with probability $1- 1/T$, it holds that
    \begin{align*}
        R_{F,2}(T) = O\left(\sum\nolimits_{m\in [M]}\frac{K\Delta_{\max}\log(MT)}{\mu_{\dagger,m}^2}\right);\\
        C_{F,2}(T)\overset{(a)}{=} O\left(\sum\nolimits_{m\in [M]} \frac{K\mu_{*,m}\log(MT)}{\mu_{\dagger,m}^2}\right),
    \end{align*}
    where step (a) leverages the fact that with $\gamma_2 = 0$, $\delta_{k,m}(\gamma_2) = \mu_{k,m}\leq \mu_{*,m}$. Putting these two observations into Theorem~\ref{thm:tal_general_restate}, the theorem is then proved.
\end{proof}

\section{TWL: Performance Analysis}
\begin{lemma}[Arm Elimination in TWL]\label{lem:twl_arm_elim}
    Denote event $\Ec_{G}$ as
    \begin{align*}
        \Ec_{G} = &\textup{\{each arm $k\neq k_{\dagger}$ is eliminated from the active arm set }\\
        &\textup{$\Upsilon$  in TWL by the end of epoch $\psi_k$\}},
    \end{align*}
    where $\psi_{k}: = \left\lceil\log_2(1/\Delta_{k})\right\rceil$, it holds that $\Pb(\Ec_G) \geq 1 - 1/T$.
\end{lemma}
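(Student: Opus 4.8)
The plan is to reduce the random event $\Ec_G$ to a deterministic consequence of a single high‑probability concentration event, exactly as in the learning‑phase analysis of TAL. First I would introduce the TWL counterpart $\Ec_F$ of the event in Lemma~\ref{lem:tal_good_event}: that $|\hat\nu_k(\psi)-\nu_k|\le \text{CB}(\psi)=2^{-\psi-2}$ for every epoch $\psi\le T$ that TWL reaches and every arm $k$ still active at that epoch. The key point is that TWL's epoch structure preserves exactly the independence that makes Lemma~\ref{lem:tal_good_event} go through: for an arm $k$ active at epoch $\psi$, $\hat\mu_{k,m}(\psi)$ uses only the $f(\psi)$ fresh samples collected between its $F(\psi-1)+1$‑st and $F(\psi)$‑th pulls on model $m$, and any arm active at epoch $\psi$ has been active — hence sampled under this blocked scheme — at all earlier epochs as well, since the update only ever removes arms. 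So the termwise Hoeffding bound $2\exp(-2Mf(\psi)2^{-2\psi-4})=1/(KT^2)$ from Lemma~\ref{lem:tal_good_event} still applies, and a union bound over the at most $KT$ relevant pairs $(\psi,k)$ gives $\Pb(\Ec_F)\ge 1-1/T$. It then remains to show $\Ec_F\subseteq\Ec_G$, i.e., that on $\Ec_F$ the eliminations behave correctly.

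For the deterministic part I would argue two claims. First, $k_\dagger$ is never removed: whenever the update at some epoch $\psi$ runs with $k_\dagger\in\Upsilon$, $\text{UCB}_{k_\dagger}(\psi)=\hat\nu_{k_\dagger}(\psi)+\text{CB}(\psi)\ge\nu_\dagger$, while $\text{LCB}_k(\psi)=\hat\nu_k(\psi)-\text{CB}(\psi)\le\nu_k\le\nu_\dagger$ for every $k\in\Upsilon$, so $k_\dagger$ passes its survival test; consequently $k_\dagger\in\Upsilon$ at all times and the elimination test stays ``live'' while $|\Upsilon|>1$. Second, a suboptimal arm $k$ is eliminated by epoch $\psi_k$: if it is still active when the epoch‑$\psi_k$ update runs (if it was dropped earlier we are done), then since $4\,\text{CB}(\psi_k)=2^{-\psi_k}\le\Delta_k$ by the choice $\psi_k=\lceil\log_2(1/\Delta_k)\rceil$, on $\Ec_F$ we get
\begin{align*}
 \text{LCB}_{k_\dagger}(\psi_k)&\ge\nu_\dagger-2\,\text{CB}(\psi_k)=\nu_k+\Delta_k-2\,\text{CB}(\psi_k)\\
 &\ge\nu_k+2\,\text{CB}(\psi_k)\ge\hat\nu_k(\psi_k)+\text{CB}(\psi_k)=\text{UCB}_k(\psi_k),
\end{align*}
so $k$ fails its survival test against $k_\dagger\in\Upsilon$ and is dropped from $\Upsilon$ at the end of epoch $\psi_k$, which is precisely what $\Ec_G$ asserts.

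The step I expect to require the most care is not the Hoeffding estimate but two places where the argument quietly uses that the epoch counter keeps advancing. To know the epoch‑$\psi_k$ update is actually triggered while $k$ is still active, one needs every active arm pulled $F(\psi_k)$ times by every client — a property of the clients' strategies, not of the server. For the UCB1 instantiation in Thm.~\ref{thm:twl_UCB} this follows from the balanced‑exploration behavior under the constant rewards $\gamma_1$ (the analogue of Lemma~\ref{lem:tal_learning_UCB} applied to the current active set, together with the fact that eliminated arms receive the smaller constant $\gamma_2$); in full client‑agnostic generality this is where a sufficiently‑exploring‑type assumption must be invoked and the reading of $\Ec_G$ adjusted accordingly. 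The other minor point is a boundary issue: the display above yields $\text{LCB}_{k_\dagger}(\psi_k)\ge\text{UCB}_k(\psi_k)$, whereas triggering removal wants the strict inequality; this is handled either by the generic strict gap $2^{-\psi_k}<\Delta_k$ when $\log_2(1/\Delta_k)\notin\Zb$, or simply by deferring elimination to epoch $\psi_k+1$, which is absorbed into the downstream $O(\cdot)$ bounds. Everything else — the union bound and the monotone remove‑only nature of the update — is routine.
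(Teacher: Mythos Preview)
Your proposal is correct and follows essentially the same route as the paper: establish the concentration event $|\hat\nu_k(\psi)-\nu_k|\le 2^{-\psi-2}$ for all epochs and active arms via Hoeffding plus a union bound (as in Lemma~\ref{lem:tal_good_event}), then deduce deterministically that $k_\dagger$ survives and each $k\neq k_\dagger$ is dominated by epoch $\psi_k$ through the same chain $\text{LCB}_{k_\dagger}(\psi_k)\ge\nu_\dagger-2\,\text{CB}(\psi_k)\ge\nu_k+2\,\text{CB}(\psi_k)\ge\text{UCB}_k(\psi_k)$. You are in fact more careful than the paper, which does not explicitly address whether epoch $\psi_k$ is actually reached; the paper silently defers that to the client-specific Lemma~\ref{lem:twl_learning_UCB}, exactly as you anticipated.
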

\begin{proof}[Proof of Lemma~\ref{lem:twl_arm_elim}]
    First, similar to Lemma~\ref{lem:tal_good_event}, we can establish that with probability at least $1-1/T$, it holds that
	\begin{align*}
		\left|\hat{\nu}_{k}(\psi)-\nu_{k}\right|\leq \text{CB}(\psi) = 2^{-\psi - 2}, \qquad \forall \psi \leq T, \forall k\in \Upsilon_{\psi},
	\end{align*}
    where $\Upsilon_{\psi}$ denotes the active arm set in epoch $\psi$. 
    Based on this event, we can first observe that arm $k_{\dagger}$ would not be eliminated. Furthermore, at the end of epoch $\psi_k$, if arm $k\neq k_{\dagger}$ is not eliminated, both arm $k_{\dagger}$ and arm $k$ would be active. However, we can observe that
    \begin{align*}
        \text{LCB}_{\dagger}(\psi_{k}) &=  \hat\nu_{\dagger}(\psi_{k}) - \text{CB}(\psi_{k})\geq  \nu_{\dagger} - 2\text{CB}(\psi_{k})\\
        &\geq \nu_{\dagger}-\frac{\Delta_{k}}{2} \geq \nu_{k}+\frac{\Delta_{k}}{2}\geq   \nu_{k}+2\text{CB}(\psi_{k})\\
        & \geq \hat\nu_{k}(\psi_k)+\text{CB}(\psi_k) =\text{UCB}_{k}(\psi_k),
    \end{align*}
    which means arm $k$ should already be eliminated. 
\end{proof}

\begin{lemma}[Active Arms in TWL]\label{lem:twl_learning_UCB}
    If $\Pi_m$ is UCB1, for any $\gamma_1$ and $\gamma_2 $, with probability at least $1 - 1/T$, for all $k\neq k_\dagger$, it holds that $
        N_{k,m}^1(T) := \sum_{t\in [T]}\oneb\{\pi_m(t) = k, k \in \Upsilon(t)\}=O\left(\frac{\log(KT)}{M\Delta_k^2}\right)$,
    where $\Upsilon(t)$ denotes the active arm set at time step $t$ and $\psi(t)$ denotes the epoch index at time step $t$.
\end{lemma}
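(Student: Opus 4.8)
The plan is to prove $N_{k,m}^1(T)\le F(\psi_k)+O(1)$ with $\psi_k:=\lceil\log_2(1/\Delta_k)\rceil$, conditioned on the arm-elimination event $\Ec_G$ of Lemma~\ref{lem:twl_arm_elim} (which has probability at least $1-1/T$), and then to substitute the closed form of $F(\cdot)$. The first step is to reduce to a single ``clean'' trajectory. On $\Ec_G$, arm $k_\dagger$ is never eliminated and arm $k$ leaves $\Upsilon$ by the end of epoch $\psi_k$; hence, whenever $k\in\Upsilon(t)$ we have $\{k,k_\dagger\}\subseteq\Upsilon(t)$, so $|\Upsilon(t)|\ge 2$ and, by the TWL adjustment rule, every active arm then returns the constant reward $\gamma_1$ and every eliminated arm the constant reward $\gamma_2$. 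Thus, as long as some suboptimal arm is active, the reward a client observes depends only on whether the pulled arm is currently active, not on the client's index; an induction over $t$ (base: empty history; step: identical adjusted histories give identical UCB1 choices, hence identical pulls, hence identical rewards) then shows all $M$ clients run exactly the same trajectory, so the global epoch counter advances for all of them at the same instant. This rules out a fast client ``overshooting'' on active arms while the algorithm waits for slower clients.

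Next I would establish a per-epoch count bound. Fix an epoch $\psi$ with $k\in\Upsilon_\psi$. Every arm of $\Upsilon_\psi$ has been active throughout epochs $1,\dots,\psi$ and therefore has perceived sample mean exactly $\gamma_1$, so the contradiction argument in the proof of Lemma~\ref{lem:tal_learning_UCB} applies to any two active arms (if one led another by two pulls, its UCB index at its last pull would have been strictly below the other's) and the active-arm counts stay within $O(1)$ of one another --- a property unaffected by pulls on inactive arms, which can only postpone, never inflate, active-arm counts. Since the epoch advances at the first step at which \emph{every} active arm has reached $F(\psi)$ pulls for \emph{every} client, and all clients move in lockstep, at the end of epoch $\psi$ each active arm has been pulled at most $F(\psi)+O(1)$ times by each client. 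Consequently, if arm $k$ is removed at the end of some epoch $\psi^\star\le\psi_k$ then $N_{k,m}^1(T)=N_{k,m}(t_{\psi^\star})\le F(\psi^\star)+O(1)\le F(\psi_k)+O(1)$, and if the epoch counter never reaches $\psi_k$ within $T$ steps the same bound still holds since at time $T$ arm $k$ has been pulled at most $F(\psi(T))+O(1)\le F(\psi_k)+O(1)$ times.

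It remains to evaluate $F(\psi_k)$: with $f(\tau)=\frac{8\cdot 4^{\tau}\log(2KT^2)}{M}$ one gets $F(\psi_k)=\sum_{\tau=1}^{\psi_k}f(\tau)=O\!\left(\frac{4^{\psi_k}\log(KT)}{M}\right)$, and $2^{\psi_k}\le 2/\Delta_k$ (hence $4^{\psi_k}\le 4/\Delta_k^2$) yields $N_{k,m}^1(T)=O\!\left(\frac{\log(KT)}{M\Delta_k^2}\right)$ on $\Ec_G$, i.e.\ with probability at least $1-1/T$; since $\Ec_G$ is a single event, this holds for all $k\neq k_\dagger$ simultaneously.

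The main obstacle, I expect, is making the first two steps fully rigorous: turning ``all clients behave identically'' into a clean deterministic induction, pinning down exactly how far above $F(\psi)$ an active arm's count can rise before the epoch advances, and verifying that pulls of inactive arms (possible when $\gamma_2$ is not small) do not disturb the balance argument among active arms. The remaining pieces --- the elimination timing from Lemma~\ref{lem:twl_arm_elim} and the geometric sum for $F(\psi_k)$ --- are routine.
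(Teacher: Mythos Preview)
Your proposal is correct and follows essentially the same route as the paper: invoke the balance property of UCB1 on constant rewards (Lemma~\ref{lem:tal_learning_UCB}) to argue that active-arm counts stay near $F(\psi)$ at each epoch boundary, use the elimination event $\Ec_G$ of Lemma~\ref{lem:twl_arm_elim} to cap the last epoch in which $k$ is active at $\psi_k$, and then evaluate $F(\psi_k)$ via the geometric sum. The paper's proof is a two-sentence sketch that asserts ``each client $m$ pulls each active arm $k\in\Upsilon(\psi)$ the same $F(\psi)$ times'' and jumps directly to $N_{k,m}^1(T)\le F(\psi_k)$; the client-synchronization induction and the discussion of inactive-arm pulls that you flag as obstacles are precisely the details the paper elides, so you are being more careful but not doing anything different.
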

\begin{proof}[Proof of Lemma~\ref{lem:twl_learning_UCB}]
    Using the same procedure in Lemma~\ref{lem:tal_learning_UCB_restate}, with constant reward $\gamma_1$'s for active arms and constant reward $\gamma_2$'s for inactive arms, it can be observed that at the end of epoch $\psi$, each client $m$ pulls each active arm $k\in \Upsilon(\psi)$ the same $F(\psi)$ times. As arm $k\neq k_\dagger$ is eliminated from the active arm set by the end of phase $\psi_k$ based on event $\Ec_G$ introduced in Lemma~\ref{lem:twl_arm_elim}, it holds that $
        N_{k,m}^1(T)  \leq F(\psi_k) =  O\left(\frac{\log(KT)}{M\Delta_k^2}\right)$,
    which concludes the proof.
\end{proof}

\begin{lemma}[Inactive Arms in TWL]\label{lem:twl_teaching_UCB}
    If $\Pi_m$ is UCB1, for any $\gamma_1$ and $\gamma_2$ such that $\gamma_1 \geq \mu_{\dagger, m} > \gamma_2$, with probability at least $1 - 1/T$, for all $k\neq k_\dagger$, it holds that $
        N_{k,m}^2(T) := \sum_{t\in [T]}\oneb\{\pi_m(t) = k, k \notin \Upsilon(t)\}= O\left(\frac{(\gamma_1 - \gamma_2 )N^{1}_{k,m}(T)}{(\mu_{\dagger,m}-\gamma_2)} + \frac{\log(KT)}{(\mu_{\dagger,m} - \gamma_2)^2}\right)$.
\end{lemma}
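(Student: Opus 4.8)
The plan is to adapt the warm-start argument behind Lemma~\ref{lem:tal_teaching_UCB_restate} to the interleaved structure of TWL. Throughout I would condition on the event $\Ec_G$ of Lemma~\ref{lem:twl_arm_elim} together with the high-probability event of Lemma~\ref{lem:twl_learning_UCB}; on their intersection (which holds with probability at least $1-1/T$ after a union bound and a choice of constants) arm $k_\dagger$ is never eliminated, and $N^1_{k,m}(T)$ is a frozen quantity --- no further ``active'' pulls of arm $k$ occur once it has left $\Upsilon$, since $\Upsilon$ only shrinks --- bounded by $F(\psi_k)$. Write $s:=N^1_{k,m}(T)$. Under the adjustment rule of TWL, every pull of arm $k$ while inactive yields the perceived reward $\gamma_2$, so after $n$ such pulls the perceived sample mean of arm $k$ is exactly $\frac{s\gamma_1+n\gamma_2}{s+n}=\gamma_2+\frac{s(\gamma_1-\gamma_2)}{s+n}$, and its UCB1 index at a time step $t$ after its elimination equals $\gamma_2+\frac{s(\gamma_1-\gamma_2)}{s+n}+\sqrt{2\log t/(s+n)}$.

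The next step is to establish \emph{optimism} for arm $k_\dagger$. Because $\Upsilon$ is monotone and $k_\dagger$ is always active, the perceived reward history of $k_\dagger$ is a block of $\gamma_1$'s (collected while $|\Upsilon|>1$) followed by genuine stochastic samples of mean $\mu_{\dagger,m}$ (collected once $k_\dagger$ is the sole active arm). Using $\gamma_1\ge\mu_{\dagger,m}$ to lower-bound the deterministic block by $\mu_{\dagger,m}$, and Hoeffding's inequality on the stochastic part, together with a union bound over the current pull count $N_{\dagger,m}(t-1)$, over the (data-dependent) split point, and over $t$, I would obtain that with probability at least $1-O(1/T)$ the UCB1 index of $k_\dagger$ is at least $\mu_{\dagger,m}$ at every time step, with per-step failure probability $O(1/t^2)$ --- exactly as in step (b) of the proof of Lemma~\ref{lem:tal_teaching_UCB_restate}.

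Then I would set the threshold
\[
L_{k,m}(T):=\frac{2(\gamma_1-\gamma_2)\,N^1_{k,m}(T)}{\mu_{\dagger,m}-\gamma_2}+\frac{8\log(KT)}{(\mu_{\dagger,m}-\gamma_2)^2},
\]
and observe that if arm $k$ is inactive with $N_{k,m}(t-1)-s>L_{k,m}(T)$, then both $\frac{s(\gamma_1-\gamma_2)}{s+n}<\frac{\mu_{\dagger,m}-\gamma_2}{2}$ and $\sqrt{2\log t/(s+n)}<\frac{\mu_{\dagger,m}-\gamma_2}{2}$ (using $t\le T\le KT$), so the UCB1 index of arm $k$ is strictly below $\gamma_2+(\mu_{\dagger,m}-\gamma_2)=\mu_{\dagger,m}$, hence below that of $k_\dagger$ on the optimism event; arm $k$ therefore cannot be selected at such a step. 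Combining this with the standard peeling bookkeeping yields
\[
N^2_{k,m}(T)\le L_{k,m}(T)+\sum_{t\in[T]}\Pb\big(k_\dagger\text{ not optimistic at }t\big)\le L_{k,m}(T)+O(1),
\]
which is the claimed bound, since $N^1_{k,m}(T)$ enters linearly and $\log t=O(\log(KT))$.

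The step I expect to be the main obstacle is the optimism argument for $k_\dagger$: its reward record mixes deterministic $\gamma_1$'s with stochastic samples at a split point that is itself produced by the random elimination schedule, so the concentration must be decoupled from the algorithm's trajectory by a union bound over all candidate split points --- the deterministic $\gamma_1$-terms contribute no variance, which is precisely what makes this go through. The remaining ingredients --- the monotonicity of $\Upsilon$ guaranteeing that $s$ is frozen after elimination, and the arithmetic verifying that $L_{k,m}(T)$ dominates both the ``offset the warm-start history'' regime and the ``converge to $k_\dagger$'' regime --- are routine.
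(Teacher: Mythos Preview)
Your proposal is correct and follows essentially the same approach as the paper, which simply states that the lemma ``can be established following the same procedure as Lemma~\ref{lem:tal_teaching_UCB_restate}.'' You have in fact written out more detail than the paper supplies---in particular your explicit handling of the data-dependent split point in $k_\dagger$'s perceived reward history via a union bound---but the core ingredients (optimism of $k_\dagger$ via $\gamma_1\ge\mu_{\dagger,m}$ and Hoeffding, the threshold $L_{k,m}(T)$ controlling both the warm-start offset and the confidence radius, and the final peeling argument) are exactly those of Lemma~\ref{lem:tal_teaching_UCB_restate}.
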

\begin{proof}[Proof of Lemma~\ref{lem:twl_teaching_UCB}]
    This lemma can be established following the same procedure as Lemma~\ref{lem:tal_teaching_UCB_restate}.
\end{proof}

\begin{theorem}[TWL with UCB1 clients; Restatement of Theorem~\ref{thm:twl_UCB}]\label{thm:twl_UCB_restate}
    For TWL with $\gamma_1 = 1$ and $\gamma_2 = 0$, if all clients  run UCB1 locally and $\mu_{\dagger,m}\neq 0$ for all $m\neq [M]$, it holds that 
    \begin{align*}
		R^{\textup{TWL}}_F(T) &= O\bigg(\sum_{m\in[M]}\sum_{k\neq k_{\dagger}}\bigg[\frac{\log(T)}{\mu_{\dagger,m}M\Delta_{k}}+\frac{\Delta_{k}\log(KT)}{\mu_{\dagger,m}^2}\bigg]\bigg),\\
		C^{\textup{TWL}}_F(T) &= O\bigg(\sum_{m\in[M]}\sum_{k\in [K]}\frac{(1-\mu_{k,m})\log(KT)}{M\Delta^2_{k}}\\
  &+\sum_{m\in[M]}\sum_{k\neq k_{\dagger}}\bigg[\frac{\mu_{k,m}\log(KT)}{\mu_{\dagger,m}M\Delta^2_{k}}+\frac{\mu_{k,m}\log(KT)}{\mu_{\dagger,m}^2}\bigg]\bigg).
	\end{align*}
\end{theorem}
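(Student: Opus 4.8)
The plan is to reduce the theorem to a clean combination of the three TWL lemmas already in hand: the arm-elimination guarantee (Lemma~\ref{lem:twl_arm_elim}), the active-arm pull bound (Lemma~\ref{lem:twl_learning_UCB}), and the inactive-arm pull bound (Lemma~\ref{lem:twl_teaching_UCB}). First I would condition on the intersection $\Ec$ of the good events of these three lemmas; by a union bound $\Pb(\Ec)\ge 1-3/T$, and on $\Ec^{c}$ the regret and cost are each at most $MT$, contributing only an additive $O(M)$ once multiplied by the failure probability. On $\Ec$, arm $k_\dagger$ is never eliminated, so for every suboptimal arm $k\neq k_\dagger$ the total pull count splits as $N_{k,m}(T)=N^1_{k,m}(T)+N^2_{k,m}(T)$ (active vs.\ inactive pulls), and moreover ``active'' for such a $k$ always coincides with $|\Upsilon|>1$ since $k_\dagger\in\Upsilon$ throughout.

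For the regret I would write $R^{\textup{TWL}}_F(T)\le\sum_{m}\sum_{k\neq k_\dagger}\Delta_k\,\big(\Eb[N^1_{k,m}(T)]+\Eb[N^2_{k,m}(T)]\big)+O(M)$, since pulls of $k_\dagger$ incur zero gap. Plugging in $\gamma_1=1$, $\gamma_2=0$ — which satisfies $\gamma_1\ge\mu_{\dagger,m}>\gamma_2$ under the hypothesis $\mu_{\dagger,m}\neq 0$ — Lemma~\ref{lem:twl_learning_UCB} gives $N^1_{k,m}(T)=O(\log(KT)/(M\Delta_k^2))$ and Lemma~\ref{lem:twl_teaching_UCB} gives $N^2_{k,m}(T)=O\big(N^1_{k,m}(T)/\mu_{\dagger,m}+\log(KT)/\mu_{\dagger,m}^{2}\big)$. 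Multiplying by $\Delta_k$ turns $\Delta_kN^1_{k,m}(T)$ into $O(\log(KT)/(M\Delta_k))$ and $\Delta_kN^2_{k,m}(T)$ into $O\big(\log(KT)/(\mu_{\dagger,m}M\Delta_k)+\Delta_k\log(KT)/\mu_{\dagger,m}^{2}\big)$; since $\mu_{\dagger,m}\le 1$, the first of these absorbs $\Delta_kN^1_{k,m}(T)$, leaving exactly the stated regret bound.

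For the cost I would bill each pull by the adjustment magnitude dictated by the TWL rule: a pull of $k$ while $k\in\Upsilon$ and $|\Upsilon|>1$ costs $|\gamma_1-X_{k,m}(t)|$ with expectation $\delta_{k,m}(1)=1-\mu_{k,m}$; a pull of $k$ while $k\notin\Upsilon$ costs $|\gamma_2-X_{k,m}(t)|$ with expectation $\delta_{k,m}(0)=\mu_{k,m}$; and a pull of $k_\dagger$ while it is the unique active arm is free. Hence $C^{\textup{TWL}}_F(T)=\sum_{m}\big[\sum_{k\neq k_\dagger}\big((1-\mu_{k,m})\Eb[N^1_{k,m}(T)]+\mu_{k,m}\Eb[N^2_{k,m}(T)]\big)+(1-\mu_{k_\dagger,m})\,\Eb[\tilde N_{k_\dagger,m}(T)]\big]+O(M)$, where $\tilde N_{k_\dagger,m}(T)$ counts pulls of $k_\dagger$ made while $|\Upsilon|>1$. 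The last term is the one place that needs a separate argument: reusing the observation in the proof of Lemma~\ref{lem:twl_learning_UCB} that constant rewards force UCB1 to equalize the pulls of the active arms, $k_\dagger$ — being active throughout — receives at most $F(\psi_{\max})+O(K)=O(\log(KT)/(M\Delta_{\min}^2))$ such pulls, which under the convention $\Delta_{k_\dagger}=\Delta_{\min}$ is precisely the $k=k_\dagger$ summand of $\sum_{k\in[K]}(1-\mu_{k,m})\log(KT)/(M\Delta_k^2)$. Substituting the pull bounds of Lemmas~\ref{lem:twl_learning_UCB} and \ref{lem:twl_teaching_UCB} then yields the claimed cost expression.

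The assembly itself is routine; the substantive work sits in the three lemmas — the elimination schedule of TWL, UCB1's pull-equalization under constant rewards, and the warm-start analysis of Lemma~\ref{lem:tal_teaching_UCB_restate} carried over with the optimism condition $\gamma_1\ge\mu_{\dagger,m}$. Within the theorem's proof the only real care is the cost bookkeeping: correctly assigning each time step to active/inactive status and to the corresponding $\gamma$, and remembering that $k_\dagger$ itself contributes to the \emph{active} cost sum while it is still competing, which is why that sum ranges over all of $[K]$ rather than just the suboptimal arms.
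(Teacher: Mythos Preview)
Your proposal is correct and follows essentially the same approach as the paper: combine Lemmas~\ref{lem:twl_arm_elim}, \ref{lem:twl_learning_UCB}, and \ref{lem:twl_teaching_UCB}, decompose each suboptimal arm's pulls as $N^{1}_{k,m}(T)+N^{2}_{k,m}(T)$, and handle the failure event by the crude $MT\cdot O(1/T)=O(M)$ bound. Your treatment is in fact a bit more careful than the paper's in two spots---the explicit handling of $k_\dagger$'s active-phase cost via $\tilde N_{k_\dagger,m}(T)\le F(\psi_{\max})+O(K)$, and the observation that the $\Delta_k N^{1}_{k,m}(T)$ regret term is absorbed by the $N^{2}$ contribution since $\mu_{\dagger,m}\le 1$---but the route is the same.
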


\begin{proof}[Proof of Theorem~\ref{thm:twl_UCB_restate}]
    From Lemma~\ref{lem:twl_learning_UCB}, with probability at least $1- 1/T$, it holds that $
        N^{1}_{k,m}(T) = O\left(\frac{\log(KT)}{M\Delta_{k}^2}\right)$,
    thus we can specify $
        N^{2}_{k,m}(T) = O\left(\frac{\log(KT)}{\mu_{\dagger,m}M\Delta_{k}^2} + \frac{\log(KT)}{\mu_{\dagger,m}^2}\right)$ with Lemma~\ref{lem:twl_teaching_UCB}.
    The overall regret and cost can then be bound as
    \begin{align*}
        &R^{\text{TWL}}_{F}(T) \leq \sum_{m\in [M]}\sum_{k\neq k_\dagger}\left(N^{1}_{k,m}(T) + N^2_{k,m}(T)\right) \Delta_{k} + \frac{MT}{T}\\
        & = O\left(\sum\nolimits_{m\in [M]}\sum\nolimits_{k\neq k_\dagger} \frac{\log(KT)}{\mu_{\dagger,m}M\Delta_{k}} + \frac{\Delta_k\log(KT)}{\mu_{\dagger,m}^2}\right);\\
        &C^{\text{TWL}}_{F,k,1}(T) \leq \sum\nolimits_{m\in [M]} \sum\nolimits_{k\in [K]}N^{1}_{k,m}(T) \cdot \delta_{k,m}(1)\\
        & + \sum\nolimits_{m\in [M]} \sum\nolimits_{k\neq k_\dagger }N^{1}_{k,m}(T) \cdot \delta_{k,m}(0) + \frac{MT}{T}\\
        & = O\left(\sum\nolimits_{m\in[M]}\sum\nolimits_{k\in [K]}\frac{(1-\mu_{k,m})\log(KT)}{M\Delta^2_{k}}\right.\\
  &\left.+\sum\nolimits_{m\in[M]}\sum\nolimits_{k\neq k_{\dagger}}\left[\frac{\mu_{k,m}\log(KT)}{\mu_{\dagger,m}M\Delta^2_{k}}+\frac{\mu_{k,m}\log(KT)}{\mu_{\dagger,m}^2}\right]\right),
    \end{align*}
    which concludes the proof.
\end{proof}

\bibliographystyle{IEEEtran}
\bibliography{bandit}

\vspace{-15 mm}
\begin{IEEEbiography}[{\includegraphics[width=1in,height=1.25in,clip,keepaspectratio]{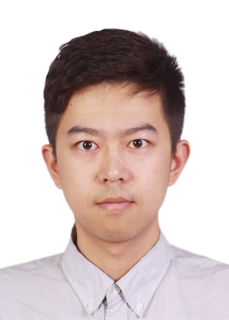}}]{Chengshuai Shi} is currently a Ph.D. student at the Charles L. Brown Department of Electrical and Computer Engineering, University of Virginia. He received his B.E. degree in Electrical Engineering from the School of the Gifted Young, University of Science and Technology of China, in 2019. His current research focuses on multi-armed bandits, federated learning, and reinforcement learning.
\end{IEEEbiography}

\vspace{15 mm}
\begin{IEEEbiography}[{\includegraphics[width=1in,height=1.25in,clip,keepaspectratio]{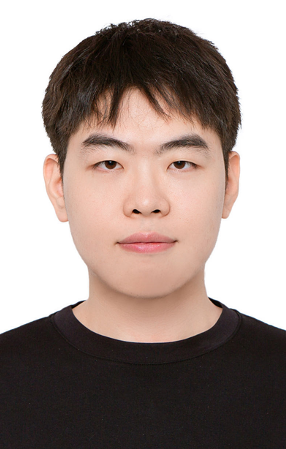}}]{Wei Xiong} is currently a Ph.D. student at the Department of Computer Science, University of Illinois Urbana-Champaign. Previously, he received a master's degree in mathematics in 2023 from The Hong Kong University of Science and Technology and a B.S. in mathematics from the University of Science and Technology of China in 2021. His current research interests focus on reinforcement learning and alignment for foundation generative models.
\end{IEEEbiography}

\begin{IEEEbiography}[{\includegraphics[width=1in,height=1.25in,clip,keepaspectratio]{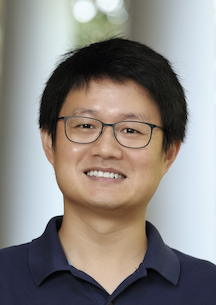}}]{Cong Shen}
(S'01-M'09-SM'15) is currently an Assistant Professor at the Charles L. Brown Department of Electrical and Computer Engineering, University of Virginia. He received his B.E. and M.E. degrees from the Department of Electronic Engineering, Tsinghua University, China, and his Ph.D. degree in Electrical Engineering from University of California, Los Angeles. He has extensive industry experience, having worked for Qualcomm, SpiderCloud Wireless, Silvus Technologies, and Xsense.ai, in various full time and consulting roles. He currently serves as an associate editor for the IEEE Transactions on Communications, IEEE Transactions on Green Communications and Networking, and IEEE Transactions on Machine Learning in Communications and Networking. He received the NSF CAREER award in 2022, and the Best Paper Award in 2021 IEEE International Conference on Communications (ICC). His general research interests are in the area of communications, wireless networks, and machine learning.
\end{IEEEbiography}

\vspace{-22 mm}
\begin{IEEEbiography}[{\includegraphics[width=1in,height=1.25in,clip,keepaspectratio]{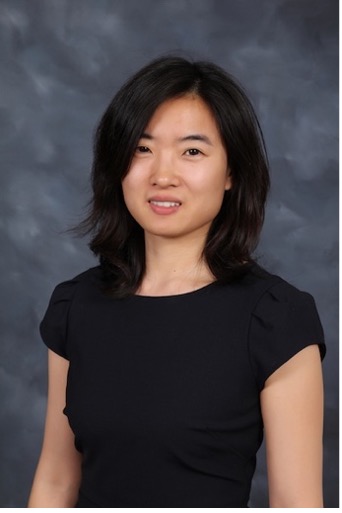}}]{Jing Yang}
(S’08-M’10-SM’23) is an Associate Professor of Electrical Engineering at the Pennsylvania State University. She received her B.S. degree from the University of Science and Technology of China (USTC), and the M.S. and Ph.D. degrees from the University of Maryland, College Park, all in Elec- trical Engineering. She received the National Science Foundation CAREER award in 2015 and the WICE Early Achievement Award in 2020, and was selected as one of the 2020 N2Women: Stars in Computer Networking and Communications. She served as a Symposium/Track/Workshop Co-chair for Asilomar 2023, ICC 2021, INFOCOM 2021-AoI Workshop, WCSP 2019, CTW 2015, PIMRC 2014, a TPC Member of several conferences, and an Editor for IEEE Transactions on Green Communications and Networking from 2017 to 2020. She is now an Editor for IEEE Transactions on Wireless Communications and IEEE Transactions on Cognitive Communications and Networking. Her research interests lie in multi-armed bandits and reinforcement learning, federated learning, and wireless communications and networking.
\end{IEEEbiography}

\end{document}